\newcommand*{\ICLR}{}
\newcommand*{\CAMREADY}{}
	\newtheorem{claim}[theorem]{Claim}
	\newtheorem{fact}[theorem]{Fact}
	\newtheorem{definition}{Definition}
	\newtheorem{lemma}{Lemma}
	\newtheorem{theorem}{Theorem}
	\newtheorem{claim}{Claim}
	\newtheorem{procedure}{Procedure}
\def\be{\begin{equation}}
\def\ee{\end{equation}}
\def\beas{\begin{eqnarray*}}
\def\eeas{\end{eqnarray*}}
\def\bea{\begin{eqnarray}}
\def\eea{\end{eqnarray}}
\newcommand{\x}{{\mathbf x}}
\newcommand{\y}{{\mathbf y}}
\newcommand{\vv}{{\mathbf v}}
\newcommand{\D}{{\mathcal D}}
\newcommand{\HH}{{\mathcal H}}
\newcommand{\OO}{{\mathcal O}}
\newcommand{\R}{{\mathbb R}}
\newcommand{\N}{{\mathbb N}}
\newcommand{\norm}[1]{\left\|#1 \right\|}
\newcommand{\inprod}[2]  {\left\langle{#1},{#2}\right\rangle}
\definecolor{xcolor-gray}{gray}{0.95}
\newcommand{\ep}{\epsilon}
\newcommand{\ra}{\rightarrow}
\newcommand{\MA}{\mathcal{A}}
\newcommand{\MB}{\mathcal{B}}
\newcommand{\MC}{\mathcal{C}}
\newcommand{\MD}{\mathcal{D}}
\newcommand{\MT}{\mathcal{T}}
\newcommand{\MH}{\mathcal{H}}
\newcommand{\BN}{\mathbb{N}}
\newcommand{\BR}{\mathbb{R}}
\newcommand{\BC}{\mathbb{C}}
\newcommand{\pr}{\mathbb{P}}
\newcommand{\ex}{\mathbb{E}}
\DeclareMathOperator*{\Sing}{Sing}
\DeclareMathOperator*{\Var}{Var}
\DeclareMathOperator{\Tr}{Tr}
\definecolor{darkspringgreen}{rgb}{0.09, 0.45, 0.27}
	\let\note\endnote
	\let\note\footnote
	\newcommand*{\ABBR}{}
	\newcommand*{\ABBR}{}
	\newcommand*{\ABBR}{}
	\newcommand*{\ABBR}{}
	\newcommand{\eg}{{\it e.g.}}
	\newcommand{\ie}{{\it i.e.}}
	\newcommand{\cf}{{\it cf.}}
\begin{document}

\ifdefined\NIPS
	\title{Paper Title}
	\author{
	Author 1 \\
	Author 1 Institution \\
	\texttt{author1@email} \\
	\And 
	Author 2 \\
	Author 2 Institution \\
	\texttt{author2@email} \\
	\And 
	Author 3 \\
	Author 3 Institution \\
	\texttt{author3@email} \\
	}
	\maketitle
\fi
\ifdefined\CVPR
	\title{Paper Title}
	\author{
	Author 1 \\
	Author 1 Institution \\	
	\texttt{author1@email} \\
	\and
	Author 2 \\
	Author 2 Institution \\
	\texttt{author2@email} \\	
	\and
	Author 3 \\
	Author 3 Institution \\
	\texttt{author3@email} \\
	}
	\maketitle
\fi
\ifdefined\AISTATS
	\twocolumn[
	\aistatstitle{Paper Title}
	\ifdefined\CAMREADY
		\aistatsauthor{Author 1 \And Author 2 \And Author 3}
		\aistatsaddress{Author 1 Institution \And Author 2 Institution \And Author 3 Institution}
	\else
		\aistatsauthor{Anonymous Author 1 \And Anonymous Author 2 \And Anonymous Author 3}
		\aistatsaddress{Unknown Institution 1 \And Unknown Institution 2 \And Unknown Institution 3}
	\fi
	]	
\fi
\ifdefined\ICML
	\twocolumn[
	\icmltitlerunning{Paper Title}
	\icmltitle{Paper Title} 
	\icmlsetsymbol{equal}{*}
	\begin{icmlauthorlist}
	\icmlauthor{Author 1}{institutionA} 
	\icmlauthor{Author 2}{institutionB}
	\icmlauthor{Author 3}{institutionA,institutionB}
	\end{icmlauthorlist}
	\icmlaffiliation{institutionA}{Department A, University A, City A, Region A, Country A}
	\icmlaffiliation{institutionB}{Department B, University B, City B, Region B, Country B}
	\icmlcorrespondingauthor{Corresponding Author 1}{cauthor1@email}
	\icmlcorrespondingauthor{Corresponding Author 2}{cauthor2@email}
	\icmlkeywords{Deep Learning, Learning Theory, Non-Convex Optimization}
	\vskip 0.3in
	]
	\printAffiliationsAndNotice{} 
\fi
\ifdefined\ICLR
	\title{A Convergence Analysis of Gradient Descent for Deep Linear Neural Networks}
	\author{Sanjeev Arora \\
	Princeton University and Institute for Advanced Study \\
	\texttt{arora@cs.princeton.edu}
	\And
	Nadav Cohen \qquad\qquad \\
	Institute for Advanced Study \qquad\qquad \\
	\texttt{cohennadav@ias.edu} \qquad\qquad
	\And
	Noah Golowich \\ 
	Harvard University \\
	\texttt{ngolowich@college.harvard.edu}
	\And
	\,~\quad\qquad\qquad Wei Hu \\ 
	\,~\quad\qquad\qquad Princeton University \\
	\,~\quad\qquad\qquad \texttt{huwei@cs.princeton.edu}
	}
	\maketitle
\fi
\ifdefined\COLT
	\title{Paper Title}
	\coltauthor{
	\Name{Author 1} \Email{author1@email} \\
	\addr Author 1 Institution
	\And
	\Name{Author 2} \Email{author2@email} \\
	\addr Author 2 Institution
	\And
	\Name{Author 3} \Email{author3@email} \\
	\addr Author 3 Institution}
	\maketitle
\fi

\begin{abstract}

We analyze speed of convergence to global optimum for gradient descent training a deep linear neural network (parameterized as $x\mapsto W_N W_{N-1} \cdots W_1x$) by minimizing the $\ell_2$ loss over whitened data.
Convergence at a linear rate is guaranteed when the following hold: 
\emph{(i)}~dimensions of hidden layers are at least the minimum of the input and output dimensions; 
\emph{(ii)}~weight matrices at initialization are approximately balanced; 
and \emph{(iii)}~the initial loss is smaller than the loss of any rank-deficient solution. 
The assumptions on initialization (conditions~\emph{(ii)} and~\emph{(iii)}) are necessary, in the sense that violating any one of them may lead to convergence failure.
Moreover, in the important case of output dimension~$1$, \ie~scalar regression, they are met, and thus convergence to global optimum holds, with constant probability under a random initialization scheme.
Our results significantly extend previous analyses, \eg, of deep linear residual networks \citep{bartlett2018gradient}.

\end{abstract}

\ifdefined\COLT
	\medskip
	\begin{keywords}
	\emph{TBD}, \emph{TBD}, \emph{TBD}
	\end{keywords}
\fi

\section{Introduction} \label{sec:intro}

Deep learning builds upon the mysterious ability of gradient-based optimization methods to solve related non-convex problems.
Immense efforts are underway to mathematically analyze this phenomenon.
The prominent \emph{landscape} approach focuses on special properties of critical points (\ie~points where the gradient of the objective function vanishes) that will imply convergence to global optimum.
Several papers (\eg~\citet{ge2015escaping,lee2016gradient}) have shown that (given certain smoothness properties) it suffices for critical points to meet the following two conditions:
\emph{(i)}~\emph{no poor local minima}~---~every local minimum is close in its objective value to a global minimum;
and~\emph{(ii)}~\emph{strict saddle property}~---~every critical point that is not a local minimum has at least one negative eigenvalue to its Hessian.
While condition~\emph{(i)} does not always hold (\cf~\citet{safran2018spurious}), it has been established for various simple settings (\eg~\citet{soudry2016no,kawaguchi2016deep}).  
Condition~\emph{(ii)} on the other hand seems less plausible, and is in fact provably false for models with three or more layers (\cf~\citet{kawaguchi2016deep}), \ie~for \emph{deep} networks.
It has only been established for problems involving \emph{shallow} (two layer) models, \eg~matrix factorization (\citet{ge2016matrix,du2018algorithmic}).
The landscape approach as currently construed thus suffers from inherent limitations in proving convergence to global minimum for deep networks.

A potential path to circumvent this obstacle lies in realizing that landscape properties matter only in the vicinity of trajectories that can be taken by the optimizer, which may be a negligible portion of the overall parameter space. 
Several papers (\eg~\citet{saxe2014exact,arora2018optimization}) have taken this \emph{trajectory-based} approach, primarily in the context of \emph{linear neural networks}~---~fully-connected neural networks with linear activation.
Linear networks are trivial from a representational perspective, but not so in terms of optimization~---~they lead to non-convex training problems with multiple minima and saddle points.
Through a mix of theory and experiments, \citet{arora2018optimization}~argued that such non-convexities may in fact be beneficial for gradient descent, in the sense that sometimes, adding (redundant) linear layers to a classic linear prediction model can accelerate the optimization.
This phenomenon challenges the holistic landscape view, by which convex problems are always preferable to non-convex ones.

Even in the linear network setting, a rigorous proof of efficient convergence to global minimum has proved elusive. 
One recent progress is the analysis of~\citet{bartlett2018gradient} for \emph{linear residual networks}~---~a particular subclass of linear neural networks in which the input, output and all hidden dimensions are equal, and all layers are initialized to be the identity matrix (\cf~\citet{hardt2016identity}).
Through a trajectory-based analysis of gradient descent minimizing $\ell_2$~loss over a whitened dataset (see Section~\ref{sec:prelim}), \citet{bartlett2018gradient}~show that convergence to global minimum at a \emph{linear rate}~---~loss is less than~$\epsilon>0$ after $\OO(\log\frac{1}{\epsilon})$~iterations~---~takes place if one of the following holds:
\emph{(i)}~the objective value at initialization is sufficiently close to a global minimum;
or \emph{(ii)}~a global minimum is attained when the product of all layers is positive definite.

The current paper carries out a trajectory-based analysis of gradient descent for general deep linear neural networks, covering the residual setting of~\citet{bartlett2018gradient}, as well as many more settings that better match practical deep learning.
Our analysis draws upon the trajectory characterization of~\citet{arora2018optimization} for gradient flow (infinitesimally small learning rate), together with significant new ideas necessitated due to discrete updates. 
Ultimately, we show that when minimizing $\ell_2$~loss of a deep linear network over a whitened dataset, gradient descent converges to the global minimum, at a linear rate, provided that the following conditions hold:
\emph{(i)}~the dimensions of hidden layers are greater than or equal to the minimum between those of the input and output;
\emph{(ii)}~layers are initialized to be \emph{approximately balanced} (see Definition~\ref{def:balance})~---~this is met under commonplace near-zero, as well as residual (identity) initializations;
and \emph{(iii)}~the initial loss is smaller than any loss obtainable with rank deficiencies~---~this condition will hold with probability close to~$0.5$ if the output dimension is~$1$ (scalar regression) and standard (random) near-zero initialization is employed.
Our result applies to networks with arbitrary depth and input/output dimensions, as well as any configuration of hidden layer widths that does not force rank deficiency (\ie~that meets condition~\emph{(i)}).
The assumptions on initialization (conditions~\emph{(ii)} and~\emph{(iii)}) are necessary, in the sense that violating any one of them may lead to convergence failure.
Moreover, in the case of scalar regression, they are met with constant probability under a random initialization scheme.
We are not aware of any similarly general analysis for efficient convergence of gradient descent to global minimum in deep learning.

\medskip

The remainder of the paper is organized as follows.
In Section~\ref{sec:prelim} we present the problem of gradient descent training a deep linear neural network by minimizing the $\ell_2$~loss over a whitened dataset.
Section~\ref{sec:converge} formally states our assumptions, and presents our convergence analysis.
Key ideas brought forth by our analysis are demonstrated empirically in Section~\ref{sec:exper}.
Section~\ref{sec:related} gives a review of relevant literature, including a detailed comparison of our results against those of~\citet{bartlett2018gradient}.
Finally, Section~\ref{sec:conc} concludes.

\section{Gradient Descent for Deep Linear Neural Networks} \label{sec:prelim}

We denote by~$\norm{\vv}$ the Euclidean norm of a vector~$\vv$, and by~$\norm{A}_F$ the Frobenius norm of a matrix~$A$.

We are given a training set $\{(\x^{(i)},\y^{(i)})\}_{i=1}^{m}\subset\R^{d_x}\times\R^{d_y}$, and would like to learn a hypothesis (predictor) from a parametric family~$\HH:=\{h_\theta:\R^{d_x}\to\R^{d_y}\,|\,\theta\in\Theta\}$ by minimizing the $\ell_2$~loss:\note{
	Much of the analysis in this paper can be extended to loss types other than~$\ell_2$.
	In particular, the notion of deficiency margin (Definition~\ref{def:margin}) can be generalized to account for any convex loss, and, so long as the loss is differentiable, a convergence result analogous to Theorem~\ref{theorem:converge} will hold in the idealized setting of perfect initial balancedness and infinitesimally small learning rate (see proof of Lemma~\ref{lemma:descent}).
	We leave to future work treatment of approximate balancedness and discrete updates in this general setting.
}
$$\min_{\theta\in\Theta}~L(\theta):=\frac{1}{2m}\sum\nolimits_{i=1}^m\|h_\theta(\x^{(i)})-\y^{(i)}\|^2
\text{\,.}$$
When the parametric family in question is the class of linear predictors, \ie~$\HH=\{\x\mapsto{W}\x\,|\,W\in\R^{d_y\times{d}_x}\}$, the training loss may be written as $L(W)=\frac{1}{2m}\|WX-Y\|_F^2$, where~$X\in\R^{d_x\times{m}}$ and~$Y\in\R^{d_y\times{m}}$ are matrices whose columns hold instances and labels respectively. 
Suppose now that the dataset is \emph{whitened}, \ie~has been transformed such that the empirical (uncentered) covariance matrix for instances~---~$\Lambda_{xx}:=\frac{1}{m}XX^\top \in\R^{d_x\times{d}_x}$~---~is equal to identity.
Standard calculations (see Appendix~\ref{app:whitened}) show that in this case:
\be
L(W)=\frac{1}{2}\norm{W-\Lambda_{yx}}_F^2+c
\label{eq:loss_whitened}
\text{\,,} 
\ee
where~$\Lambda_{yx}:=\tfrac{1}{m}YX^\top\in\R^{d_y\times{d}_x}$ is the empirical (uncentered) cross-covariance matrix between instances and labels, and $c$ is a constant (that does not depend on $W$).
Denoting~$\Phi:=\Lambda_{yx}$ for brevity, we have that for linear models, minimizing $\ell_2$~loss over whitened data is equivalent to minimizing the squared Frobenius distance from a \emph{target matrix}~$\Phi$:
\be
\min\nolimits_{W\in\R^{d_y\times{d}_x}}~L^1(W):=\frac{1}{2}\norm{W-\Phi}_F^2
\text{\,.}
\label{eq:lin_obj}
\ee

Our interest in this work lies on \emph{linear neural networks}~---~fully-connected neural networks with linear activation.
A depth-$N$ ($N\in\N$) linear neural network with hidden widths $d_1,\ldots,d_{N-1}\in\N$ corresponds to the parametric family of hypotheses $\HH:=\{\x\mapsto{W}_{N}W_{N-1}\cdots{W}_1\x\,|\,W_j\in\R^{d_j\times{d}_{j-1}},\,j=1,\ldots,N\}$, where $d_0:=d_x,\,d_N:=d_y$.
Similarly to the case of a (directly parameterized) linear predictor (Equation~\eqref{eq:lin_obj}), with a linear neural network, minimizing $\ell_2$~loss over whitened data can be cast as squared Frobenius approximation of a target matrix $\Phi$:
\be
\min\nolimits_{W_j\in\R^{d_j\times{d}_{j-1}},\,j=1,\ldots,N}~L^N(W_1,\ldots,W_N):=\frac{1}{2}\norm{W_{N}W_{N-1}\cdots{W}_1-\Phi}_F^2
\label{eq:lnn_obj}
\text{\,.}
\ee
Note that the notation~$L^N(\cdot)$ is consistent with that of Equation~\eqref{eq:lin_obj}, as a network with depth~$N=1$ precisely reduces to a (directly parameterized) linear model.

We focus on studying the process of training a deep linear neural network by \emph{gradient descent}, \ie~of tackling the optimization problem in Equation~\eqref{eq:lnn_obj} by iteratively applying the following updates:
\be
W_j(t+1)\leftarrow{W}_j(t)-\eta\frac{\partial{L}^N}{\partial{W}_j}\big(W_1(t),\ldots,W_N(t)\big)\quad,\,j=1,\ldots,{N}\quad,\,t=0,1,2,\ldots
\label{eq:gd}
\text{~~,}
\ee
where $\eta>0$ is a configurable learning rate.
In the case of depth~$N=1$, the training problem in Equation~\eqref{eq:lnn_obj} is smooth and strongly convex, thus it is known (\cf~\citet{boyd2004convex}) that with proper choice of~$\eta$, gradient descent converges to global minimum at a linear rate.
In contrast, for any depth greater than~$1$, Equation~\eqref{eq:lnn_obj} comprises a fundamentally non-convex program, and the convergence properties of gradient descent are highly non-trivial.
Apart from the case~$N=2$ (shallow network), one cannot hope to prove convergence via landscape arguments, as the strict saddle property is provably violated (see Section~\ref{sec:intro}).
We will see in Section~\ref{sec:converge} that a direct analysis of the trajectories taken by gradient descent can succeed in this arena, providing a guarantee for linear rate convergence to global minimum.

\medskip

We close this section by introducing additional notation that will be used in our analysis.
For an arbitrary matrix~$A$, we denote by~$\sigma_{max}(A)$ and~$\sigma_{min}(A)$ its largest and smallest (respectively) singular values.\note{
	If~$A\in\R^{d\times{d'}}$, $\sigma_{min}(A)$~stands for the $\min\{d,d'\}$-th largest singular value.
	Recall that singular values are always non-negative.
}
For~$d\in\N$, we use~$I_d$ to signify the identity matrix in~$\R^{d\times{d}}$.
Given weights $W_1,\ldots,W_N$ of a linear neural network, we let~$W_{1:N}$ be the direct parameterization of the \emph{end-to-end} linear mapping realized by the network, \ie~$W_{1:N}:=W_{N}W_{N-1}\cdots{W}_1$.
Note that $L^N(W_1,\ldots,W_N)=L^1(W_{1:N})$, meaning the loss associated with a depth-$N$ network is equal to the loss of the corresponding end-to-end linear model.
In the context of gradient descent, we will oftentimes use~$\ell(t)$ as shorthand for the loss at iteration~$t$:
\be
\ell(t):=L^N(W_1(t),\ldots,W_N(t))=L^1(W_{1:N}(t))
\text{\,.}
\label{eq:gd_loss}
\ee

\section{Convergence Analysis} \label{sec:converge}

In this section we establish convergence of gradient descent for deep linear neural networks (Equations~\eqref{eq:gd} and~\eqref{eq:lnn_obj}) by directly analyzing the trajectories taken by the algorithm.
We begin in Subsection~\ref{sec:converge:balance_margin} with a presentation of two concepts central to our analysis: \emph{approximate balancedness} and \emph{deficiency margin}.
These facilitate our main convergence theorem, delivered in Subsection~\ref{sec:converge:theorem}.
We conclude in Subsection~\ref{sec:converge:balance_init} by deriving a convergence guarantee that holds with constant probability over a random initialization.

\subsection{Approximate Balancedness and Deficiency Margin} \label{sec:converge:balance_margin}

In our context, the notion of approximate balancedness is formally defined as follows:
\vspace{1mm}
\begin{definition}
\label{def:balance}
For~$\delta\geq0$, we say that the matrices $W_j\in\R^{d_j\times{d}_{j-1}}$, $j{=}1,\ldots,N$, are $\delta$-\emph{balanced} if:
$$\norm{W_{j+1}^{\top}W_{j+1}-W_{j}W_j^\top}_F\leq\delta\quad,\,\forall{j}\in\{1,\ldots,N-1\}
\text{\,.}$$
\end{definition}
Note that in the case of $0$-balancedness, \ie~$W_{j+1}^{\top}W_{j+1}=W_{j}W_j^\top$, $\forall{j}\in\{1,\ldots,N-1\}$, all matrices~$W_j$ share the same set of non-zero singular values.
Moreover, as shown in the proof of Theorem~1 in~\citet{arora2018optimization}, this set is obtained by taking the $N$-th root of each non-zero singular value in the end-to-end matrix~$W_{1:N}$.
We will establish approximate versions of these facts for $\delta$-balancedness with $\delta>0$, and admit their usage by showing that if the weights of a linear neural network are initialized to be approximately balanced, they will remain that way throughout the iterations of gradient descent.
The condition of approximate balancedness at initialization is trivially met in the special case of linear residual networks ($d_0=\cdots=d_N=d$ and $W_1(0)=\cdots=W_N(0)=I_d$).
Moreover, as Claim~\ref{claim:balance_init} in Appendix~\ref{app:balance_margin_init} shows, for a given~$\delta>0$, the customary initialization via random Gaussian distribution with mean zero leads to approximate balancedness with high probability if the standard deviation is sufficiently small.

\medskip

The second concept we introduce~---~deficiency margin~---~refers to how far a ball around the target is from containing rank-deficient (\ie~low rank) matrices.
\vspace{1mm}
\begin{definition}
\label{def:margin}
Given a target matrix~$\Phi\in\R^{d_N\times{d}_0}$ and a constant $c>0$, we say that a matrix~$W\in\R^{d_N\times{d}_0}$ has \emph{deficiency margin~$c$ with respect to~$\Phi$} if:\note{
	Note that deficiency margin~$c>0$ with respect to~$\Phi$ implies~$\sigma_{min}(\Phi)>0$, \ie~$\Phi$ has full rank.
	Our analysis can be extended to account for rank-deficient~$\Phi$ by replacing~$\sigma_{min}(\Phi)$ in Equation~\eqref{eq:margin} with the smallest positive singular value of~$\Phi$, and by requiring that the end-to-end matrix~$W_{1:N}$ be initialized such that its left and right null spaces coincide with those of~$\Phi$.
	Relaxation of this requirement is a direction for future work.
}
\be
\norm{W-\Phi}_F\leq\sigma_{min}(\Phi)-c
\label{eq:margin}
\text{\,.}
\ee
\end{definition}
The term ``deficiency margin'' alludes to the fact that if Equation~\eqref{eq:margin} holds, every matrix~$W'$ whose distance from~$\Phi$ is no greater than that of~$W$, has singular values $c$-bounded away from zero:
\vspace{1mm}
\begin{claim}
\label{claim:margin_interp}
Suppose~$W$ has deficiency margin~$c$ with respect to~$\Phi$.
Then, any matrix~$W'$ (of same size as~$\Phi$ and~$W$) for which $\norm{W'-\Phi}_F\leq\norm{W-\Phi}_F$ satisfies~$\sigma_{min}(W')\geq{c}$.
\end{claim}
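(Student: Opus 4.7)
The plan is to invoke a standard perturbation bound for singular values, namely Weyl's inequality, which states that for any two matrices $A, B$ of the same size, $|\sigma_{\min}(A) - \sigma_{\min}(B)| \leq \|A - B\|_{\mathrm{spec}}$, where $\|\cdot\|_{\mathrm{spec}}$ denotes the spectral (largest singular value) norm. The inequality $\|A - B\|_{\mathrm{spec}} \leq \|A - B\|_F$ is immediate from the definition of the Frobenius norm as the $\ell_2$ norm of the singular value vector. Together these give a Frobenius-norm stability bound for the smallest singular value, which is exactly what the claim needs.

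Concretely, first I would apply the chain of inequalities to $W'$ and $\Phi$:
\[
\sigma_{\min}(\Phi) - \sigma_{\min}(W') \;\leq\; \bigl|\sigma_{\min}(W') - \sigma_{\min}(\Phi)\bigr| \;\leq\; \|W' - \Phi\|_{\mathrm{spec}} \;\leq\; \|W' - \Phi\|_F.
\]
Next, I would plug in the hypotheses of the claim: the assumed bound $\|W' - \Phi\|_F \leq \|W - \Phi\|_F$ combined with the deficiency margin condition $\|W - \Phi\|_F \leq \sigma_{\min}(\Phi) - c$ yields $\|W' - \Phi\|_F \leq \sigma_{\min}(\Phi) - c$. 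Substituting back into the displayed inequality gives $\sigma_{\min}(\Phi) - \sigma_{\min}(W') \leq \sigma_{\min}(\Phi) - c$, which rearranges to $\sigma_{\min}(W') \geq c$, as desired.

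There is no real obstacle here; the proof is essentially a one-line application of Weyl's inequality. The only mild subtlety worth flagging in the write-up is the (footnote-level) observation that $\sigma_{\min}$ refers to the $\min\{d_N, d_0\}$-th singular value of a possibly rectangular matrix, so Weyl's inequality must be cited in its rectangular form; this is standard but worth stating cleanly to justify the first displayed line.
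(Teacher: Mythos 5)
Your proof is correct and is essentially identical to the paper's: the inequality $\sigma_{\min}(A+B)\geq\sigma_{\min}(A)-\sigma_{\max}(B)$ that the paper cites is exactly the rectangular Weyl bound you invoke, followed by the same chain $\sigma_{\max}(W'-\Phi)\leq\norm{W'-\Phi}_F\leq\norm{W-\Phi}_F\leq\sigma_{\min}(\Phi)-c$. No differences worth noting.
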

\vspace{-3mm}
\begin{proof}
Our proof relies on the inequality $\sigma_{min}(A+B)\geq\sigma_{min}(A)-\sigma_{max}(B)$~---~see Appendix~\ref{app:proofs:margin_interp}.
\end{proof}
\vspace{-2mm}
We will show that if the weights $W_1,\ldots,W_N$ are initialized such that (they are approximately balanced and) the end-to-end matrix~$W_{1:N}$ has deficiency margin~$c>0$ with respect to the target~$\Phi$, convergence of gradient descent to global minimum is guaranteed.\note{
	In fact, a deficiency margin implies that all critical points in the respective sublevel set (set of points with smaller loss value) are global minima.
	This however is far from sufficient for proving convergence, as sublevel sets are unbounded, and the loss landscape over them is non-convex and non-smooth.
	Indeed, we show in Appendix~\ref{app:fail} that deficiency margin alone is not enough to ensure convergence~---~without approximate balancedness, the lack of smoothness can cause divergence. 
}
Moreover, the convergence will outpace a particular rate that gets faster when $c$ grows larger.
This suggests that from a theoretical perspective, it is advantageous to initialize a linear neural network such that the end-to-end matrix has a large deficiency margin with respect to the target.
Claim~\ref{claim:margin_init} in Appendix~\ref{app:balance_margin_init} provides information on how likely deficiency margins are in the case of a single output model (scalar regression) subject to customary zero-centered Gaussian initialization.
It shows in particular that if the standard deviation of the initialization is sufficiently small, the probability of a deficiency margin being met is close to~$0.5$; on the other hand, for this deficiency margin to have considerable magnitude, a non-negligible standard deviation is required.

\medskip

Taking into account the need for both approximate balancedness and deficiency margin at initialization, we observe a delicate trade-off under the common setting of Gaussian perturbations around zero: 
if the standard deviation is small, it is likely that weights be highly balanced and a deficiency margin be met;
however overly small standard deviation will render high magnitude for the deficiency margin improbable, and therefore fast convergence is less likely to happen;
on the opposite end, large standard deviation jeopardizes both balancedness and deficiency margin, putting the entire convergence at risk.
This trade-off is reminiscent of empirical phenomena in deep learning, by which small initialization can bring forth efficient convergence, while if exceedingly small, rate of convergence may plummet (``vanishing gradient problem''), and if made large, divergence becomes inevitable (``exploding gradient problem'').
The common resolution of residual connections~\citep{he2016deep} is analogous in our context to linear residual networks, which ensure perfect balancedness, and allow large deficiency margin if the target is not too far from identity.

\subsection{Main Theorem} \label{sec:converge:theorem}

Using approximate balancedness (Definition~\ref{def:balance}) and deficiency margin (Definition~\ref{def:margin}), we present our main theorem~---~a guarantee for linear convergence to global minimum:
\vspace{1mm}
\begin{theorem}
\label{theorem:converge}
Assume that gradient descent is initialized such that the end-to-end matrix~$W_{1:N}(0)$ has deficiency margin~$c>0$ with respect to the target~$\Phi$, and the weights $W_1(0),\ldots,W_N(0)$ are $\delta$-balanced with $\delta=c^2\big/\big(256\cdot{N}^3\cdot\norm{\Phi}_{F}^{2(N-1)/N}\big)$.
Suppose also that the learning rate~$\eta$ meets:
\be
\eta\leq\frac{c^{(4N-2)/N}}{6144\cdot{N}^3\cdot\norm{\Phi}_{F}^{(6N-4)/N}}
\label{eq:descent_eta}
\text{~.}
\ee
Then, for any~$\epsilon>0$ and:
\be
T\geq\frac{1}{\eta\cdot{c}^{2(N-1)/N}}\cdot\log\left(\frac{\ell(0)}{\epsilon}\right)
\label{eq:converge_t}
\text{\,,}
\ee
the loss at iteration~$T$ of gradient descent~---~$\ell(T)$~---~is no greater than~$\epsilon$.
\end{theorem}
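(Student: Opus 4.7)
The plan is to carry out a trajectory-based analysis that simultaneously maintains three invariants along the gradient descent iterates: (a) the loss decreases at a geometric rate, $\ell(t{+}1)\leq\bigl(1-\eta\,c^{2(N-1)/N}\bigr)\ell(t)$; (b) the matrices $W_1(t),\ldots,W_N(t)$ remain $\delta$-balanced for the same $\delta$ specified in the hypothesis (up to a fixed constant); and (c) the end-to-end matrix retains its deficiency margin, $\|W_{1:N}(t)-\Phi\|_F\leq\|W_{1:N}(0)-\Phi\|_F$. Invariant~(c) is cheap: once (a) gives monotone loss, Claim~\ref{claim:margin_interp} immediately yields $\sigma_{min}(W_{1:N}(t))\geq c$ for all $t$. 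So the real work is to prove (a) and (b) jointly by induction, each step leveraging the other.

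First I would establish a structural consequence of $\delta$-balancedness: if $\|W_{j+1}^\top W_{j+1}-W_jW_j^\top\|_F\leq\delta$ for all $j$, then for every $j$ the singular values of $W_j$ are close to the $N$-th roots of the singular values of $W_{1:N}$; in particular, $\sigma_{max}(W_j)\leq\|W_{1:N}\|_F^{1/N}+O(\delta^{1/2})$ and $\sigma_{min}(W_j)^{N-1}\geq\sigma_{min}(W_{1:N})^{(N-1)/N}-O(\delta^{1/2}\cdot\mathrm{poly}(\|\Phi\|_F))$. This is the discrete analogue of Theorem~1 of \citet{arora2018optimization} and needs the hypothesis $d_j\geq\min\{d_x,d_y\}$ so that no singular value is forced to zero by dimension. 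Combined with~(c), it delivers uniform operator-norm bounds on each $W_j(t)$ in terms of $\|\Phi\|_F^{1/N}$, and bounds each intermediate product $W_{N:j+1}(t)$ and $W_{j-1:1}(t)$ similarly.

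Next I would prove a \emph{Polyak--{\L}ojasiewicz-type} descent estimate. Computing
\[
\frac{\partial L^N}{\partial W_j}(W_1,\ldots,W_N)=\bigl(W_{N:j+1}\bigr)^{\!\top}(W_{1:N}-\Phi)\bigl(W_{j-1:1}\bigr)^{\!\top},
\]
and using the balancedness-implied singular-value bounds from the previous step, the sum $\sum_j\bigl\|\partial L^N/\partial W_j\bigr\|_F^2$ is bounded below by a quantity proportional to $\sigma_{min}(W_{1:N})^{2(N-1)/N}\cdot\|W_{1:N}-\Phi\|_F^{2}$, which with invariant~(c) becomes $\gtrsim c^{2(N-1)/N}\,\ell(t)$. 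A standard Taylor expansion of $L^N$ then gives
\[
\ell(t{+}1)\leq\ell(t)-\eta\!\sum_j\bigl\|\tfrac{\partial L^N}{\partial W_j}\bigr\|_F^{2}+\tfrac{\eta^2}{2}\cdot\mathrm{Lip}\cdot\!\sum_j\bigl\|\tfrac{\partial L^N}{\partial W_j}\bigr\|_F^{2},
\]
where $\mathrm{Lip}=O(N^2\|\Phi\|_F^{(2N-2)/N})$ is a local smoothness bound for $L^N$ along the trajectory. Plugging in the assumed $\eta$ makes the quadratic term at most half the linear one, yielding~(a) with the stated rate and hence the iteration count~\eqref{eq:converge_t}.

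The main obstacle is propagating the balancedness invariant~(b) through a \emph{discrete} update. For gradient flow the quantity $W_{j+1}^\top W_{j+1}-W_jW_j^\top$ is exactly conserved (this is the identity underlying the analysis of \citealt{arora2018optimization}), but a discrete step introduces an $O(\eta^2)$ perturbation whose norm must be bounded in terms of the current loss $\ell(t)$ and the operator norms of the $W_j(t)$'s (already controlled via the structural step). The delicate point is to show that these second-order drifts, summed geometrically over all iterations against the exponentially decaying loss from invariant~(a), never allow the balancedness defect to exceed $\delta$. This is exactly what forces $\delta\asymp c^2/(N^3\|\Phi\|_F^{2(N-1)/N})$ and the tighter bound on $\eta$ in~\eqref{eq:descent_eta}: the loss-dependent per-step defect must telescope to something of order $\delta$, and balancing constants in the induction dictates the stated dependencies on $N$, $c$, and $\|\Phi\|_F$.
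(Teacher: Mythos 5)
Your proposal follows essentially the same route as the paper's proof: an induction that jointly maintains approximate balancedness, monotone loss (hence, via Claim~\ref{claim:margin_interp}, $\sigma_{min}(W_{1:N}(t))\geq c$), and per-layer spectral-norm control, with the descent step driven by a preconditioner whose smallest eigenvalue is of order $\sigma_{min}(W_{1:N})^{2(N-1)/N}$, and with the $O(\eta^2)$ per-step balancedness drift summed against the telescoping loss decrease to keep the defect below $\delta$. The only deviations are cosmetic and harmless: the paper controls the second-order error in the end-to-end space via $1$-smoothness of $L^1$ rather than a local smoothness constant of $L^N$, and your per-layer lower bound on $\sigma_{min}(W_j)$ can in general only hold for the boundary layer on the narrower side (middle layers may be rank-deficient when hidden widths exceed $\min\{d_0,d_N\}$)~---~which is all the PL inequality actually needs, and is exactly how the paper argues via $\max\{\lambda_D^N,\lambda_E^N\}$.
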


\subsubsection{On the Assumptions Made}

The assumptions made in Theorem~\ref{theorem:converge}~---~approximate balancedness and deficiency margin at initialization~---~are both necessary, in the sense that violating any one of them may lead to convergence failure.
We demonstrate this in Appendix~\ref{app:fail}.
In the special case of linear residual networks (uniform dimensions and identity initialization), a sufficient condition for the assumptions to be met is that the target matrix have (Frobenius) distance less than~$0.5$ from identity.
This strengthens one of the central results in~\citet{bartlett2018gradient} (see Section~\ref{sec:related}).
For a setting of random near-zero initialization, we present in Subsection~\ref{sec:converge:balance_init} a scheme that, when the output dimension is~$1$ (scalar regression), ensures assumptions are satisfied (and therefore gradient descent efficiently converges to global minimum) with constant probability.
It is an open problem to fully analyze gradient descent under the common initialization scheme of zero-centered Gaussian perturbations applied to each layer independently.
We treat this scenario in Appendix~\ref{app:balance_margin_init}, providing quantitative results concerning the likelihood of each assumption (approximate balancedness or deficiency margin) being met individually.
However the question of how likely it is that both assumptions be met simultaneously, and how that depends on the standard deviation of the Gaussian, is left for future work.

An additional point to make is that Theorem~\ref{theorem:converge} poses a structural limitation on the linear neural network.
Namely, it requires the dimension of each hidden layer~($d_i,~i=1,\ldots,N-1$) to be greater than or equal to the minimum between those of the input~($d_0$) and output~($d_N$).
Indeed, in order for the initial end-to-end matrix~$W_{1:N}(0)$ to have deficiency margin~$c>0$, it must (by Claim~\ref{claim:margin_interp}) have full rank, and this is only possible if there is no intermediate dimension~$d_i$ smaller than~$\min\{d_0,d_N\}$.
We make no other assumptions on network architecture (depth, input/output/hidden dimensions).

\subsubsection{Proof}

The cornerstone upon which Theorem~\ref{theorem:converge} rests is the following lemma, showing non-trivial descent whenever $\sigma_{min}(W_{1:N})$ is bounded away from zero:
\vspace{1mm}
\begin{lemma}
\label{lemma:descent}
Under the conditions of Theorem~\ref{theorem:converge}, we have that for every $t=0,1,2,\ldots$~:\note{
	Note that the term~$\frac{dL^1}{dW}(W_{1:N}(t))$ below stands for the gradient of~$L^1(\cdot)$~---~a convex loss over (directly parameterized) linear models (Equation~\eqref{eq:lin_obj})~---~at the point~$W_{1:N}(t)$~---~the end-to-end matrix of the network at iteration~$t$.
	It is therefore (see Equation~\eqref{eq:gd_loss}) non-zero anywhere but at a global minimum.
}
\be
\ell(t+1)\leq\ell(t)-\frac{\eta}{2}\cdot\sigma_{min}\big(W_{1:N}(t)\big)^{\frac{2(N-1)}{N}}\cdot\norm{\frac{dL^1}{dW}\big(W_{1:N}(t)\big)}_F^2
\label{eq:descent}
\text{~.}
\ee
\end{lemma}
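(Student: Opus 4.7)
The plan is a one-step descent analysis. Given that at iteration $t$ the weights $W_1(t),\ldots,W_N(t)$ are approximately balanced and the end-to-end matrix has positive $\sigma_{\min}$ (both of which will need to be maintained inductively), I want to show that one gradient step on the $W_j$'s implements, up to a controlled error, a preconditioned gradient step on the convex loss $L^1$ evaluated at $W_{1:N}(t)$.

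First I would expand the end-to-end update $W_{1:N}(t+1)=\prod_{j=N}^{1}\bigl(W_j(t)-\eta\,\partial L^N/\partial W_j\bigr)$ in powers of $\eta$. The first-order piece is the chain-rule sum
$$\Delta^{(1)}:=-\eta\sum_{j=1}^N W_{j+1:N}(t)W_{j+1:N}(t)^\top\,\bigl(W_{1:N}(t)-\Phi\bigr)\,W_{1:j-1}(t)^\top W_{1:j-1}(t),$$
and the remaining terms form a remainder $\Delta^{(\ge 2)}$ whose Frobenius norm I would bound by a polynomial in $\eta$, $N$, and $\|\Phi\|_F$. The control on individual factor norms $\|W_{j:k}(t)\|$ that this requires comes from Claim~\ref{claim:margin_interp} (which pins $W_{1:N}(t)$ inside the ball of radius $\sigma_{\min}(\Phi)-c$ around $\Phi$, so long as the loss is non-increasing up to step $t$) combined with approximate balancedness, which distributes the singular values of $W_{1:N}$ roughly evenly across the $W_j$'s.

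Next I would exploit $\delta$-balancedness to replace each Gram factor $W_{j+1:N}W_{j+1:N}^\top$ and $W_{1:j-1}^\top W_{1:j-1}$ by $(W_{1:N}W_{1:N}^\top)^{(N-j)/N}$ and $(W_{1:N}^\top W_{1:N})^{(j-1)/N}$ respectively, incurring an error of order $\delta\cdot\mathrm{poly}(N,\|\Phi\|_F)$ (this is the discrete, perturbed analogue of the identity used by \citet{arora2018optimization} for gradient flow, and would be stated as a separate auxiliary claim). After the substitution, $\Delta^{(1)}$ becomes $-\eta\,P_{W_{1:N}(t)}\!\bigl(\tfrac{dL^1}{dW}(W_{1:N}(t))\bigr)+E_{\mathrm{bal}}$, where the positive-semidefinite operator $P_W(X):=\sum_{j=1}^N (WW^\top)^{(N-j)/N}X(W^\top W)^{(j-1)/N}$ satisfies the spectral lower bound $\langle X,P_W(X)\rangle\ge N\,\sigma_{\min}(W)^{2(N-1)/N}\,\|X\|_F^2$, obtained by simultaneously diagonalizing $WW^\top$ and $W^\top W$ via the SVD of $W$.

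Finally, since $L^1$ is quadratic with Hessian equal to the identity, the loss change obeys the exact identity
$$\ell(t+1)-\ell(t)=\Bigl\langle\tfrac{dL^1}{dW}(W_{1:N}(t)),\,\Delta^{(1)}+\Delta^{(\ge 2)}\Bigr\rangle+\tfrac12\bigl\|\Delta^{(1)}+\Delta^{(\ge 2)}\bigr\|_F^2.$$
Plugging in and using the $P_W$-bound yields a leading descent of $-\eta N\,\sigma_{\min}(W_{1:N}(t))^{2(N-1)/N}\,\bigl\|\tfrac{dL^1}{dW}\bigr\|_F^2$; the calibrated choices of $\delta$ and $\eta$ in Theorem~\ref{theorem:converge} are designed precisely so that the three error contributions---the balancedness remainder $E_{\mathrm{bal}}$, the higher-order terms $\Delta^{(\ge 2)}$, and the quadratic $\tfrac12\|\Delta\|_F^2$---jointly consume at most $N-\tfrac12$ multiples of the leading term, leaving exactly \eqref{eq:descent}. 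The main obstacle is this simultaneous bookkeeping: every error is a polynomial in $N$, $\|\Phi\|_F$, $\eta$ and $\delta$, and closing the induction cleanly requires carrying along three invariants at once (approximate balancedness, an upper bound on $\|W_{j:k}(t)\|$, and a lower bound on $\sigma_{\min}(W_{1:N}(t))$), each of which must be re-established at step $t+1$ using the very smallness conditions on $\eta$ and $\delta$ that the descent itself needs.
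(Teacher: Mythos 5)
Your overall strategy is the same as the paper's: a single-step descent lemma obtained by expanding $W_{1:N}(t+1)$ in powers of $\eta$, substituting for the Gram factors using approximate balancedness, lower-bounding the resulting preconditioner spectrally, and closing an induction that simultaneously maintains balancedness, layer-norm bounds, and $\sigma_{min}(W_{1:N}(t))\geq c$ (the paper's Lemma~\ref{lem:remainsmooth} carries exactly these invariants). However, there are two concrete problems with your plan.

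First, your spectral lower bound $\langle X,P_W(X)\rangle\geq N\,\sigma_{min}(W)^{2(N-1)/N}\|X\|_F^2$ is false in general. When $d_0\neq d_N$, one of $WW^\top$ or $W^\top W$ is rank-deficient, so in the corresponding singular directions all but one of the $N$ summands in $P_W(X)$ vanish, and the correct lower bound on the minimum eigenvalue is $\sigma_{min}(W)^{2(N-1)/N}$ with no factor of $N$ (this is exactly what the paper states for gradient flow, and in the discrete case it only obtains $\sigma_{min}^{2(N-1)/N}/(4/3)$ after the balancedness perturbation). Consequently your error budget of ``$N-\tfrac12$ multiples of the leading term'' does not exist: the budget is a fraction of a single multiple, and one must verify—as the paper does in Lemma~\ref{lem:boundgdupdate}—that the three error sources each fit inside roughly $\sigma_{min}^{2(N-1)/N}/24$ under the stated $\delta$ and $\eta$. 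Relatedly, your substitution of the Gram factors by \emph{fractional} powers $(W_{1:N}W_{1:N}^\top)^{(N-j)/N}$ requires perturbation bounds for non-integer matrix powers; the paper sidesteps this by substituting integer powers $(W_NW_N^\top)^{N-j}$ and $(W_1^\top W_1)^{j-1}$ (Lemma~\ref{lem:boundcommute}) and only converting to $\sigma_{min}^{2(N-1)/N}$ at the very end via an eigenvalue comparison.

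Second, and more importantly, you do not say how approximate balancedness is re-established at step $t+1$, and a per-step argument cannot work: each gradient step degrades $\|W_{j+1}^\top W_{j+1}-W_jW_j^\top\|_F$ by an additive $O(\eta^2\|\tfrac{dL^1}{dW}\|_F\|\tfrac{dL^1}{dW}\|_\sigma)$ term, which accumulated naively over $T=\Theta(\eta^{-1}\log(1/\epsilon))$ iterations would exceed any fixed tolerance. The paper's resolution (conditions $\MA'(t)$ and the argument around Equation~\eqref{eq:hybridep}) is to telescope: summing the descent inequality itself gives $\eta\sum_i\|\tfrac{dL^1}{dW}(W(i))\|_F^2\leq 2\ell(0)/c^{2(N-1)/N}$, so the \emph{cumulative} drift in balancedness is $O(\eta\cdot\|\Phi\|_F^2/c^{2(N-1)/N})\leq\delta$ for the prescribed $\eta$. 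This is the step that breaks the circularity you allude to at the end of your proposal, and without it the induction does not close.
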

\begin{proof}[Proof of Lemma~\ref{lemma:descent} (in idealized setting; for complete proof see Appendix~\ref{app:proofs:descent})]
\hspace{-0.5mm}We prove the lemma here for the idealized setting of perfect initial balancedness~($\delta=0$):
$$W_{j+1}^{\top}(0)W_{j+1}(0)=W_{j}(0)W_j^\top(0)\quad,~\forall{j}\in\{1,\ldots,N-1\}
\text{~,}$$
and infinitesimally small learning rate~($\eta\to0^+$)~---~\emph{gradient flow}:
$$\dot{W}_j(\tau)=-\frac{\partial{L}^N}{\partial{W}_j}\big(W_1(\tau),\ldots,W_N(\tau)\big)\quad,~j=1,\ldots,N\quad,~\tau\in[0,\infty)
\text{~,}$$
where~$\tau$ is a continuous time index, and dot symbol (in~$\dot{W}_j(\tau)$) signifies derivative with respect to time.
The complete proof, for the realistic case of approximate balancedness and discrete updates~($\delta,\eta>0$), is similar but much more involved, and appears in Appendix~\ref{app:proofs:descent}.

Recall that~$\ell(t)$~---~the objective value at iteration~$t$ of gradient descent~---~is equal to~$L^1(W_{1:N}(t))$ (see Equation~\eqref{eq:gd_loss}).
Accordingly, for the idealized setting in consideration, we would like to show:
\be
\frac{d}{d\tau}L^1\left(W_{1:N}(\tau)\right)\leq-\frac{1}{2}\sigma_{min}\big(W_{1:N}(\tau)\big)^{\frac{2(N-1)}{N}}\cdot\norm{\frac{dL^1}{dW}\big(W_{1:N}(\tau)\big)}_F^2
\label{eq:descent_ideal}
\text{\,.}
\ee
We will see that a stronger version of Equation~\eqref{eq:descent_ideal} holds, namely, one without the $1/2$ factor (which only appears due to discretization).

By (Theorem~$1$ and Claim~$1$ in)~\citet{arora2018optimization}, the weights $W_1(\tau),\ldots,W_N(\tau)$ remain balanced throughout the entire optimization, and that implies the end-to-end matrix~$W_{1:N}(\tau)$ moves according to the following differential equation:
\be
vec\left(\dot{W}_{1:N}(\tau)\right)=-{P}_{W_{1:N}(\tau)}\cdot{vec}\left(\frac{dL^1}{dW}\left(W_{1:N}(\tau)\right)\right)
\label{eq:e2e_gf}
\text{\,,}
\ee
where~$vec(A)$, for an arbitrary matrix~$A$, stands for vectorization in column-first order, and $P_{W_{1:N}(\tau)}$ is a positive semidefinite matrix whose eigenvalues are all greater than or equal to $\sigma_{min}(W_{1:N}(\tau))^{2(N-1)/N}$.
Taking the derivative of~$L^1(W_{1:N}(\tau))$ with respect to time, we obtain the sought-after Equation~\eqref{eq:descent_ideal} (with no $1/2$ factor):
\beas
\frac{d}{d\tau}L^1\left(W_{1:N}(\tau)\right)&=&\inprod{vec\left(\frac{dL^1}{dW}\big(W_{1:N}(\tau)\big)\right)}{vec\left(\dot{W}_{1:N}(\tau)\right)} \\
&=&\inprod{vec\left(\frac{dL^1}{dW}\big(W_{1:N}(\tau)\big)\right)}{-{P}_{W_{1:N}(\tau)}\cdot{vec}\left(\frac{dL^1}{dW}\left(W_{1:N}(\tau)\right)\right)} \\
&\leq&-\sigma_{min}\big(W_{1:N}(\tau)\big)^{\frac{2(N-1)}{N}}\cdot\norm{vec\left(\frac{dL^1}{dW}\big(W_{1:N}(\tau)\big)\right)}^2 \\
&=&-\sigma_{min}\big(W_{1:N}(\tau)\big)^{\frac{2(N-1)}{N}}\cdot\norm{\frac{dL^1}{dW}\big(W_{1:N}(\tau)\big)}_F^2
\text{\,.}
\eeas
The first transition here (equality) is an application of the chain rule;
the second (equality) plugs in Equation~\eqref{eq:e2e_gf};
the third (inequality) results from the fact that the eigenvalues of the symmetric matrix $P_{W_{1:N}(\tau)}$ are no smaller than $\sigma_{min}(W_{1:N}(\tau))^{2(N-1)/N}$ (recall that~$\norm{\cdot}$ stands for Euclidean norm);
and the last (equality) is trivial~---~$\norm{A}_F=\norm{vec(A)}$ for any matrix~$A$.
\end{proof}

With Lemma~\ref{lemma:descent} established, the proof of Theorem~\ref{theorem:converge} readily follows:
\begin{proof}[Proof of Theorem~\ref{theorem:converge}]
By the definition of~$L^1(\cdot)$ (Equation~\eqref{eq:lin_obj}), for any~$W\in\R^{d_N\times{d}_0}$:
$$\frac{dL^1}{dW}(W)=W-\Phi\quad\implies\quad\norm{\frac{dL^1}{dW}(W)}_F^2=2\cdot{L}^1(W)
\text{~.}$$
Plugging this into Equation~\eqref{eq:descent} while recalling that $\ell(t)=L^1(W_{1:N}(t))$ (Equation~\eqref{eq:gd_loss}), we have (by Lemma~\ref{lemma:descent}) that for every $t=0,1,2,\ldots$~:
$$L^1\big(W_{1:N}(t+1)\big)\leq{L}^1\big(W_{1:N}(t)\big)\cdot\Big(1-\eta\cdot\sigma_{min}\big(W_{1:N}(t)\big)^{\frac{2(N-1)}{N}}\Big)
\text{~.}$$
Since the coefficients $1-\eta\cdot\sigma_{min}(W_{1:N}(t))^{\frac{2(N-1)}{N}}$ are necessarily non-negative (otherwise would contradict non-negativity of $L^1(\cdot)$), we may unroll the inequalities, obtaining:
\be
L^1\big(W_{1:N}(t+1)\big)\leq{L}^1\big(W_{1:N}(0)\big)\cdot\prod\nolimits_{t'=0}^{t}\Big(1-\eta\cdot\sigma_{min}\big(W_{1:N}(t')\big)^{\frac{2(N-1)}{N}}\Big)
\text{\,.}
\label{eq:descent_concat}
\ee
Now, this in particular means that for every~$t'=0,1,2,\ldots$~:
$$L^1\big(W_{1:N}(t')\big)\leq{L}^1\big(W_{1:N}(0)\big)\quad\implies\quad\norm{W_{1:N}(t')-\Phi}_F\leq\norm{W_{1:N}(0)-\Phi}_F \text{\,.}$$
Deficiency margin~$c$ of~$W_{1:N}(0)$ along with Claim~\ref{claim:margin_interp} thus imply $\sigma_{min}\big(W_{1:N}(t')\big)\geq{c}$, which when inserted back into Equation~\eqref{eq:descent_concat} yields, for every~$t=1,2,3,\ldots$~:
\be
L^1\big(W_{1:N}(t)\big)\leq{L}^1\big(W_{1:N}(0)\big)\cdot\Big(1-\eta\cdot{c}^{\frac{2(N-1)}{N}}\Big)^t
\label{eq:descent_geo}
\text{~.}
\ee
$\eta\cdot{c}^{\frac{2(N-1)}{N}}$ is obviously non-negative, and it is also no greater than~$1$ (otherwise would contradict non-negativity of $L^1(\cdot)$).
We may therefore incorporate the inequality $1-\eta\cdot{c}^{2(N-1)/N}\leq\exp\big(-\eta\cdot{c}^{2(N-1)/N}\big)$ into Equation~\eqref{eq:descent_geo}:
$$L^1\big(W_{1:N}(t)\big)\leq{L}^1\big(W_{1:N}(0)\big)\cdot\exp\big(-\eta\cdot{c}^{2(N-1)/N}\cdot{t}\big)
\text{\,,}$$
from which it follows that $L^1(W_{1:N}(t))\leq\epsilon$ if:
$$t\geq\frac{1}{\eta\cdot{c}^{2(N-1)/N}}\cdot\log\left(\frac{L^1(W_{1:N}(0))}{\epsilon}\right)
\text{\,.}$$
Recalling again that $\ell(t)=L^1(W_{1:N}(t))$ (Equation~\eqref{eq:gd_loss}), we conclude the proof.
\end{proof}

\subsection{Balanced Initialization} \label{sec:converge:balance_init}

We define the following procedure, \emph{balanced initialization}, which assigns weights randomly while ensuring perfect balancedness:
\vspace{1mm}
\begin{procedure}[Balanced initialization]
\label{proc:balance_init}
Given $d_0, d_1, \ldots, d_N \in \N$ such that $\min\{d_1,\ldots,d_{N-1}\}\geq\min\{d_0,d_N\}$ and a distribution~$\D$ over $d_N\times{d}_0$~matrices, a \emph{balanced initialization} of~$W_j\in\R^{d_j\times{d}_{j-1}}$, $j{=}1,\ldots,N$, assigns these weights as follows:
\vspace{-1.5mm}
\begin{enumerate}[label=(\roman*)]
\item Sample $A\in\R^{d_N\times{d}_0}$ according to~$\D$.
\vspace{-1mm}
\item Take singular value decomposition~$A=U\Sigma{V}^\top$, where $U \in \R^{d_N \times \min\{d_0, d_N\}}$, $V \in \R^{d_0 \times \min\{d_0, d_N\}}$ have orthonormal columns, and $\Sigma \in \R^{\min\{d_0, d_N\} \times \min\{d_0, d_N\}}$ is diagonal and holds the singular values of $A$.
\vspace{-1mm}
\item Set $W_N\simeq{U}\Sigma^{1/N},W_{N-1}\simeq\Sigma^{1/N},\ldots,W_2\simeq\Sigma^{1/N},W_1\simeq\Sigma^{1/N}V^\top$, where the symbol~``$\simeq$'' stands for equality up to zero-valued padding.\note{
	These assignments can be accomplished since $\min\{d_1,\ldots,d_{N-1}\}\geq\min\{d_0,d_N\}$.
}
\note{
	By design $W_{1:N}=A$ and $W_{j+1}^{\top}W_{j+1}=W_{j}W_j^\top$, $\forall{j}\in\{1,\ldots,N{-}1\}$~---~these properties are actually all we need in Theorem~\ref{theorem:converge_balance_init}, and step~\emph{(iii)} in Procedure~\ref{proc:balance_init} can be replaced by any assignment that meets them.
	\label{note:balance_init_props}
}
\end{enumerate}
\end{procedure}

The concept of balanced initialization, together with Theorem~\ref{theorem:converge}, leads to a guarantee for linear convergence (applicable to output dimension~$1$~---~scalar regression) that holds with constant probability over the randomness in initialization:
\vspace{1mm}
\begin{theorem}
\label{theorem:converge_balance_init}
For any constant $0<p<1/2$, there are constants $d_0',a>0$ \note{
	As shown in the proof of the theorem (Appendix \ref{app:proofs:converge_balance_init}), $d_0',a>0$ can take on any pair of values for which:
	\emph{(i)}~$d_0'\geq20$;
	and~\emph{(ii)}~$\big(1 - 2\exp(-d_0'/16)\big)\big(3-4F(2/\sqrt{a/2})\big) \geq 2p$, where $F(\cdot)$ stands for the cumulative distribution function of the standard normal distribution.
	For example, if~$p=0.25$, it suffices to take any~$d_0'\geq100,a\geq100$.
	We note that condition~\emph{(i)} here ($d_0'\geq20$) serves solely for simplification of expressions in the theorem.
}
such that the following holds.
Assume $d_N=1,d_0\geq d_0'$, and that the weights $W_1(0),\ldots,W_N(0)$ are subject to balanced initialization (Procedure~\ref{proc:balance_init}) such that the entries in $W_{1:N}(0)$ are independent zero-centered Gaussian perturbations with standard deviation $s\leq \|\Phi\|_2/\sqrt{ad_0^2}$.
Suppose also that we run gradient descent with learning rate $  \eta \leq (s^2d_0)^{4-2/N}\big/\big(10^5 N^3 \| \Phi\|_2^{10-6/N}\big)$. 
Then, with probability at least~$p$ over the random initialization, we have that for every~$\epsilon>0$ and:
$$
T \geq \frac4\eta \left( \ln(4) \left(\frac{\|\Phi\|_2}{s^2d_0}\right)^{2-2/N}   + \| \Phi\|_2^{2/N-2}\ln(\| \Phi\|_2^2/(8\ep)) \right)
\text{\,,}
$$
the loss at iteration~$T$ of gradient descent~---~$\ell(T)$~---~is no greater than~$\epsilon$.
\end{theorem}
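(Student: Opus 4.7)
The approach is to reduce to Theorem~\ref{theorem:converge} via a two-phase analysis, exploiting the fact that the smallest singular value of the end-to-end map strengthens as the loss decreases. The balancedness assumption of Theorem~\ref{theorem:converge} is for free: Procedure~\ref{proc:balance_init} guarantees $W_{j+1}^\top(0) W_{j+1}(0) = W_j(0) W_j^\top(0)$ for all $j$, i.e.\ $\delta = 0$, and moreover $W_{1:N}(0) = A$ where $A$ has i.i.d.\ $\NN(0, s^2)$ entries by hypothesis.

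Next I would establish a random deficiency margin. Since $d_N = 1$, $\Phi \in \R^{1 \times d_0}$ is a row vector with $\sigma_{\min}(\Phi) = \|\Phi\|_2$, and $A$ is a Gaussian row vector. Decomposing $A = sG\hat{e} + A_\perp$ with $\hat e := \Phi/\|\Phi\|_2$, $G := \langle A, \hat e\rangle/s \sim \NN(0,1)$, and $A_\perp$ the projection of $A$ orthogonal to $\Phi$ (so $\|A_\perp\|_F^2/s^2 \sim \chi^2_{d_0-1}$, independent of $G$), one obtains the identity $\|A-\Phi\|_F^2 = (\|\Phi\|_2 - sG)^2 + \|A_\perp\|_F^2$, which is the key to the analysis: on the event $\{G \geq G_0\} \cap \{\|A_\perp\|_F^2 \leq sG_0\|\Phi\|_2\}$ with $G_0 := 2/\sqrt{a/2}$, a direct algebraic check (using $sd_0\sqrt{a} \leq \|\Phi\|_2$) yields $\|A-\Phi\|_F \leq \|\Phi\|_2 - c$ for $c := s\sqrt{2/a} = sG_0/2$. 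Independence of $G$ and $A_\perp$, combined with standard Gaussian and $\chi^2_{d_0-1}$ tail bounds, then yields that the joint event occurs with probability at least the product appearing in the footnote, which is $\geq p$ under the stated hypotheses on $a$ and $d_0'$.

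It remains to verify the learning-rate bound of Theorem~\ref{theorem:converge} for both this $c$ and (as needed in Phase~2 below) for $c' := \|\Phi\|_2/2$; both reduce to elementary manipulations using $sd_0 \leq \|\Phi\|_2/\sqrt{a}$ and $a \geq 100$. I would then bound the iteration count by applying the per-step descent from Lemma~\ref{lemma:descent} with $\|\nabla L^1\|_F^2 = 2L^1$, giving $\ell(t+1) \leq \ell(t)\bigl(1 - \eta\,\sigma_{\min}(W_{1:N}(t))^{2(N-1)/N}\bigr)$, and splitting into two phases. Phase~1 uses $\sigma_{\min}(W_{1:N}(t)) \geq c$ throughout (by Claim~\ref{claim:margin_interp}, since the loss is monotone non-increasing), so driving $\ell$ from $\ell(0) \leq \|\Phi\|_2^2/2$ down to $\|\Phi\|_2^2/8$ takes at most $\ln(4)/(\eta c^{2-2/N})$ steps; invoking $c = s\sqrt{2/a}$ and $sd_0 \leq \|\Phi\|_2/\sqrt{a}$ bounds this by $(4\ln(4)/\eta)(\|\Phi\|_2/(s^2 d_0))^{2-2/N}$. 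Phase~2 starts once $\ell \leq \|\Phi\|_2^2/8$, at which point $\|W_{1:N} - \Phi\|_F \leq \|\Phi\|_2/2$ and hence $\sigma_{\min}(W_{1:N}) \geq \|\Phi\|_2/2$ by Claim~\ref{claim:margin_interp}; the contraction factor improves to $1 - \eta(\|\Phi\|_2/2)^{2-2/N}$, and $\ell \leq \epsilon$ is reached within at most $(4/\eta)\|\Phi\|_2^{2/N-2}\ln(\|\Phi\|_2^2/(8\epsilon))$ further iterations. Summing the two phases matches the stated $T$.

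The main obstacle is the probabilistic step: obtaining the precise form $(1 - 2\exp(-d_0'/16))(3 - 4F(G_0)) \geq 2p$ quoted in the footnote requires a careful joint-concentration argument on the above decomposition, rather than the loose union bound one is tempted to use. The remaining steps are algebraic --- routine but constant-sensitive, with the factor $a$ and the specific choice $G_0 = 2/\sqrt{a/2}$ needing to thread precisely through both the learning-rate verification and the Phase~1 iteration-count bound.
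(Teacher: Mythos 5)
Your proposal is correct in its overall architecture and matches the paper's deterministic skeleton (two-phase descent: run with the small random margin $c$ until $\ell\leq\|\Phi\|_2^2/8$, then exploit $\sigma_{min}(W_{1:N}(t))\geq\|\Phi\|_2/2$ via Claim~\ref{claim:margin_interp}), but your probabilistic step takes a genuinely different route. The paper conditions on $\|W_{1:N}(0)\|_2$ lying in a band (via the $\chi^2$ bound of Lemma~\ref{lem:laurent}), uses rotation-invariance to reduce to the uniform measure on a sphere, and then lower-bounds the area of a hyperspherical cap via the Chudnov integral formula (Lemmas~\ref{lem:volcap}, \ref{lemma:haar_def}, \ref{lem:integrate_rad}). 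Your decomposition $A=sG\hat e+A_\perp$ with independent $G\sim\NN(0,1)$ and $\|A_\perp\|^2/s^2\sim\chi^2_{d_0-1}$ is more elementary and is available here precisely because balanced initialization makes $W_{1:N}(0)$ itself Gaussian; the paper's rotation-invariance machinery is more general and is reused for Claim~\ref{claim:margin_init}, where the end-to-end matrix is a product of Gaussians and only rotation-invariance survives. Your worry about reproducing the exact expression $(1-2\exp(-d_0'/16))(3-4F(G_0))\geq 2p$ is unfounded: since $1-F(x)\geq(3-4F(x))/2$ for $x\geq0$, your product bound is at least as large as the paper's, and the footnote only requires \emph{some} valid $(d_0',a)$, so a larger probability lower bound suffices. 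Your Phase~2 also avoids the paper's re-invocation of Theorem~\ref{theorem:converge} at $t_0$ (which forces it to re-verify $\delta$-balancedness via condition $\MA(t_0)$ of Lemma~\ref{lem:remainsmooth}); applying the per-step inequality of Lemma~\ref{lemma:descent} throughout under the original initialization's hypotheses is cleaner.

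One concrete slip in the ``direct algebraic check'': with $c=sG_0/2$, the requirement $\|A-\Phi\|_F^2=(\|\Phi\|_2-sG)^2+\|A_\perp\|^2\leq(\|\Phi\|_2-c)^2$ expands (at $G=G_0$) to $\|A_\perp\|^2\leq sG_0\|\Phi\|_2-\tfrac34 s^2G_0^2$, which is strictly smaller than your stated threshold $sG_0\|\Phi\|_2$; and the event $\{G\geq G_0\}$ alone does not give $(\|\Phi\|_2-sG)^2\leq(\|\Phi\|_2-sG_0)^2$ when $sG>\|\Phi\|_2$. Both are repaired by shrinking the $A_\perp$ threshold by a constant factor and intersecting with $\{G\leq\|\Phi\|_2/s\}$ (a probability-$1-o(1)$ event since $s\leq\|\Phi\|_2/(d_0\sqrt a)$), at the cost of only constant factors that the slack between $10^5$ and $6144$, and between your $c$ and the paper's $s^2d_0/(2\|\Phi\|_2)$, readily absorbs.
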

\vspace{-2mm}
\begin{proof}
See Appendix \ref{app:proofs:converge_balance_init}.
\end{proof}

\section{Experiments} \label{sec:exper}

Balanced initialization (Procedure~\ref{proc:balance_init}) possesses theoretical advantages compared with the customary layer-wise independent scheme~---~it allowed us to derive a convergence guarantee that holds with constant probability over the randomness of initialization (Theorem~\ref{theorem:converge_balance_init}).
In this section we present empirical evidence suggesting that initializing with balancedness may be beneficial in practice as well.
For conciseness, some of the details behind our implementation are deferred to Appendix~\ref{app:impl}.

We began by experimenting in the setting covered by our analysis~---~linear neural networks trained via gradient descent minimization of $\ell_2$~loss over whitened data.
The dataset chosen for the experiment was UCI Machine Learning Repository's ``Gas Sensor Array Drift at Different Concentrations'' \citep{vergara2012chemical,rodriguez2014calibration}.
Specifically, we used the dataset's ``Ethanol'' problem~---~a scalar regression task with~$2565$ examples, each comprising~$128$ features (one of the largest numeric regression tasks in the repository).
Starting with the customary initialization of layer-wise independent random Gaussian perturbations centered at zero, we trained a three layer network ($N=3$) with hidden widths ($d_1,d_2$) set to~$32$, and measured the time (number of iterations) it takes to converge (reach training loss within $\epsilon=10^{-5}$ from optimum) under different choices of standard deviation for the initialization.
To account for the possibility of different standard deviations requiring different learning rates (values for~$\eta$), we applied, for each standard deviation independently, a grid search over learning rates, and recorded the one that led to fastest convergence.
The result of this test is presented in Figure~\ref{fig:exper}(a).
As can be seen, there is a range of standard deviations that leads to fast convergence (a few hundred iterations or less), below and above which optimization decelerates by orders of magnitude.
This accords with our discussion at the end of Subsection~\ref{sec:converge:balance_init}, by which overly small initialization ensures approximate balancedness (small~$\delta$; see Definition~\ref{def:balance}) but diminishes deficiency margin (small~$c$; see Definition~\ref{def:margin})~---~``vanishing gradient problem''~---~whereas large initialization hinders both approximate balancedness and deficiency margin~---~``exploding gradient problem''.
In that regard, as a sanity test for the validity of our analysis, in a case where approximate balancedness is met at initialization (small standard deviation), we measured its persistence throughout optimization. 
As Figure~\ref{fig:exper}(c) shows, our theoretical findings manifest themselves here~---~trajectories of gradient descent indeed preserve weight balancedness.

In addition to a three layer network, we also evaluated a deeper, eight layer model (with hidden widths identical to the former~---~$N=8$, $d_1=\cdots=d_7=32$).
In particular, using the same experimental protocol as above, we measured convergence time under different choices of standard deviation for the initialization.
Figure~\ref{fig:exper}(a) displays the result of this test alongside that of the three layer model.
As the figure shows, transitioning from three layers to eight aggravated the instability with respect to initialization~---~there is now a narrow band of standard deviations that lead to convergence in reasonable time, and outside of this band convergence is extremely slow, to the point where it does not take place within the duration we allowed ($10^6$~iterations).
From the perspective of our analysis, a possible explanation for the aggravation is as follows: under layer-wise independent initialization, the magnitude of the end-to-end matrix~$W_{1:N}$ depends on the standard deviation in a manner that is exponential in depth, thus for large depths the range of standard deviations that lead to moderately sized~$W_{1:N}$ (as required for a deficiency margin) is limited, and within this range, there may not be many standard deviations small enough to ensure approximate balancedness.
The procedure of balanced initialization (Procedure~\ref{proc:balance_init}) circumvents these difficulties~---~it assigns~$W_{1:N}$ directly (no exponential dependence on depth), and distributes its content between the individual weights $W_1,\ldots,W_N$ in a perfectly balanced fashion.
Rerunning the experiment of Figure~\ref{fig:exper}(a) with this initialization replacing the customary layer-wise scheme (using same experimental protocol), we obtained the results shown in Figure~\ref{fig:exper}(b)~---~both the original three layer network, and the deeper eight layer model, converged quickly under virtually all standard deviations tried.

As a final experiment, we evaluated the effect of balanced initialization in a setting that involves non-linear activation, softmax-cross-entropy loss and stochastic optimization (factors not accounted for by our analysis).
For this purpose, we turned to the MNIST tutorial built into TensorFlow \citep{abadi2016tensorflow},\note{
	\url{https://github.com/tensorflow/tensorflow/tree/master/tensorflow/examples/tutorials/mnist}
}
which comprises a fully-connected neural network with two hidden layers (width~$128$ followed by~$32$) and ReLU activation \citep{nair2010rectified}, trained through stochastic gradient descent (over softmax-cross-entropy loss) with batch size~$100$, initialized via customary layer-wise independent Gaussian perturbations centered at zero.
While keeping the learning rate at its default value $0.01$, we varied the standard deviation of initialization, and for each value measured the training loss after $10$~epochs.\note{
	As opposed to the dataset used in our experiments with linear networks, measuring the training loss with MNIST is non-trivial computationally (involves passing through $60K$~examples).
	Therefore, rather than continuously polling training loss until it reaches a certain threshold, in this experiment we chose to evaluate speed of convergence by measuring the training loss once after a predetermined number of iterations.
}
We then replaced the original (layer-wise independent) initialization with a balanced initialization based on Gaussian perturbations centered at zero (latter was implemented per Procedure~\ref{proc:balance_init}, disregarding non-linear activation), and repeated the process.
The results of this experiment are shown in Figure~\ref{fig:exper}(d).
Although our theoretical analysis does not cover non-linear activation, softmax-cross-entropy loss or stochasticity in optimization, its conclusion of balanced initialization leading to improved (faster and more stable) convergence carried over to such setting.

\begin{figure}
\vspace{-5mm}
\subfloat[]{\includegraphics[scale=0.21]{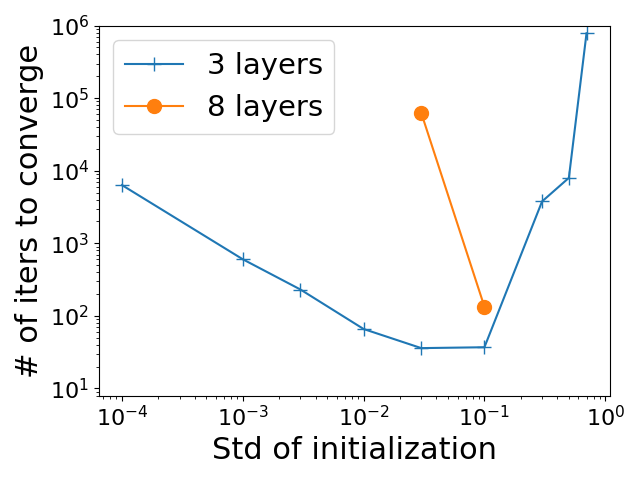}} \
\subfloat[]{\includegraphics[scale=0.21]{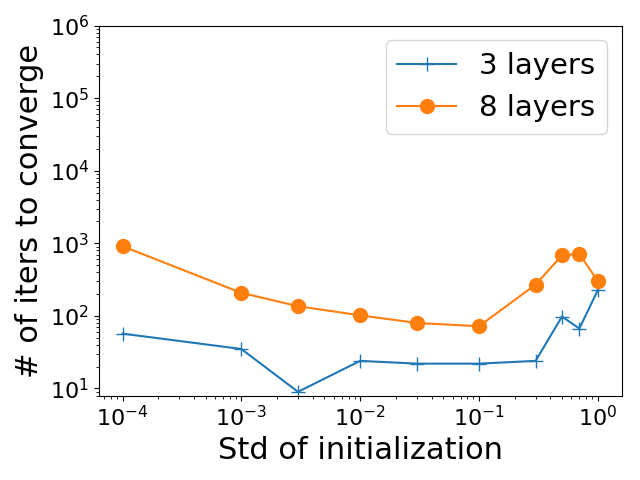}} \ 
\subfloat[]{\includegraphics[scale=0.21]{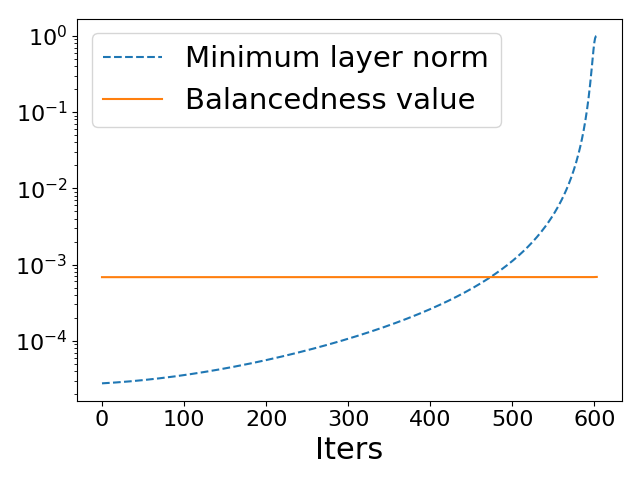}} \ 
\subfloat[]{\includegraphics[scale=0.21]{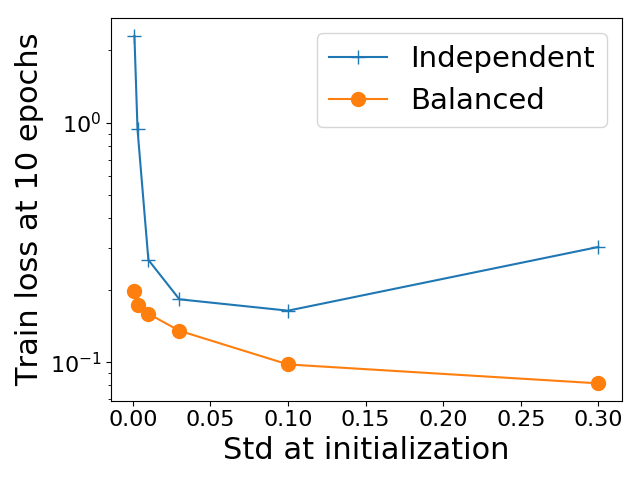}} \ 
\vspace{-1mm}
\caption{
Experimental results.
\textbf{(a)}~Convergence of gradient descent training deep linear neural networks (depths~$3$ and~$8$) under customary initialization of layer-wise independent Gaussian perturbations with mean~$0$ and standard deviation~$s$.
For each network, number of iterations required to reach $\epsilon=10^{-5}$ from optimal training loss is plotted as a function of~$s$ (missing values indicate no convergence within $10^6$~iterations).
Dataset in this experiment is a numeric regression task from UCI Machine Learning Repository (details in text).
Notice that fast convergence is attained only in a narrow band of values for~$s$, and that this phenomenon is more extreme with the deeper network.
\textbf{(b)}~Same setup as in~(a), but with layer-wise independent initialization replaced by balanced initialization (Procedure~\ref{proc:balance_init}) based on Gaussian perturbations with mean~$0$ and standard deviation~$s$.
Notice that this change leads to fast convergence, for both networks, under wide range of values for~$s$.
Notice also that the shallower network converges slightly faster, in line with the results of~\citet{saxe2014exact} and~\citet{arora2018optimization} for $\ell_2$~loss.
\textbf{(c)}~For the run in~(a) of a depth-$3$ network and standard deviation~$s={10^{-3}}$, this plot shows degree of balancedness (minimal~$\delta$ satisfying $\|W_{j+1}^{\top}W_{j+1}-W_{j}W_j^\top\|_F\leq\delta~,\,\forall{j}\in\{1,\ldots,N-1\}$) against magnitude of weights ($\min_{j=1,\ldots,N}\|W_{j}W_j^\top\|_F$) throughout optimization.
Notice that approximate balancedness persists under gradient descent, in line with our theoretical analysis.
\textbf{(d)}~Convergence of stochastic gradient descent training the fully-connected non-linear (ReLU) neural network of the MNIST tutorial built into TensorFlow (details in text).
Customary layer-wise independent and balanced initializations~---~both based on Gaussian perturbations centered at zero~---~are evaluated, with varying standard deviations.
For each configuration~$10$ epochs of optimization are run, followed by measurement of the training loss.
Notice that although our theoretical analysis does not cover non-linear activation, softmax-cross-entropy loss and stochastic optimization, the conclusion of balanced initialization leading to improved convergence carries over to this setting.
}
\label{fig:exper}
\vspace{-3mm}
\end{figure}

\section{Related Work} \label{sec:related}
\vspace{-2mm}

Theoretical study of gradient-based optimization in deep learning is a highly active area of research.
As discussed in Section~\ref{sec:intro}, a popular approach is to show that the objective landscape admits the properties of no poor local minima and strict saddle, which, by~\citet{ge2015escaping,lee2016gradient,panageas2017gradient}, ensure convergence to global minimum.
Many works, both classic (\eg~\citet{baldi1989neural}) and recent (\eg~\citet{choromanska2015loss,kawaguchi2016deep,hardt2016identity,soudry2016no,haeffele2017global,nguyen2017loss,safran2018spurious,nguyen2018loss,laurent2018deep}), have focused on the validity of these properties in different deep learning settings.
Nonetheless, to our knowledge, the success of landscape-driven analyses in formally proving convergence to global minimum for a gradient-based algorithm, has thus far been limited to shallow (two layer) models only (\eg~\citet{ge2016matrix,du2018power,du2018algorithmic}).

An alternative to the landscape approach is a direct analysis of the trajectories taken by the optimizer.
Various papers (\eg~\citet{brutzkus2017globally,li2017convergence,zhong2017recovery,tian2017analytical,brutzkus2018sgd,li2018algorithmic,du2018gradient,du2018convolutional,liao2018almost}) have recently adopted this strategy, but their analyses only apply to shallow models.
In the context of linear neural networks, deep (three or more layer) models have also been treated~---~\cf~\citet{saxe2014exact} and~\citet{arora2018optimization}, from which we draw certain technical ideas for proving Lemma~\ref{lemma:descent}.
However these treatments all apply to gradient flow (gradient descent with infinitesimally small learning rate), and thus do not formally address the question of computational efficiency.

To our knowledge, \citet{bartlett2018gradient}~is the only existing work rigorously proving convergence to global minimum for a conventional gradient-based algorithm training a deep model.
This work is similar to ours in the sense that it also treats linear neural networks trained via minimization of $\ell_2$~loss over whitened data, and proves linear convergence (to global minimum) for gradient descent.
It is more limited in that it only covers the subclass of linear residual networks, \ie~the specific setting of uniform width across all layers ($d_0=\cdots=d_N$) along with identity initialization.
We on the other hand allow the input, output and hidden dimensions to take on any configuration that avoids ``bottlenecks'' (\ie~admits $\min\{d_1,\ldots,d_{N-1}\}\geq\min\{d_0,d_N\}$), and from initialization require only approximate balancedness (Definition~\ref{def:balance}), supporting many options beyond identity.
In terms of the target matrix~$\Phi$, \citet{bartlett2018gradient}~treats two separate scenarios:\note{
	There is actually an additional third scenario being treated~---~$\Phi$~is asymmetric and positive definite~---~but since that requires a dedicated optimization algorithm, it is outside our scope.
}
\emph{(i)}~$\Phi$ is symmetric and positive definite;
and~\emph{(ii)}~$\Phi$ is within distance~$1/10e$ from identity.\note{
	$1/10e$~is the optimal (largest) distance that may be obtained (via careful choice of constants) from the proof of Theorem~$1$ in~\citet{bartlett2018gradient}.
}
Our analysis does not fully account for scenario~\emph{(i)}, which seems to be somewhat of a singularity, where all layers are equal to each other throughout optimization (see proof of Theorem~$2$ in~\citet{bartlett2018gradient}).
We do however provide a strict generalization of scenario~\emph{(ii)}~---~our assumption of deficiency margin (Definition~\ref{def:margin}), in the setting of linear residual networks, is met if the distance between target and identity is less than~$0.5$.


\section{Conclusion} \label{sec:conc}

For deep linear neural networks, we have rigorously proven convergence of gradient descent to global minima, at a linear rate, provided that the initial weight matrices are approximately balanced and the initial end-to-end matrix has positive deficiency margin.
The result applies to networks with arbitrary depth, and any configuration of input/output/hidden dimensions that supports full rank, \ie~in which no hidden layer has dimension smaller than both the input and output.

Our assumptions on initialization~---~approximate balancedness and deficiency margin~---~are both necessary, in the sense that violating any one of them may lead to convergence failure, as we demonstrated explicitly.
Moreover, for networks with output dimension~$1$ (scalar regression), we have shown that a balanced initialization, \ie~a random choice of the end-to-end matrix followed by a balanced partition across all layers, leads assumptions to be met, and thus convergence to take place, with constant probability.
Rigorously proving efficient convergence with significant probability under customary layer-wise independent initialization remains an open problem.
The recent work of~\citet{shamir2018exponential} suggests that this may not be possible, as at least in some settings, the number of iterations required for convergence is exponential in depth with overwhelming probability.
This negative result, a theoretical manifestation of the ``vanishing gradient problem'', is circumvented by balanced initialization.
Through simple experiments we have shown that the latter can lead to favorable convergence in deep learning practice, as it does in theory.
Further investigation of balanced initialization, including development of variants for convolutional layers, is regarded as a promising direction for future research.

The analysis in this paper uncovers special properties of the optimization landscape in the vicinity of gradient descent trajectories.
We expect similar ideas to prove useful in further study of gradient descent on non-convex objectives, including training losses of deep non-linear neural networks.

\newcommand{\acknowledgments}
{This work is supported by NSF, ONR, Simons Foundation, Schmidt Foundation, Mozilla Research, Amazon Research, DARPA and SRC.
Nadav Cohen is a member of the Zuckerman Israeli Postdoctoral Scholars Program, and supported by Schmidt Foundation.}
\ifdefined\COLT
	\acks{\acknowledgments}
\else
	\ifdefined\CAMREADY
		\ifdefined\ICLR		
			\subsubsection*{Acknowledgments}
		\else
			\section*{Acknowledgments}
		\fi
		\acknowledgments
	\fi
\fi

\section*{References}
{\small
\ifdefined\ICML
	\bibliographystyle{icml2018}
\else
	\bibliographystyle{plainnat}
\fi
\bibliography{refs.bib}
}

\clearpage
\appendix
\section*{\LARGE \centering Appendix}

\ifdefined\NOTESAPP
	\theendnotes
\fi

\section{$\ell_2$ Loss over Whitened Data} \label{app:whitened}

Recall the $\ell_2$~loss of a linear predictor~$W\in\R^{d_y\times{d}_x}$ as defined in Section~\ref{sec:prelim}:
$$L(W)=\frac{1}{2m}\|WX-Y\|_F^2
\text{\,,}$$
where~$X\in\R^{d_x\times{m}}$ and~$Y\in\R^{d_y\times{m}}$.
Define $\Lambda_{xx}:=\tfrac{1}{m}XX^\top\in\R^{d_x\times{d}_x}$, $\Lambda_{yy}:=\tfrac{1}{m}YY^\top\in\R^{d_y\times{d}_y}$ and $\Lambda_{yx}:=\tfrac{1}{m}YX^\top\in\R^{d_y\times{d}_x}$.
Using the relation $\norm{A}_F^2=\Tr(AA^\top)$, we have:
\beas
L(W)&=&\tfrac{1}{2m}\Tr\big((WX-Y)(WX-Y)^\top\big) \\[1mm]
&=&\tfrac{1}{2m}\Tr(WXX^\top{W}^\top)-\tfrac{1}{m}\Tr(WXY^\top)+\tfrac{1}{2m}\Tr(YY^\top) \\[1mm]
&=&\tfrac{1}{2}\Tr(W\Lambda_{xx}{W}^\top)-\Tr(W\Lambda_{yx}^\top)+\tfrac{1}{2}\Tr(\Lambda_{yy})
\text{\,.}
\eeas
By definition, when data is whitened, $\Lambda_{xx}$~is equal to identity, yielding:
\beas
L(W)&=&\tfrac{1}{2}\Tr(W{W}^\top)-\Tr(W\Lambda_{yx}^\top)+\tfrac{1}{2}\Tr(\Lambda_{yy}) \\[1mm]
&=&\tfrac{1}{2}\Tr\big((W-\Lambda_{yx})(W-\Lambda_{yx})^\top\big)-\tfrac{1}{2}\Tr(\Lambda_{yx}\Lambda_{yx}^\top)+\tfrac{1}{2}\Tr(\Lambda_{yy}) \\[1mm]
&=&\tfrac{1}{2}\norm{W-\Lambda_{yx}}_F^2+c
\text{\,,}
\eeas
where~$c:=-\tfrac{1}{2}\Tr(\Lambda_{yx}\Lambda_{yx}^\top)+\tfrac{1}{2}\Tr(\Lambda_{yy})$ does not depend on~$W$.
Hence we arrive at Equation~\eqref{eq:loss_whitened}.

\section{Approximate Balancedness and Deficiency Margin Under Customary Initialization} \label{app:balance_margin_init}

Two assumptions concerning initialization facilitate our main convergence result (Theorem~\ref{theorem:converge}):
\emph{(i)}~the initial weights $W_1(0),\ldots,W_N(0)$ are approximately balanced (see Definition~\ref{def:balance});
and \emph{(ii)}~the initial end-to-end matrix~$W_{1:N}(0)$ has positive deficiency margin with respect to the target~$\Phi$ (see Definition~\ref{def:margin}).
The current appendix studies the likelihood of these assumptions being met under customary initialization of random (layer-wise independent) Gaussian perturbations centered at zero.

For approximate balancedness we have the following claim, which shows that it becomes more and more likely the smaller the standard deviation of initialization is:
\vspace{1mm}
\begin{claim}
\label{claim:balance_init}
Assume all entries in the matrices $W_j\in\R^{d_j\times{d}_{j-1}}$, $j=1,\ldots,N$, are drawn independently at random from a Gaussian distribution with mean zero and standard deviation~$s>0$.
Then, for any~$\delta>0$, the probability of $W_1,\ldots,W_N$ being $\delta$-balanced is at least~$\max\{0,1-10\delta^{-2}Ns^4d_{max}^3\}$, where $d_{max}:=\max\{d_0,\ldots,d_N\}$.
\end{claim}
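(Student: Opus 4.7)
The plan is to control each difference $\|W_{j+1}^\top W_{j+1} - W_j W_j^\top\|_F^2$ in expectation and then combine a Markov bound with a union bound over $j$. Because $W_j$ and $W_{j+1}$ are drawn independently, I can expand
\[
\EE\|W_{j+1}^\top W_{j+1} - W_j W_j^\top\|_F^2 = \EE\|W_{j+1}^\top W_{j+1}\|_F^2 + \EE\|W_j W_j^\top\|_F^2 - 2\inprod{\EE[W_{j+1}^\top W_{j+1}]}{\EE[W_j W_j^\top]},
\]
and the cross term is manageable because $\EE[W_j W_j^\top] = d_{j-1} s^2 I_{d_j}$ and $\EE[W_{j+1}^\top W_{j+1}] = d_{j+1} s^2 I_{d_j}$, so the inner product equals $d_{j-1} d_j d_{j+1} s^4$.

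The main calculation is the second moment of $WW^\top$ (and equivalently $W^\top W$) for a Gaussian matrix. For $W \in \R^{p \times q}$ with i.i.d.\ $N(0,s^2)$ entries, I would compute that a diagonal entry of $WW^\top$ is a scaled $\chi^2_q$ variable with second moment $q^2 s^4 + 2 q s^4$, while off-diagonal entries are sums of products of independents with second moment $q s^4$; summing yields $\EE\|WW^\top\|_F^2 = pq(p+q+1)s^4$. Plugging $(p,q) = (d_j, d_{j-1})$ and $(p,q) = (d_j, d_{j+1})$ into the display and simplifying gives the clean identity
\[
\EE\|W_{j+1}^\top W_{j+1} - W_j W_j^\top\|_F^2 = s^4 \, d_j \bigl[ (d_{j-1} - d_{j+1})^2 + (d_j + 1)(d_{j-1} + d_{j+1}) \bigr].
\]
Bounding each dimension by $d_{\max}$ (using $d_{\max} \geq 1$) yields an estimate of the form $\leq C \cdot d_{\max}^3 s^4$ with $C$ a small absolute constant (the calculation comfortably gives $C \leq 5$, leaving room for the $10$ stated in the claim).

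Having this second-moment bound in hand, I would invoke Markov's inequality in the form
\[
\pr\bigl( \|W_{j+1}^\top W_{j+1} - W_j W_j^\top\|_F > \delta \bigr) \leq \delta^{-2}\, \EE\|W_{j+1}^\top W_{j+1} - W_j W_j^\top\|_F^2 \leq C \cdot \delta^{-2} d_{\max}^3 s^4,
\]
and then union bound over $j = 1, \ldots, N-1$ to control the probability that approximate balancedness fails at \emph{any} layer. This yields $\pr(W_1,\ldots,W_N \text{ are }\delta\text{-balanced}) \geq 1 - C(N-1) \delta^{-2} d_{\max}^3 s^4$, which together with the trivial lower bound of $0$ on probability gives the claimed $\max\{0, 1 - 10 \delta^{-2} N s^4 d_{\max}^3\}$.

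The only nontrivial step is the second-moment calculation for $\|WW^\top\|_F^2$; once that is in hand, independence of consecutive layers eliminates the cross term cleanly and the rest is Markov plus union bound. There is no serious obstacle — the work is purely bookkeeping on Gaussian moments and loose dimension bounds, with slack absorbed into the constant $10$.
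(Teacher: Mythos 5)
Your proof is correct and follows essentially the same route as the paper's: bound $\EE\|W_{j+1}^\top W_{j+1}-W_jW_j^\top\|_F^2$ by $O(s^4 d_{\max}^3)$, apply Markov's inequality to the squared norm, and union bound over the $N-1$ layer pairs. The only difference is bookkeeping — the paper computes the second moment of each entry of the difference matrix directly, while you expand the squared norm and handle the cross term via independence — and both yield a constant comfortably within the stated $10$.
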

\vspace{-3mm}
\begin{proof}
See Appendix~\ref{app:proofs:balance_init}.
\end{proof}

In terms of deficiency margin, the claim below treats the case of a single output model (scalar regression), and shows that if the standard deviation of initialization is sufficiently small, with probability close to~$0.5$, a deficiency margin will be met.
However, for this deficiency margin to meet a chosen threshold~$c$, the standard deviation need be sufficiently large.
\vspace{1mm}
\begin{claim}
\label{claim:margin_init}
There is a constant $C_1 > 0$ such that the following holds. Consider the case where~$d_N=1$, $d_0\geq20$,\note{
	The requirement $d_0\geq20$ is purely technical, designed to simplify expressions in the claim.
} and suppose all entries in the matrices $W_j\in\R^{d_j\times{d}_{j-1}}$, $j=1,\ldots,N$, are drawn independently at random from a Gaussian distribution with mean zero, whose standard deviation~$s>0$ is small with respect to the target, \ie~$s\leq \| \Phi\|_F^{1/N} \big/ (10^5 d_0^3d_1 \cdots d_{N-1} C_1)^{1/(2N)}$.
Then, for any~$c$ with $0<c\leq  \norm{\Phi}_{F} \big/ \big(10^{5}d_0^{3}C_1(C_{1}N)^{2N}\big)$, the probability of the end-to-end matrix~$W_{1:N}$ having deficiency margin~$c$ with respect to~$\Phi$ is at least~$0.49$ if:\,\footnote{
	The probability $0.49$ can be increased to any $p < 1/2$ by increasing the constant $10^5$ in the upper bounds for~$s$ and~$c$.
}
\footnote{
	It is not difficult to see that the latter threshold is never greater than the upper bound for~$s$, thus sought-after standard deviations always exist.
} 
$$s\geq{c}^{1/(2N)}\cdot\big(C_{1}N\norm{\Phi}_F^{1/(2N)}/(d_1\cdots{d}_{N-1})^{1/(2N)}\big)
\text{~.}$$
\end{claim}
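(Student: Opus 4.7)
The strategy is to exploit Gaussian symmetry to show that the deficiency margin event has probability close to $1/2$, with small corrections controlled by concentration inequalities.

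By rotational invariance of $W_1$, which has i.i.d.\ $\mathcal{N}(0, s^2)$ entries, I assume without loss of generality that $\Phi = \phi e_1^\top$, where $\phi := \|\Phi\|_F$ and $e_1 \in \R^{d_0}$ is the first standard basis vector. Next, I condition on $W_2, \ldots, W_N$. Since $W_{1:N} = W_{2:N} W_1 \in \R^{1 \times d_0}$ and the columns of $W_1$ are i.i.d.\ spherical Gaussians scaled by $s$, the conditional distribution is $W_{1:N} \mid W_{2:N} \sim \mathcal{N}(0, \sigma^2 I_{d_0})$ with $\sigma^2 := s^2 \|W_{2:N}\|_F^2$. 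Writing $W_{1:N} = \sigma U^\top$ with $U \sim \mathcal{N}(0, I_{d_0})$ and setting $V := \sum_{i=2}^{d_0} U_i^2 \sim \chi^2_{d_0 - 1}$ (independent of $U_1 \sim \mathcal{N}(0, 1)$), direct algebra converts the deficiency margin condition $\|W_{1:N} - \Phi\|_F^2 \leq (\phi - c)^2$ into the equivalent inequality
$$\sigma^2 V \leq (2\phi - c - \sigma U_1)(\sigma U_1 - c).$$

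Iterating the conditioning argument down through the layers gives a clean product representation $\sigma^2 = s^{2N} \prod_{j=1}^{N-1} X_j$ with $X_j \sim \chi^2_{d_j}$ mutually independent. Using Markov's inequality and standard chi-square lower-tail bounds (combined with a union bound over the $N-1$ factors), I obtain with high probability that $\sigma^2 \in [\alpha_N, K_N] \cdot s^{2N} d_1 \cdots d_{N-1}$, where $\alpha_N, K_N > 0$ are constants that may depend on $N$ but can be absorbed into the hypothesis via the safety factor $(C_1 N)^{2N}$. Similarly, a Gaussian-type concentration bound for chi-square gives $V = O(d_0)$ with probability close to $1$. Plugging these bounds into the equivalent inequality and using $\sigma U_1 \ll \phi$ (valid in our regime), the condition reduces to the linear threshold $U_1 \geq c/\sigma + O(\sigma d_0/\phi)$. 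The upper hypothesis on $s$ forces $\mathbb{E}[\sigma^2] \leq \phi^2/(10^5 d_0^3 C_1)$, so $\sigma d_0/\phi = O(1/\sqrt{d_0 C_1})$ is tiny for $C_1$ large and $d_0 \geq 20$; the lower hypothesis on $s$ forces $\mathbb{E}[\sigma^2] \geq c (C_1 N)^{2N} \phi$, so $c/\sigma = O(\sqrt{c/\phi} \cdot (C_1 N)^{-N})$ is also tiny. Hence for $C_1$ sufficiently large the threshold is at most any preassigned small $\delta > 0$, and the normal CDF gives $P(U_1 \geq \delta) \geq 1/2 - \delta/\sqrt{2\pi}$. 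A union bound over the high-probability events from the concentration steps then yields overall probability at least $0.49$ (indeed, any $p < 1/2$) by choosing $C_1$ large enough.

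The main technical obstacle is the lower-tail control of $\prod_j X_j$ when some hidden layers are thin, e.g.\ $d_j = 1$: in that case $X_j \sim \chi^2_1$ puts substantial mass near zero, so $\prod_j X_j$ is not concentrated around $\prod_j d_j$. The saving grace is that the exponentially large safety factor $(C_1 N)^{2N}$ in the hypothesis provides enough slack that I only need $\prod_j X_j \geq \varepsilon_N \prod_j d_j$ for a tiny constant $\varepsilon_N > 0$, and such a small-ball bound follows from $P(\chi^2_{d_j} \leq \varepsilon) = O(\varepsilon^{d_j/2})$ applied with a union bound over $j$.
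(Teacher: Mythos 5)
Your proposal is correct, and it reaches the conclusion by a genuinely different route than the paper. The paper's proof also rests on rotation invariance plus a two-sided bound on $\norm{W_{1:N}}_2$, but it obtains the lower tail of $\norm{W_{1:N}}_2^2$ from the Carbery--Wright anti-concentration inequality for multilinear polynomials of Gaussians (applied to a single coordinate $g_{i_0}$ of the product), and it computes the final probability by conditioning on the radius and invoking an explicit hyperspherical-cap area formula. You instead exploit the scalar-output structure twice: first to get the exact distributional identity $\sigma^2 = s^{2N}\prod_{j=1}^{N-1}\chi^2_{d_j}$ with independent factors (which replaces Carbery--Wright by elementary $\chi^2$ small-ball bounds, $\prob[\chi^2_{d_j}\leq\varepsilon d_j]\leq (e\varepsilon)^{1/2}$, plus a union bound over the $N-1$ layers), and second to replace the cap-area geometry by the coordinate decomposition $U_1$ versus $V=\sum_{i\geq 2}U_i^2$ and the normal CDF. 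Your algebraic reformulation $\sigma^2V\leq(2\phi-c-\sigma U_1)(\sigma U_1-c)$ is exactly right, your two hypothesis-driven estimates ($c/\sigma$ small from the lower bound on $s$, $\sigma d_0/\phi$ small from the upper bound) are the same ones the paper's constants encode, and your diagnosis of the thin-layer issue ($d_j=1$) together with its resolution via the $(C_1N)^{2N}$ slack is precisely where the care is needed. What each approach buys: yours is more elementary and self-contained, and makes the origin of the $(C_1N)^{2N}$ factor transparent; the paper's Carbery--Wright route is less dependent on the exact product-of-chi-squares structure and so would adapt more readily beyond the case $d_N=1$ (though the claim itself is stated only for that case).
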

\vspace{-3mm}
\begin{proof}
See Appendix~\ref{app:proofs:margin_init}.
\end{proof}

\section{Convergence Failures} \label{app:fail}

In this appendix we show that the assumptions on initialization facilitating our main convergence result (Theorem~\ref{theorem:converge})~---~approximate balancedness and deficiency margin~---~are both necessary, by demonstrating cases where violating each of them leads to convergence failure.
This accords with widely observed empirical phenomena, by which successful optimization in deep learning crucially depends on careful initialization (\cf~\citet{sutskever2013importance}).

Claim~\ref{claim:diverge_balance} below shows\note{
	For simplicity of presentation, the claim treats the case of even depth and uniform dimension across all layers.
	It can easily be extended to account for arbitrary depth and input/output/hidden dimensions.
}
that if one omits from Theorem~\ref{theorem:converge} the assumption of approximate balancedness at initialization, no choice of learning rate can guarantee convergence:
\vspace{1mm}
\begin{claim}
\label{claim:diverge_balance}
Assume gradient descent with some learning rate~$\eta>0$ is a applied to a network whose depth~$N$ is even, and whose input, output and hidden dimensions $d_0,\ldots,d_N$ are all equal to some $d\in\N$.
Then, there exist target matrices~$\Phi$ such that the following holds.
For any~$c$ with $0<c< \sigma_{min}(\Phi)$, there are initializations for which the end-to-end matrix~$W_{1:N}(0)$ has deficiency margin~$c$ with respect to~$\Phi$, and yet convergence will fail~---~objective will never go beneath a positive constant.
\end{claim}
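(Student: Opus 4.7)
The strategy is to construct, for any given learning rate $\eta>0$, a family of initializations parameterized by a large scalar $M$ that satisfy the deficiency margin assumption but drive gradient descent to diverge. The construction exploits the fact that, in the absence of approximate balancedness, the gradient of $L^N$ can be made astronomically large while the end-to-end matrix itself is tame, so a single GD step of size $\eta$ overshoots catastrophically no matter how small $\eta$ is.

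Concretely, I would pick $\Phi=\phi\cdot I_d$ for some convenient $\phi>0$, so that $\sigma_{min}(\Phi)=\phi$. Given $c\in(0,\phi)$, choose $\beta\in(0,(\phi-c)/\sqrt{d}]$ and set $W_1(0)=M\,I_d$, $W_2(0)=\frac{\phi-\beta}{M}I_d$, and $W_j(0)=I_d$ for $3\le j\le N$, where $M>0$ is a free parameter to be fixed later as a function of $\eta$. By construction $W_{1:N}(0)=(\phi-\beta)I_d$, so $\|W_{1:N}(0)-\Phi\|_F=\beta\sqrt{d}\le\phi-c$, i.e.\ the deficiency margin $c$ is satisfied. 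Because both $\Phi$ and every $W_j(0)$ are scalar multiples of $I_d$, and $\partial L^N/\partial W_j$ preserves this structure, gradient descent keeps $W_j(t)=\alpha_j(t)\,I_d$ for all $t$, and the dynamics reduce to the $N$ scalar recurrences
\[
\alpha_j(t+1)=\alpha_j(t)-\eta\,d\,P_j(t)\,e(t),\qquad j=1,\ldots,N,
\]
with $P_j(t):=\prod_{i\ne j}\alpha_i(t)$, $e(t):=\prod_i\alpha_i(t)-\phi$, and loss $\ell(t)=\frac{d}{2}\,e(t)^2$.

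Next I would execute the one-step calculation. At $t=0$ we have $e(0)=-\beta$, $P_1(0)=(\phi-\beta)/M$, $P_2(0)=M$, and $P_j(0)=\phi-\beta$ for $j\ge3$, so the dominant update is on $\alpha_2$: $\alpha_2(1)=(\phi-\beta)/M+\eta d\beta M\approx\eta d\beta M$ for $M$ large, while $\alpha_1(1)\approx M$ and $\alpha_j(1)\approx 1+\eta d\beta(\phi-\beta)$ for $j\ge3$. Multiplying, $\prod_i\alpha_i(1)\approx\eta d\beta M^2\cdot(1+\eta d\beta(\phi-\beta))^{N-2}$, so $\ell(1)=\Omega(M^4)$; choosing $M$ large enough (depending on $\eta,d,\beta,N,\phi$) makes $\ell(1)$ exceed any preset threshold $L^\star$.

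The final and most delicate step is to show that this divergence persists, so that $\ell(t)$ stays bounded away from zero for every $t\ge 0$. The plan is an induction showing that once $M$ is large enough, for every $t\ge 1$ each $|\alpha_j(t+1)|$ is well-approximated by $\eta d\,|P_j(t)\,e(t)|$ (the drift term dominates $\alpha_j(t)$). Multiplying these approximations across $j$ and using the identity $\prod_j P_j(t)=\bigl(\prod_i\alpha_i(t)\bigr)^{N-1}$ yields $|e(t+1)|\ge C\cdot|e(t)|^{2N-1}$ for some constant $C=C(\eta,d,N)>0$, so $|e(t)|$ grows doubly exponentially in $t$ and $\ell(t)\to\infty$. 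Carrying out this multi-component bookkeeping rigorously is the main obstacle; a technically cleaner route would be to first establish the base case $N=2,\,d=1$ in full (where the two-variable recurrence is transparent), and then reduce the general even-$N$, general-$d$ statement to it by appending identity layers and adopting the scalar-times-identity ansatz above.
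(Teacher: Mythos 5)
Your construction is the same mechanism as the paper's: an unbalanced initialization in which one factor carries a large scalar $M$ and another carries $1/M$, so the end-to-end matrix satisfies the deficiency margin while the gradient is of order $M$, forcing a catastrophic overshoot for any fixed $\eta$. The paper likewise reduces to scalar dynamics (it uses $\Phi=I_d$ with diagonal weights so that only one diagonal entry evolves, the analogue of your scalar-times-identity ansatz), and its initialization $W_j(0)=Ac^{1/N}$ for $j\le N/2$, $W_j(0)=c^{1/N}/A$ for $j>N/2$ plays the role of your $M$, $1/M$ pair. Two small points: the factor $d$ in your update $\alpha_j(t+1)=\alpha_j(t)-\eta\,d\,P_j(t)e(t)$ is spurious (the gradient of $\frac12\|W_{1:N}-\Phi\|_F^2$ with respect to $W_j$ gives $P_j(t)e(t)I_d$ with no $d$), and this is harmless.

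The genuine gap is the one you flag yourself: the induction showing the blow-up persists for all $t$ is the entire content of the proof (the claim requires $\ell(t)$ bounded away from zero for \emph{every} $t$, not just $\ell(1)$ large), and you have only sketched it. The approximation $\alpha_j(t+1)\approx-\eta P_j(t)e(t)$ that drives your recurrence $|e(t+1)|\ge C|e(t)|^{2N-1}$ requires verifying inductively that $\eta|P_j(t)e(t)|$ dominates $|\alpha_j(t)|$ for every $j$ simultaneously; the paper does this by maintaining the invariant that all weights share a sign and lie in $[B(t),C(t)]$ with $\eta B(t)^{2N-1}\ge 20\,C(t)$. Your specific initialization makes this bookkeeping strictly harder than it needs to be: after one step, layers $3,\dots,N$ sit at $1+O(\eta)$ while layers $1,2$ sit at $\Theta(M)$, so the weights span a ratio of order $M$ and no uniform invariant of the above form holds at $t=1$ (indeed $\eta B(1)^{2N-1}\approx\eta$ while $C(1)\approx M$). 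The divergence still occurs, but the induction must track per-layer magnitudes whose ratios flip each step rather than a single interval. The paper sidesteps this by splitting the imbalance evenly across the $N/2$ and $N/2$ layers, which guarantees that after a single step all weights land in a common interval satisfying the invariant, after which the sign-alternating doubly-exponential explosion follows by a clean induction. To complete your proof you would either need to carry out the heterogeneous bookkeeping (e.g.\ by tracking the vector of exponents of $M$ across layers), or switch to the symmetric split.
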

\vspace{-3mm}
\begin{proof}
See Appendix~\ref{app:proofs:diverge_balance}.
\end{proof}

In terms of deficiency margin, we provide (by adapting Theorem~4 in~\citet{bartlett2018gradient}) a different, somewhat stronger result~---~there exist settings where initialization violates the assumption of deficiency margin, and despite being perfectly balanced, leads to convergence failure, for any choice of learning rate:\note{
	This statement becomes trivial if one allows initialization at a suboptimal stationary point, \eg~$W_j(0)=0,~j=1,\ldots,N$.
	Claim~\ref{claim:diverge_margin} rules out such trivialities by considering only non-stationary initializations.
}
\vspace{1mm}
\begin{claim}
\label{claim:diverge_margin}
Consider a network whose depth~$N$ is even, and whose input, output and hidden dimensions $d_0,\ldots,d_N$ are all equal to some $d\in\N$.
Then, there exist target matrices~$\Phi$ for which there are non-stationary initializations $W_1(0),\ldots,W_N(0)$ that are $0$-balanced, and yet lead gradient descent, under any learning rate, to fail~---~objective will never go beneath a positive constant.
\end{claim}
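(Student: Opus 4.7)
The plan is to exhibit an explicit counterexample built on a parity obstruction. Since $N$ is even, I will take the target to be a negative scalar multiple of the identity, $\Phi = -\phi I_d$ with $\phi > 0$, and initialize every layer to the same negative multiple of identity, $W_j(0) = -a I_d$ for $j=1,\ldots,N$, with $a > 0$. Zero-balancedness at initialization is immediate: $W_{j+1}(0)^\top W_{j+1}(0) = a^2 I_d = W_j(0) W_j(0)^\top$ for every $j$. Non-stationarity holds since $\partial L^N/\partial W_j$ at the initialization equals $-(a^N+\phi)a^{N-1} I_d \neq 0$. Note that the end-to-end matrix at time zero is $W_{1:N}(0) = (-a)^N I_d = a^N I_d$, which lies in the cone of nonnegative-scalar multiples of $I_d$; this is the feature I will exploit.

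The central step is to show, by induction on the iteration $t$, that gradient descent preserves the ansatz $W_j(t) = w(t) I_d$ with a common scalar $w(t)\in\R$ across all layers. Assuming the ansatz at step $t$, the standard expression for the gradient of $L^N$ gives
$$\frac{\partial L^N}{\partial W_j}\big(W_1(t),\ldots,W_N(t)\big) = W_{j+1:N}(t)^\top\big(W_{1:N}(t) - \Phi\big) W_{1:j-1}(t)^\top = \big(w(t)^N + \phi\big)\, w(t)^{N-1}\, I_d \text{\,,}$$
which is independent of $j$. Hence every layer receives the same update, and $W_j(t+1) = w(t+1) I_d$ with $w(t+1) = w(t) - \eta\,(w(t)^N+\phi)\,w(t)^{N-1}$, preserving the ansatz for any learning rate $\eta > 0$.

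Once the ansatz is established for all $t$, the parity of $N$ closes the argument: $N$ even forces $w(t)^N \geq 0$, so $W_{1:N}(t) = w(t)^N I_d$ always lies on the nonnegative ray while $\Phi = -\phi I_d$ lies strictly on the negative ray. Consequently
$$\ell(t) = L^N\big(W_1(t),\ldots,W_N(t)\big) = \tfrac{1}{2}\big\|w(t)^N I_d + \phi I_d\big\|_F^2 = \tfrac{d}{2}\big(w(t)^N+\phi\big)^2 \geq \tfrac{d\phi^2}{2} > 0 \text{\,,}$$
for every $t$ and every $\eta > 0$, establishing the claimed failure of convergence.

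The only nontrivial step is the induction verifying that the "common scalar multiple of $I_d$" structure is an invariant set of the gradient descent map; this is purely a calculation with the closed-form gradient of $L^N$ and presents no real obstacle. The conceptual content of the argument is the parity obstruction: zero-balancedness together with an initialization in the "wrong half-space" relative to $\Phi$ traps the end-to-end matrix in a half-space that cannot reach the target, irrespective of step size, highlighting that approximate balancedness alone (without deficiency margin, which fails here because $W_{1:N}(0) = a^N I_d$ lies on the opposite side of the origin from $\Phi$) does not suffice for convergence.
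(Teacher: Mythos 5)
Your proof is correct, and it rests on the same underlying parity obstruction as the paper's: with $N$ even, the end-to-end matrix is an $N$-th power of a common factor and therefore cannot produce the negative spectral direction present in the target, so the loss is bounded below uniformly in $t$ and $\eta$. The execution differs in a way worth noting. The paper takes identity initialization and any symmetric $\Phi$ with a negative eigenvalue, and invokes Lemma~6 of \citet{bartlett2018gradient} to guarantee that every $W_j(t)$ remains simultaneously diagonalizable in the eigenbasis of $\Phi$ with a \emph{common} diagonal factor $\hat D(t)$, whence $W_{1:N}(t)=U\hat D(t)^N U^\top\succeq 0$ can never match the negative eigenvalue $-\lambda$ of $\Phi$, giving $\ell(t)\geq\frac12\lambda^2$. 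You instead specialize to $\Phi=-\phi I_d$ and $W_j(0)=-aI_d$ and verify directly, by a one-line induction on the gradient formula, that the invariant set $\{W_j=wI_d\ \forall j\}$ is preserved, reducing everything to a scalar recursion and the bound $\ell(t)\geq\frac{d\phi^2}{2}$. What your route buys is self-containedness (no appeal to an external lemma) and a completely explicit one-dimensional dynamics; what the paper's route buys is a slightly broader family of counterexample targets (any symmetric $\Phi$ with a negative eigenvalue) and the use of identity initialization, which ties the failure mode directly to the linear residual network setting discussed elsewhere in the paper. All the checkable details of your argument~---~the $0$-balancedness $W_{j+1}(0)^\top W_{j+1}(0)=a^2I_d=W_j(0)W_j(0)^\top$, the non-stationarity $\frac{\partial L^N}{\partial W_j}(0)=-(a^N+\phi)a^{N-1}I_d\neq 0$, the $j$-independence of the gradient under the ansatz, and the final lower bound~---~are correct.
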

\vspace{-3mm}
\begin{proof}
See Appendix~\ref{app:proofs:diverge_margin}.
\end{proof}


\section{Deferred Proofs} \label{app:proofs}

We introduce some additional notation here
in addition to the notation specified in Section~\ref{sec:prelim}. We use~$\norm{A}_\sigma$ to denote the spectral norm (largest singular value) of a matrix~$A$, and sometimes~$\norm{\vv}_2$ as an alternative to~$\norm{\vv}$~---~the Euclidean norm of a vector~$\vv$.
Recall that for a matrix $A$, $vec(A)$ is its vectorization in column-first order.
We let $F(\cdot)$ denote the cumulative distribution function of the standard normal distribution, \ie~$F(x) = \int_{-\infty}^x\frac{1}{\sqrt{2\pi}}e^{-\frac12u^2}du$ ($x\in\R$).

To simplify the presentation we will oftentimes use~$W$ as an alternative (shortened) notation for~$W_{1:N}$~---~the end-to-end matrix of a linear neural network.
We will also use~$L(\cdot)$ as shorthand for~$L^1(\cdot)$~---~the loss associated with a (directly parameterized) linear model, \ie~$L(W) := \frac12\norm{W-\Phi}_F^2$.
Therefore, in the context of gradient descent training a linear neural network, the following expressions all represent the loss at iteration $t$:
\[
\ell(t)=L^N(W_1(t),\ldots,W_N(t))=L^1(W_{1:N}(t)) = L^1(W(t)) = L(W(t)) = \frac12\norm{W(t)-\Phi}_F^2\,.
\]
Also, for weights $W_j\in\R^{d_j\times{d}_{j-1}}$, $j=1,\ldots,{N}$ of a linear neural network, we generalize the notation~$W_{1:N}$, and define $W_{j:j'}:=W_{j'}W_{j'-1}\cdots{W}_j$ for every $1\leq{j}\leq{j'}\leq{N}$.
Note that $W_{j:j'}^{\top}=W_{j}^{\top}W_{j+1}^{\top}\cdots{W}_{j'}^{\top}$.
Then, by a simple gradient calculation, the gradient descent updates (\ref{eq:gd}) can be written as
\begin{equation}
\label{eq:wjupdate}
W_j(t+1) = W_j(t) - \eta W_{j+1:N}^\top(t) \cdot \frac{dL}{dW}(W(t)) \cdot W_{1:j-1}^\top(t) \quad , 1\le j \le N\,,
\end{equation}
where we define $W_{1:0}(t) := I_{d_0}$ and $W_{N+1:N}(t) := I_{d_N}$ for completeness.

Finally, recall the standard definition of the tensor product of two matrices (also known as the Kronecker product):
 for matrices $A \in \BR^{m_A \times n_A}, B \in \BR^{m_B \times n_B}$, their tensor product $A \otimes B \in \BR^{m_A m_B \times n_A n_B}$ is defined as 
	$$
	A \otimes B = \left ( \begin{matrix}
	a_{1,1} B& \cdots &a_{1,n_A} B \\
	\vdots & \ddots & \vdots \\
	a_{m_A,1} B & \cdots & a_{m_A,n_A} B 
	\end{matrix} \right),
	$$
	where $a_{i,j}$ is the element in the $i$-th row and $j$-th column of $A$.

\subsection{Proof of Claim~\ref{claim:margin_interp}} \label{app:proofs:margin_interp}

\begin{proof}
Recall that for any matrices $A$ and~$B$ of compatible sizes $\sigma_{min}(A+B)\geq\sigma_{min}(A)-\sigma_{max}(B)$, and that the Frobenius norm of a matrix is always lower bounded by its largest singular value (\citet{horn1990matrix}).
Using these facts, we have:
\beas
&&\sigma_{min}(W')=\sigma_{min}\big(\Phi+(W'-\Phi)\big)\geq\sigma_{min}(\Phi)-\sigma_{max}(W'-\Phi) \\[0.5mm]
&&~\qquad\qquad\geq\sigma_{min}(\Phi)-\norm{W'-\Phi}_F\geq\sigma_{min}(\Phi)-\norm{W-\Phi}_F \\[1mm]
&&~\qquad\qquad\geq\sigma_{min}(\Phi)-(\sigma_{min}(\Phi)-c)=c\,.
\eeas
\end{proof}

\subsection{Proof of Lemma~\ref{lemma:descent}} \label{app:proofs:descent}

To prove Lemma \ref{lemma:descent}, we will in fact prove a stronger result, Lemma \ref{lem:remainsmooth} below, which states that for each iteration $t$, in addition to (\ref{eq:descent}) being satisfied, certain other properties are also satisfied, namely:
\emph{(i)} the weight matrices $W_1(t), \ldots, W_N(t)$ are $2\delta$-balanced, and
\emph{(ii)} $W_1(t), \ldots, W_N(t)$ have bounded spectral norms.

\begin{lemma}
  \label{lem:remainsmooth}
  Suppose the conditions of Theorem \ref{theorem:converge} are satisfied. Then for all $t \in \BN \cup \{0\}$,
\begin{enumerate}
\item[($\MA(t)$)] For $1 \leq j \leq N-1$, $\| W_{j+1}^\top(t) W_{j+1}(t) - W_j(t) W_j^\top(t) \|_F \leq 2\delta$.
\item[($\MA'(t)$)] If $t \geq 1$, then for $1 \leq j \leq N-1$,
  \begin{eqnarray}
    \label{eq:sumbalancet}
    &&\| W_{j+1}^\top(t) W_{j+1}(t) - W_j(t) W_j^\top(t) \|_F \nonumber\\
    &\leq& \| W_{j+1}^\top(t-1) W_{j+1}(t-1) - W_j(t-1) W_j^\top(t-1) \|_F \nonumber \\
                                                              && + \eta^2 \left\| \frac{dL^1}{dW} W(t-1)\right\|_F \cdot \left\| \frac{dL^1}{dW} W(t-1) \right\|_\sigma \cdot 4 \cdot (2 \| \Phi\|_F)^{2(N-1)/N}\nonumber.
  \end{eqnarray}
\item[($\MB(t)$)] If $t = 0$, then $\ell(t) \leq \frac 12 \| \Phi\|_F^2$. If $t \geq 1$, then   \begin{equation*}
     \ell(t) \leq 
     \ell(t-1)  -\frac \eta 2 \sigma_{min}(W(t-1))^{\frac{2(N-1)}{N}} \left\| \frac{dL^1}{dW} (W(t-1)) \right\|_F^2.\quad
  \end{equation*}
\item[($\MC(t)$)] For $1 \leq j \leq N$, $\| W_j(t) \|_\sigma \leq (4 \| \Phi\|_F)^{1/N}$. 
\end{enumerate}
\end{lemma}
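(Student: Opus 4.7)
The proof is by simultaneous strong induction on~$t$, establishing $\MA(t),\MA'(t),\MB(t),\MC(t)$ together. The main work is the inductive step, where the four statements interlock: $\MA'(t+1)$ uses $\MC(t)$ and $\MB(t)$; $\MA(t+1)$ sums up $\MA'(1),\ldots,\MA'(t+1)$ and uses $\MB(s)$ for $s\le t$ to turn the sum into a convergent geometric series; $\MB(t+1)$ uses $\MA(t)$ and $\MC(t)$ to control the discretization error around the gradient-flow argument already given in the body of the paper; and $\MC(t+1)$ combines $\MA(t+1)$ with the loss decrease $\MB(s),s\le t+1$ via an ``approximate $N$-th root'' lemma relating singular values of $W_j$ to those of $W_{1:N}$.

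For the base case $t=0$, $\MA(0)$ is immediate from the $\delta$-balancedness at initialization (and $\delta\le 2\delta$), $\MA'(0)$ is vacuous, $\MB(0)$ follows from the deficiency margin $\|W(0)-\Phi\|_F\le\sigma_{min}(\Phi)\le\|\Phi\|_F$, and $\MC(0)$ follows by combining the triangle-inequality bound $\|W(0)\|_F\le 2\|\Phi\|_F$ with the ``approximate $N$-th root'' consequence of $\delta$-balancedness (a lemma I would state and prove separately), with constants chosen so that the extra slack in $(4\|\Phi\|_F)^{1/N}$ absorbs the approximation error under the choice of $\delta$ in Theorem~\ref{theorem:converge}.

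For the inductive step, I would first prove $\MA'(t+1)$. Expanding the update~\eqref{eq:wjupdate} and computing $W_{j+1}^\top(t+1)W_{j+1}(t+1)-W_j(t+1)W_j^\top(t+1)$, the first-order-in-$\eta$ terms collapse identically:
\[
  W_{j+1:N}^\top(t)\,(W(t)-\Phi)\,W_{1:j}^\top(t)+W_{1:j}(t)\,(W(t)-\Phi)^\top W_{j+1:N}(t)
\]
appears with the same sign in both $W_{j+1}^\top W_{j+1}$ and $W_jW_j^\top$ and hence cancels. The remaining $O(\eta^2)$ terms are products of the partial gradients with $W_{1:j},W_{j+1:N},W_j,W_{j+1}$, which I would bound in Frobenius/operator norm using $\MC(t)$ (giving $\|W_{1:j}\|_\sigma,\|W_{j+1:N}\|_\sigma\le(4\|\Phi\|_F)^{(j\,\text{or}\,N-j)/N}$) together with the identity $\frac{dL^1}{dW}(W(t))=W(t)-\Phi$ from the proof of Theorem~\ref{theorem:converge}; the product of one Frobenius-norm factor and one spectral-norm factor matches the form in $\MA'(t+1)$. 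Then $\MA(t+1)$ is obtained by telescoping: summing the increments from $\MA'(s)$ for $s=1,\ldots,t+1$, bounding $\|\frac{dL^1}{dW}(W(s))\|_F^2 = 2\ell(s)$, $\|\frac{dL^1}{dW}(W(s))\|_\sigma\le\|\frac{dL^1}{dW}(W(s))\|_F$, and using $\MB$ (already established up to time $t$) to get geometric decay $\ell(s)\le\ell(0)(1-\eta c^{2(N-1)/N})^s\le\frac12\|\Phi\|_F^2$, the total extra balance is at most $\delta$ by the stated choice of $\eta$, so the running total stays $\le 2\delta$.

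The hard step is $\MB(t+1)$, the discrete descent bound, and this is where I expect the main technical work. I would Taylor-expand $L(W(t+1))=L(W(t))+\langle W(t)-\Phi,W(t+1)-W(t)\rangle+\frac12\|W(t+1)-W(t)\|_F^2$ and handle the two terms separately. For the first-order term I would replicate the gradient-flow argument in the body, but with a discrete, approximately-balanced version: using $\MA(t)$ I would show that
\[
  \mathrm{vec}(W(t+1)-W(t))=-\eta\,\widetilde P_{W(t)}\,\mathrm{vec}(W(t)-\Phi)+\epsbf,
\]
where $\widetilde P_{W(t)}$ is a positive-semidefinite matrix whose smallest eigenvalue is at least $\sigma_{min}(W(t))^{2(N-1)/N}$ up to an error controlled by $\delta$ and $\MC(t)$, and $\epsbf$ is a higher-order term arising from the nonlinearity of the end-to-end map $(W_1,\ldots,W_N)\mapsto W_{1:N}$ in each $W_j$ (concretely an $O(\eta^2)$ term whose Frobenius norm is bounded via $\MC(t)$). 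The quadratic term $\frac12\|W(t+1)-W(t)\|_F^2$ is bounded by $\frac12\eta^2\|\frac{dL^1}{dW}\|_F^2$ times a depth-dependent power of $(4\|\Phi\|_F)$ coming from $\MC(t)$. Plugging in the bound on $\eta$ from~\eqref{eq:descent_eta} makes both the balancedness error and the discretization error absorb into a factor of $1/2$, leaving exactly the bound of $\MB(t+1)$.

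Finally $\MC(t+1)$ follows by combining $\MB$ (monotone decrease of $\ell$) giving $\|W(t+1)-\Phi\|_F\le\|W(0)-\Phi\|_F\le\|\Phi\|_F$, hence $\|W(t+1)\|_F\le 2\|\Phi\|_F$, with the ``approximate $N$-th root'' lemma applied to the $2\delta$-balanced tuple $W_1(t+1),\ldots,W_N(t+1)$ from $\MA(t+1)$: this yields $\|W_j(t+1)\|_\sigma\le(2\|\Phi\|_F)^{1/N}(1+o(1))\le(4\|\Phi\|_F)^{1/N}$ for the stated $\delta$. The main obstacle, as noted, is the spectral comparison in the discrete first-order term of $\MB(t+1)$; the remaining pieces are algebraic manipulations tuned by the constants in~\eqref{eq:descent_eta} and the definition of $\delta$.
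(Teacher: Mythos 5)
Your plan follows essentially the same route as the paper's actual proof: a simultaneous induction on $\MA,\MA',\MB,\MC$ with the same dependency structure, the same cancellation of first-order terms in the balancedness evolution, the same telescoping of $\MA'$ controlled by summing the descent inequalities $\MB$, the same ``approximate $N$-th root'' lemma for $\MC$, and the same single-step descent argument (smoothness expansion plus a positive semidefinite preconditioner whose smallest eigenvalue is $\sigma_{min}(W(t))^{2(N-1)/N}$ up to errors controlled by $\delta$ and $\eta$). The only cosmetic difference is that you bound the sum in $\MA(t+1)$ via the geometric decay of $\ell(s)$ rather than by summing the $\MB$ inequalities directly, which is equivalent.
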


First we observe that Lemma \ref{lemma:descent} is an immediate consequence of Lemma \ref{lem:remainsmooth}.
\begin{proof}[Proof of Lemma \ref{lemma:descent}]
  Notice that condition $\MB(t)$ of Lemma \ref{lem:remainsmooth} for each $t \geq 1$ immediately establishes the conclusion of Lemma \ref{lemma:descent} at time step $t-1$.
\end{proof}

\subsubsection{Preliminary lemmas}

We next prove some preliminary lemmas which will aid us in the proof of Lemma \ref{lem:remainsmooth}. The first is a matrix inequality that follows from Lidskii's theorem. For a matrix $A$, let $\Sing(A)$ denote the rectangular diagonal matrix {of the same size,} whose diagonal elements are the singular values of $A$ arranged in non-increasing order (starting from the $(1,1)$ position). 

\begin{lemma}[\cite{bhatia_matrix_1997}, Exercise IV.3.5]
  \label{lemma:bhatiaexc}
For any two matrices $A,B$ of the same size, $\| \Sing(A) - \Sing(B) \|_\sigma \leq \| A-B \|_\sigma$ and $\| \Sing(A) - \Sing(B) \|_F \leq \| A-B\|_F$. 
\end{lemma}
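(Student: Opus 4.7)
My plan is to reduce both inequalities to their Hermitian analogues via the Jordan--Wielandt dilation, and then invoke the Weyl inequality (for the spectral norm) and the Hoffman--Wielandt theorem (for the Frobenius norm), both of which follow from Lidskii's theorem as stated in the reference.

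First, given $A, B$ of common size $m \times n$, I would form the Hermitian dilations
\[
\tilde A = \begin{pmatrix} 0 & A \\ A^\top & 0 \end{pmatrix}, \qquad \tilde B = \begin{pmatrix} 0 & B \\ B^\top & 0 \end{pmatrix},
\]
each of size $(m+n) \times (m+n)$. A standard computation shows that the eigenvalues of $\tilde A$ are $\pm \sigma_i(A)$ for $i = 1, \ldots, \min\{m,n\}$ together with $|m-n|$ zeros, and analogously for $\tilde B$. Arranged in decreasing order, the top $\min\{m,n\}$ eigenvalues of $\tilde A$ coincide with the diagonal of $\Sing(A)$, the middle $|m-n|$ are all zero, and the bottom $\min\{m,n\}$ are the negatives of the diagonal of $\Sing(A)$ listed in reverse; likewise for $\tilde B$. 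Two elementary identities for the dilation will drive the argument: $\|\tilde A - \tilde B\|_F^2 = 2\|A-B\|_F^2$, and $\|\tilde A - \tilde B\|_\sigma = \|A-B\|_\sigma$ (the latter because $\tilde A - \tilde B$ is itself the Jordan--Wielandt dilation of $A - B$, so its spectral radius equals the top singular value of $A-B$).

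For the spectral-norm bound, Weyl's inequality gives $|\lambda_i(\tilde A) - \lambda_i(\tilde B)| \leq \|\tilde A - \tilde B\|_\sigma$ for every index $i$; maximizing over the indices corresponding to the top $\min\{m,n\}$ eigenvalues and substituting the dilation identity yields $\|\Sing(A) - \Sing(B)\|_\sigma \leq \|A-B\|_\sigma$. For the Frobenius-norm bound, the Hoffman--Wielandt theorem (a standard corollary of Lidskii's theorem for Hermitian matrices) gives $\sum_i (\lambda_i(\tilde A) - \lambda_i(\tilde B))^2 \leq \|\tilde A - \tilde B\|_F^2$. By the spectral symmetry described above, the left side equals $2\|\Sing(A) - \Sing(B)\|_F^2$, while the right side equals $2\|A-B\|_F^2$; dividing by $2$ finishes the proof.

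The only subtle point I anticipate is the bookkeeping when $m \neq n$: I must verify that the zero eigenvalues of $\tilde A$ and $\tilde B$ line up with each other in the decreasing order (so that the $|m-n|$ middle indices contribute nothing to the Frobenius sum), and that the positive and negative halves of the two spectra pair up with matching signs. This is routine once the spectral decomposition of the dilation is written out, but it is the step where sign and index conventions have to be handled carefully.
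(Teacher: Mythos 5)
Your proof is correct. Note that the paper does not actually prove this lemma---it is cited verbatim as Exercise IV.3.5 of Bhatia, with only the remark that it ``follows from Lidskii's theorem''---so there is no in-paper argument to compare against; your job here was to supply the standard derivation, and you did. The Jordan--Wielandt dilation combined with Weyl's inequality (spectral norm) and Hoffman--Wielandt for Hermitian matrices (Frobenius norm) is exactly the canonical route, and it is consistent with the paper's attribution to Lidskii, since both of those eigenvalue perturbation bounds are corollaries of Lidskii's theorem. The bookkeeping you flag for $m \neq n$ does work out: sorting the spectrum of $\tilde A$ decreasingly places $\sigma_1(A) \geq \cdots \geq \sigma_p(A) \geq 0$ in the first $p = \min\{m,n\}$ slots, the $|m-n|$ zeros in the middle slots for both $\tilde A$ and $\tilde B$ (so they contribute nothing), and $-\sigma_p(A), \ldots, -\sigma_1(A)$ at the bottom, so the positive and negative halves pair with matching signs and the Frobenius sum doubles exactly as you claim. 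One small point worth making explicit if you write this up: the Hoffman--Wielandt theorem for general normal matrices only guarantees the existence of \emph{some} permutation achieving the bound, but for Hermitian matrices the monotone (decreasing-order) pairing is optimal, which is precisely the form you need and is the content of the Lidskii-type refinement.
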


Using Lemma \ref{lemma:bhatiaexc}, we get:
\begin{lemma}
\label{lem:rearrange}
Suppose $D_1, D_2 \in \BR^{d \times d}$ are non-negative diagonal matrices with non-increasing values along the diagonal and $O \in \BR^{d \times d}$ is an orthogonal matrix. Suppose that $\| D_1 - OD_2O^\top\|_F \leq \ep$, for some $\ep > 0$. Then:
\begin{enumerate}
\item $\| D_1 - OD_1O^\top \|_F \leq 2\ep$.
\item $\| D_1 - D_2 \|_F \leq \ep$.
\end{enumerate}
\end{lemma}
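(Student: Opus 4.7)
The plan is to derive both bounds directly from Lemma~\ref{lemma:bhatiaexc} together with the orthogonal invariance of the Frobenius norm. I would begin with part (2), since part (1) will follow easily once (2) is in hand.

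For part (2), the key observation is that $\Sing(D_1) = D_1$ because $D_1$ is non-negative diagonal with its singular values already arranged in non-increasing order down the diagonal, and similarly $\Sing(OD_2O^\top) = \Sing(D_2) = D_2$ because conjugation by an orthogonal matrix preserves singular values, and $D_2$ already has its singular values in non-increasing diagonal order. Applying Lemma~\ref{lemma:bhatiaexc} to the pair $(D_1, OD_2O^\top)$ then yields
\[
\| D_1 - D_2\|_F \;=\; \| \Sing(D_1) - \Sing(OD_2O^\top)\|_F \;\leq\; \| D_1 - OD_2O^\top \|_F \;\leq\; \ep.
\]

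For part (1), I would apply the triangle inequality and orthogonal invariance of the Frobenius norm. Writing
\[
\| D_1 - OD_1 O^\top\|_F \;\leq\; \| D_1 - OD_2 O^\top\|_F + \| O D_2 O^\top - OD_1 O^\top\|_F,
\]
the first term is bounded by $\ep$ by hypothesis, and the second term equals $\| D_2 - D_1\|_F$ since conjugation by an orthogonal matrix is an isometry in Frobenius norm; the latter is at most $\ep$ by part (2), giving the desired bound of $2\ep$.

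There is no real obstacle here: the whole argument is a two-line consequence of Lemma~\ref{lemma:bhatiaexc} and the isometry property of orthogonal conjugation. The only subtlety worth being careful about is verifying that $\Sing(OD_2O^\top) = D_2$ (not some permutation), which relies on the combined assumptions that $D_2$ is non-negative and its diagonal is already in non-increasing order, so no rearrangement is needed to put its singular values into canonical form.
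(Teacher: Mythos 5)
Your proof is correct and follows essentially the same route as the paper's: part (2) via Lemma~\ref{lemma:bhatiaexc} after identifying the singular values of $D_1$ and $OD_2O^\top$ with the diagonals of $D_1$ and $D_2$, then part (1) by the triangle inequality and orthogonal invariance of the Frobenius norm.
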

\begin{proof}
  Since $D_1$ and $OD_2O^T$ are both symmetric positive semi-definite matrices, their singular values are equal to their eigenvalues. Moreover, the singular values of $D_1$ are simply its diagonal elements and the singular values of $OD_2O^T$ are simply the diagonal elements of $D_2$. Thus by Lemma \ref{lemma:bhatiaexc} we get that $\|D_1 - D_2 \|_F \leq \| D_1 - OD_2O^T\|_F \leq \ep$. Since the Frobenius norm is unitarily invariant, $\| D_1 - D_2 \|_F = \| OD_1O^T - OD_2O^T \|_F$, and by the triangle inequality it follows that
  \begin{align*}
\| D_1 - OD_1O^T\|_F \leq \| OD_1O^T - OD_2O^T\|_F + \| D_1 - OD_2O^T\|_F \leq 2\ep.
  \end{align*}
\end{proof}

Lemma \ref{lem:boundcommute} below states that if $W_1, \ldots, W_N$ are approximately balanced matrices, \ie~$W_{j+1}^\top W_{j+1} - W_j W_j^\top$ has small Frobenius norm for $1 \leq j \leq N-1$, then we can bound the Frobenius distance between $W_{1:j}^\top W_{1:j}$ and $(W_1^\top W_1)^j$ (as well as between $W_{j:N}W_{j:N}^\top$ and $(W_NW_N^\top)^{N-j+1}$).

\begin{lemma}
\label{lem:boundcommute}
Suppose that $d_N \leq d_{N-1}, d_0 \leq d_1$, and that for some $\nu > 0,M>0$, the matrices $W_j \in \BR^{d_j \times d_{j-1}}$, $1 \leq j \leq N$ satisfy, for $1 \leq j \leq N-1$,
\begin{equation}
\label{eq:pcndiff}
\| W_{j+1}^\top W_{j+1} - W_j W_j^\top \|_F \leq \nu,
\end{equation}
and for $1 \leq j \leq N$, $\| W_j\|_\sigma \leq M$. Then, for $1 \leq j \leq N$,
\begin{equation}
  \label{eq:wj1w}
\| W_{1:j}^\top W_{1:j} - (W_1^\top W_1)^{j} \|_F \leq \frac 32 \nu \cdot M^{2(j-1)} j^2,
\end{equation}
and
\begin{equation}
  \label{eq:wjnw}
\| W_{j:N} W_{j:N}^\top - (W_NW_N^\top)^{N-j+1}\|_F \leq \frac 32 \nu \cdot M^{2(N-j)} (N-j+1)^2.
\end{equation}
Moreover, if $ \sigma_{min}$ denotes the minimum singular value of $W_{1:N}$, $\sigma_{1,min}$ denotes the minimum singular value of $W_1$ and $\sigma_{N,min}$ denotes the minimum singular value of $W_N$, then
\begin{equation}
  \label{eq:singvallb}
   \sigma_{min}^2 - \frac 32 \nu M^{2(N-1)} N^2 \leq 
  \begin{cases}
    \sigma_{N,min}^{2N} \quad : \quad d_N \geq d_0.\\
    \sigma_{1,min}^{2N} \quad : \quad d_N \leq d_0.
    \end{cases}
\end{equation}
\end{lemma}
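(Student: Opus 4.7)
The plan is to prove the two Frobenius bounds (\ref{eq:wj1w}) and (\ref{eq:wjnw}) by a telescoping argument that repeatedly swaps $W_{i+1}^\top W_{i+1}$ for $W_i W_i^\top$ using the approximate balancedness hypothesis (\ref{eq:pcndiff}), and then to derive the singular value bound (\ref{eq:singvallb}) by invoking Lemma \ref{lemma:bhatiaexc} on the positive semi-definite matrices involved. The second Frobenius bound (\ref{eq:wjnw}) will follow from (\ref{eq:wj1w}) by applying it to the reversed-transposed network $V_i := W_{N-i+1}^\top$, $i = 1, \ldots, N$, which inherits the same balancedness parameter $\nu$ and spectral bound $M$ and satisfies $V_{1:j} = W_{N-j+1:N}^\top$; hence the main work lies in proving (\ref{eq:wj1w}).

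For (\ref{eq:wj1w}) the key algebraic identity is
\[
W_{1:i-1}^\top (W_i^\top W_i)^k W_{1:i-1} \;=\; W_{1:i}^\top (W_i W_i^\top)^{k-1} W_{1:i} \qquad (i \ge 1,\; k \ge 1),
\]
which shifts a copy of $W_i$ onto each side of a central power. I will track matrices $P_0, P_1, \ldots, P_{j-1}$ with $P_0 = W_{1:j}^\top W_{1:j}$ and $P_{j-1} = (W_1^\top W_1)^j$: at step $k$, I write $P_{k-1} = W_{1:j-k}^\top (W_{j-k+1}^\top W_{j-k+1})^{k} W_{1:j-k}$, substitute the inner factor by $(W_{j-k} W_{j-k}^\top)^{k}$ to obtain $P_k$, and re-expand via the identity into the form $P_k = W_{1:j-k-1}^\top (W_{j-k}^\top W_{j-k})^{k+1} W_{1:j-k-1}$, which sets up the next step. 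The substitution error at step $k$ is controlled by $\|A^m - B^m\|_F \leq m \max(\|A\|_\sigma, \|B\|_\sigma)^{m-1} \|A-B\|_F$ together with $\|W_i\|_\sigma \leq M$ and (\ref{eq:pcndiff}), yielding $\|P_k - P_{k-1}\|_F \leq \|W_{1:j-k}\|_\sigma^2 \cdot k M^{2(k-1)} \nu \leq k M^{2(j-1)} \nu$. Summing over $k = 1, \ldots, j-1$ gives the telescoping estimate $\tfrac{(j-1)j}{2} M^{2(j-1)} \nu$, which is within the stated $\tfrac{3}{2} \nu M^{2(j-1)} j^2$.

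For (\ref{eq:singvallb}) I apply Lemma \ref{lemma:bhatiaexc} to the pairs produced by (\ref{eq:wj1w}) at $j = N$ and (\ref{eq:wjnw}) at $j = 1$. Since both sides are symmetric positive semi-definite, their singular values coincide with eigenvalues in non-increasing order, and the lemma converts the Frobenius bound into a uniform estimate of $\tfrac{3}{2} \nu M^{2(N-1)} N^2$ on the gap between corresponding eigenvalues of $W_{1:N}^\top W_{1:N}$ versus $(W_1^\top W_1)^N$, and of $W_{1:N} W_{1:N}^\top$ versus $(W_N W_N^\top)^N$. I expect the main obstacle to be the case analysis that selects between these two comparisons and the appropriate eigenvalue index: depending on whether $d_N \geq d_0$ or $d_N \leq d_0$, one has to line up the position of $\sigma_{min}(W_{1:N})^2$ in the ordered spectrum with a position where $\sigma_{N,min}^{2N}$ or $\sigma_{1,min}^{2N}$ actually appears on the other side (accounting for the $|d_N - d_0|$ zero eigenvalues on the rank-deficient side), after which (\ref{eq:singvallb}) follows by a Weyl-type rearrangement.
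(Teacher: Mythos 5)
Your proof is correct, and for the two Frobenius bounds it takes a genuinely different and cleaner route than the paper. The paper writes out the SVD $W_i=U_i\Sigma_iV_i^\top$ of every layer, converts balancedness into a bound on the diagonal difference $\|\Sigma_{i}^\top\Sigma_{i}-\Sigma_{i-1}\Sigma_{i-1}^\top\|_F\le\nu$ plus a commutator bound $\|[\Sigma_{i-1}\Sigma_{i-1}^\top,V_i^\top U_{i-1}]\|_F\le 2\nu$ (via Lemma \ref{lem:rearrange}), and then telescopes through the intermediate quantities $W_{i:N}U_{i-1}(\Sigma_{i-1}\Sigma_{i-1}^\top)^{i-j}U_{i-1}^\top W_{i:N}^\top$, paying $3\nu(i-j)M^{2(i-j-1)}$ per step ($2\nu$ from the commutator, $\nu$ from the diagonal powers). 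Your telescoping through $P_k=W_{1:j-k}^\top(W_{j-k+1}^\top W_{j-k+1})^kW_{1:j-k}$ avoids SVDs entirely: the exact identity $W_i^\top(W_iW_i^\top)^{k-1}W_i=(W_i^\top W_i)^k$ does the re-indexing, and the general (non-commuting) estimate $\|A^m-B^m\|_F\le m\max(\|A\|_\sigma,\|B\|_\sigma)^{m-1}\|A-B\|_F$, applied with $A=W_{j-k+1}^\top W_{j-k+1}$, $B=W_{j-k}W_{j-k}^\top$, gives $\|P_k-P_{k-1}\|_F\le kM^{2(j-1)}\nu$ and hence the total $\tfrac{j(j-1)}{2}\nu M^{2(j-1)}$ — a factor-of-3 improvement over the stated constant, which of course still implies \eqref{eq:wj1w}. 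The reduction of \eqref{eq:wjnw} to \eqref{eq:wj1w} via $V_i:=W_{N-i+1}^\top$ is exactly the paper's move, and your derivation of \eqref{eq:singvallb} from Lemma \ref{lemma:bhatiaexc} applied to the symmetric PSD pairs is also the paper's argument; your attention to which Gram matrix to use and which eigenvalue index carries $\sigma_{min}^2$ is warranted — note the paper's own proof bounds $\sigma_{N,min}^{2N}$ in the case $d_N\le d_0$ (comparing the $d_N\times d_N$ matrices $W_{1:N}W_{1:N}^\top$ and $(W_NW_N^\top)^N$, where no zero eigenvalues intervene) and $\sigma_{1,min}^{2N}$ in the case $d_0\le d_N$, so the correct alignment is the one where the smaller of $d_0,d_N$ determines which endpoint layer's Gram matrix is full-size.
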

\begin{proof}
For $1 \leq j \leq N$, let us write the singular value decomposition of $W_j$ as $W_j = U_j \Sigma_j V_j^\top$, where $U_j\in \BR^{d_j\times d_j}$ and $V_j\in \BR^{d_{j-1}\times d_{j-1}}$ are orthogonal matrices and $\Sigma_j\in \BR^{d_j\times d_{j-1}}$ is diagonal. We may assume without loss of generality that the singular values of $W_j$ are non-increasing along the diagonal of $\Sigma_j$. Then we can write (\ref{eq:pcndiff}) as
$$
\| V_{j+1}\Sigma_{j+1}^\top \Sigma_{j+1} V_{j+1}^\top - U_j \Sigma_j \Sigma_j^\top U_j^\top \|_F \leq \nu.
$$
Since the Frobenius norm is invariant to orthogonal transformations, we get that
$$
\| \Sigma_{j+1}^\top\Sigma_{j+1} - V_{j+1}^\top U_j \Sigma_j \Sigma_j^\top U_j^\top V_{j+1} \|_F \leq \nu.
$$
By Lemma \ref{lem:rearrange}, we have that $\| \Sigma_{j+1}^\top \Sigma_{j+1} - \Sigma_j \Sigma_j^\top \|_F \leq \nu$ and $\| \Sigma_j \Sigma_j^\top - V_{j+1}^\top U_j \Sigma_j \Sigma_j^\top U_j^\top V_{j+1} \|_F \leq 2\nu$. We may rewrite the latter of these two inequalities as
$$
\| [\Sigma_j \Sigma_j^\top ,V_{j+1}^\top U_j] \|_F = \| [\Sigma_j \Sigma_j^\top, V_{j+1}^\top U_j]U_j^\top V_{j+1} \|_F =\| \Sigma_j \Sigma_j^\top- V_{j+1}^\top U_j \Sigma_j \Sigma_j^\top U_j^\top V_{j+1} \|_F \leq 2\nu.
$$
Note that
$$
W_{j:N} W_{j:N}^\top = W_{j+1:N} U_j\Sigma_j\Sigma_j^\top U_j^\top W_{j+1:N}^\top.
$$
For matrices $A,B$, we have that $\| AB\|_F \leq \| A \|_\sigma \cdot \| B \|_F$. Therefore, for $j+1 \leq i \leq N$, we have that
\begin{eqnarray}
&& \| W_{i:N} U_{i-1} (\Sigma_{i-1} \Sigma_{i-1}^\top)^{i-j} U_{i-1}^\top W_{i:N}^\top - W_{i+1:N} U_{i} (\Sigma_{i}\Sigma_{i}^\top)^{i-j+1} U_{i}^\top W_{i+1:N}^\top\|_F \nonumber\\
& =& \| W_{i+1:N}U_{i} \left(\Sigma_{i}V_{i}^\top U_{i-1} (\Sigma_{i-1} \Sigma_{i-1}^\top)^{i-j} U_{i-1}^\top V_{i}\Sigma_{i}^\top - (\Sigma_{i}\Sigma_{i}^\top)^{i-j+1}  \right) U_{i}^\top W_{i+1:N}^\top \|_F \nonumber\\
&\leq&  \| W_{i+1:N} U_{i}\Sigma_{i}\|_\sigma^2 \cdot \| (\Sigma_{i-1} \Sigma_{i-1}^\top)^{i-j} + [V_{i}^\top U_{i-1}, (\Sigma_{i-1}\Sigma_{i-1}^\top)^{i-j}]U_{i-1}^\top V_{i} - (\Sigma_{i}^\top\Sigma_{i})^{i-j}\|_F\nonumber\\
& \leq & \| W_{i:N}\|_\sigma^2 \left(  \| [V_{i}^\top U_{i-1}, (\Sigma_{i-1} \Sigma_{i-1}^\top)^{i-j}]\|_F + \|  (\Sigma_{i-1}\Sigma_{i-1}^\top)^{i-j}  - (\Sigma_{i}^\top \Sigma_{i})^{i-j}\|_F\right)\nonumber.
\end{eqnarray}
Next, we have that
\begin{eqnarray}
 \| [V_{i}^\top U_{i-1}, (\Sigma_{i-1} \Sigma_{i-1}^\top)^{i-j}]\|_F & \leq & \sum_{k=0}^{i-j-1} \| (\Sigma_{i-1} \Sigma_{i-1}^\top)^k [V_i^\top U_{i-1}, \Sigma_{i-1} \Sigma_{i-1}^\top]( \Sigma_{i-1} \Sigma_{i-1}^\top)^{i-j-1-k}\|_F \nonumber\\
 & \leq & \sum_{k=0}^{i-j-1} \| (\Sigma_{i-1} \Sigma_{i-1}^\top)^{i-j-1}\|_\sigma \cdot \| [V_i^\top U_{i-1} ,\Sigma_{i-1} \Sigma_{i-1}^\top]\|_F\nonumber\\
 & \leq & (i-j) \| W_{i-1} \|_\sigma^{2(i-j-1)} \cdot 2\nu\nonumber.
\end{eqnarray}
We now argue that $\| (\Sigma_{i-1}\Sigma_{i-1}^\top)^k - (\Sigma_i^\top \Sigma_i)^k\|_F \leq \nu \cdot kM^{2(k-1)}$. Note that $\| \Sigma_{i-1} \Sigma_{i-1}^\top - \Sigma_i^\top \Sigma_i\|_F \leq \nu$, verifying the case $k=1$. To see the general case, since square diagonal matrices commute, we have that
\begin{eqnarray}
\| (\Sigma_{i-1}\Sigma_{i-1}^\top)^k - (\Sigma_i^\top \Sigma_i)^k\|_F&=& \left\| (\Sigma_{i-1}\Sigma_{i-1}^\top - \Sigma_i^\top \Sigma_i)\cdot \left( \sum_{\ell=0}^{k-1} (\Sigma_{i-1}\Sigma_{i-1}^\top)^\ell (\Sigma_i^\top \Sigma_i)^{k-1-\ell}\right)\right\|_F\nonumber\\
& \leq & \nu \cdot \sum_{\ell=0}^{k-1} \| W_{i-1}\|_\sigma^{2\ell} \cdot \| W_i\|_\sigma^{2(k-\ell-1)}\nonumber\\
& \leq & \nu k M^{2(k-1)}\nonumber.
\end{eqnarray}
It then follows that
\begin{eqnarray}
&& \| W_{i:N} U_{i-1} (\Sigma_{i-1} \Sigma_{i-1}^\top)^{i-j} U_{i-1}^\top W_{i:N}^\top - W_{i+1:N} U_{i} (\Sigma_{i}\Sigma_{i}^\top)^{i-j+1} U_{i}^\top W_{i+1:N}^\top\|_F \nonumber\\
& \leq & \| W_{i:N}\|_\sigma^2 \cdot \left((i-j) M^{2(i-j-1)} \cdot 2\nu + \nu(i-j) M^{2(i-j-1)}\right)\nonumber\\
& = & \| W_{i:N}\|_\sigma^2 \cdot 3\nu (i-j) M^{2(i-j-1)}\nonumber.
\end{eqnarray}
By the triangle inequality, we then have that
\begin{eqnarray}
&& \| W_{j:N}W_{j:N}^\top - U_N (\Sigma_N \Sigma_N^\top)^{N-j+1} U_N^\top\|_F\nonumber\\
& \leq & \nu \sum_{i=j+1}^N \| W_{i:N}\|_\sigma^2 \cdot 3(i-j) M^{2(i-j-1)}\nonumber\\
& \leq & 3\nu \sum_{i=j+1}^N (i-j) M^{2(N-i+1)} M^{2(i-j-1)}\nonumber\\
\label{eq:diffN}
&=& 3\nu M^{2(N-j)} \sum_{i=j+1}^N (i-j) \leq \frac 32 \nu \cdot M^{2(N-j)} \cdot (N-j+1)^2.
\end{eqnarray}
By an identical argument (formally, by replacing $W_j$ with $W_{N-j+1}^\top$), we get that
\begin{equation}
\label{eq:diff1}
|| W_{1:j}^\top W_{1:j} - V_1 (\Sigma_1^\top \Sigma_1)^{j} V_1^\top\|_F \leq \frac 32\nu \cdot M^{2(j-1)} \cdot j^2.
\end{equation}
(\ref{eq:diffN}) and (\ref{eq:diff1}) verify (\ref{eq:wjnw}) and (\ref{eq:wj1w}), respectively, so it only remains to verify (\ref{eq:singvallb}).

Letting $j=1$ in (\ref{eq:diffN}), we get
\begin{equation}
  \label{eq:w1nundiff}
\| W_{1:N}W_{1:N}^\top - U_N(\Sigma_N \Sigma_N^\top)^{N} U_N^\top\|_F \leq \frac 32\nu \cdot M^{2(N-1)} \cdot N^2.
\end{equation}
Let us write the eigendecomposition of $W_{1:N}W_{1:N}^\top$ with an orthogonal eigenbasis as $W_{1:N}W_{1:N}^\top =  U  \Sigma  U^\top$, where $\Sigma$ is diagonal with its (non-negative) elements arranged in non-increasing order and $U$ is orthogonal. We can write the left hand side of (\ref{eq:w1nundiff}) as $\|  U  \Sigma  U^\top - U_N(\Sigma_N \Sigma_N^\top)^N U_N^\top\|_F = \|  \Sigma -  U^\top U_N (\Sigma_N \Sigma_N^\top)^N U_N^\top  U\|_F$. 

By Lemma \ref{lem:rearrange}, we have that
\begin{equation}
\label{eq:ntildediff}
\|  \Sigma - (\Sigma_N\Sigma_N^\top)^N \|_F \leq \frac 32 \nu M^{2(N-1)} N^2.
\end{equation}
Recall that $W \in \BR^{d_N \times d_0}$. Suppose first that $d_N \leq d_0$. Let $\sigma_{min}$ denote the minimum singular value of $W_{1:N}$ (so that $\sigma_{min}^2$ is the element in the $(d_N, d_N)$ position of $ \Sigma \in \BR^{d_N \times d_N}$), and $\sigma_{N,min}$ denote the minimum singular value (i.e. diagonal element) of $\Sigma_N$, which lies in the $(d_N, d_N)$ position of $\Sigma_N$. (Note that the $(d_N, d_N)$ position of $\Sigma_N \in \BR^{d_N \times d_{N-1}}$ exists since $d_{N-1} \geq d_N$ by assumption.) Then 
$$
(\sigma_{N,min}^{2N} -  \sigma_{min}^2)^2 \leq \left(\frac 32 \nu M^{2(N-1)} N^2\right)^2,
$$
so
$$
\sigma_{N,min}^{2N} \geq  \sigma_{min}^2 - {\frac 32 \nu M^{2(N-1)} N^2}.
$$

By an identical argument using (\ref{eq:diff1}), we get that, in the case that $d_0 \leq d_N$, if $\sigma_{1,min}$ denotes the minimum singular value of $\Sigma_1$, then
$$
\sigma_{1,min}^{2N} \geq  \sigma_{min}^2 - \frac{3}{2} \nu M^{2(N-1)} N^2.
$$
(Notice that we have used the fact that the nonzero eigenvalues of $W_{1:N}W_{1:N}^\top$ are the same as the nonzero eigenvalues of $W_{1:N}^\top W_{1:N}$.) This completes the proof of (\ref{eq:singvallb}).
\end{proof}

Using Lemma \ref{lem:boundcommute}, we next show in Lemma \ref{lem:boundindiv} that if {$W_1,\ldots,W_N$ are approximately balanced,} 
then an upper bound on $\| W_N \cdots W_1\|_\sigma$ implies an upper bound on $\| W_j\|_\sigma$ for $1 \leq j \leq N$. 
\begin{lemma}
\label{lem:boundindiv}
Suppose $\nu, C$ are real numbers satisfying $C > 0$ and $0 < \nu \leq \frac{ C^{2/N}}{30  N^2}$. Moreover suppose that the matrices $W_1, \ldots, W_N$ satisfy the following:
\begin{enumerate}
\item For $1 \leq j \leq N-1$, $\| W_{j+1}^\top W_{j+1} - W_j W_j^\top \|_F \leq \nu$.
\item $\| W_N \cdots W_1 \|_\sigma \leq C$. 
\end{enumerate}
Then for $1 \leq j \leq N$, $\| W_j\|_\sigma \leq C^{1/N} \cdot 2^{1/(2N)}$.
\end{lemma}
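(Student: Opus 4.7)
The plan is to leverage Lemma \ref{lem:boundcommute}, which already bounds the Frobenius distance between $W_{1:N}W_{1:N}^\top$ and $(W_NW_N^\top)^N$ in terms of an assumed spectral-norm upper bound $M$ on the individual layers. The apparent circularity in invoking that lemma---needing $M$ before it applies---will be circumvented by simply setting $M:=\max_j\|W_j\|_\sigma$, which is trivially a valid upper bound, and then extracting from Lemma \ref{lem:boundcommute} a self-referential inequality that forces $M$ to be small. The argument naturally splits into three steps.

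First, I will use Lemma \ref{lemma:bhatiaexc} (Lidskii) to show that the squared spectral norms $\|W_j\|_\sigma^2$ are essentially constant in $j$. Since $W_{j+1}^\top W_{j+1}$ and $W_j W_j^\top$ are positive semidefinite matrices of the same size $d_j\times d_j$ lying within Frobenius distance $\nu$, their top eigenvalues---which are exactly $\|W_{j+1}\|_\sigma^2$ and $\|W_j\|_\sigma^2$---differ by at most $\nu$. Telescoping yields $|\|W_i\|_\sigma^2-\|W_j\|_\sigma^2|\leq(N-1)\nu$ for all $i,j$; in particular $M^2\leq\|W_N\|_\sigma^2+(N-1)\nu$.

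Next, taking this $M$, I will apply the bound (\ref{eq:wjnw}) of Lemma \ref{lem:boundcommute} with $j=1$, combined with the elementary facts $\|\cdot\|_\sigma\leq\|\cdot\|_F$, $\|(W_NW_N^\top)^N\|_\sigma=\|W_N\|_\sigma^{2N}$, and $\|W_{1:N}W_{1:N}^\top\|_\sigma=\|W_{1:N}\|_\sigma^2\leq C^2$, to deduce
$$
\|W_N\|_\sigma^{2N}\leq C^2+\tfrac{3}{2}\nu N^2 M^{2(N-1)}.
$$
Combined with $\|W_N\|_\sigma^{2N}\geq(M^2-(N-1)\nu)^N\geq M^{2N}-N(N-1)\nu M^{2(N-1)}$ (Bernoulli's inequality $(1-y)^N\geq 1-Ny$), this produces the self-referential estimate
$$
M^{2N}\leq C^2+\tfrac{5}{2}N^2\nu\cdot M^{2(N-1)}.
$$

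Finally, the hypothesis $\nu\leq C^{2/N}/(30N^2)$ reduces the coefficient on the right to at most $C^{2/N}/12$, yielding $M^{2N}\leq C^2+C^{2/N}M^{2(N-1)}/12$. Assuming for contradiction that $M^{2N}\geq 2C^2$, we would have $M^2\geq 2^{1/N}C^{2/N}\geq C^{2/N}$, so $C^{2/N}M^{2(N-1)}\leq M^{2N}$, and the inequality collapses to $M^{2N}\leq C^2+M^{2N}/12$, forcing $M^{2N}\leq(12/11)C^2<2C^2$---a contradiction. Hence $M<C^{1/N}\cdot 2^{1/(2N)}$, proving the claim. The main obstacle is this final bootstrap step: closing the loop on an inequality where $M$ appears on both sides, with the smallness of $\nu$ precisely tuned so that the correction term can be absorbed into the left-hand side.
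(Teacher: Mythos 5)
Your proof is correct and follows essentially the same route as the paper's: both set $M=\max_j\|W_j\|_\sigma$, use the near-constancy of $\|W_j\|_\sigma^2$ across layers together with Equation~\eqref{eq:wjnw} of Lemma~\ref{lem:boundcommute} (at $j=1$) to relate $\|W_N\|_\sigma^{2N}$ to $C^2$ up to an error of order $\nu N^2M^{2(N-1)}$, and then close with a bootstrap by contradiction. Your use of Bernoulli's inequality to obtain the clean self-referential bound $M^{2N}\leq C^2+\tfrac{5}{2}N^2\nu M^{2(N-1)}$ is a tidier way to execute the final absorption step than the paper's estimates involving $(1+1/x)^x\leq e$ and $2^{1/N}-(5/4)^{1/N}\geq 1/(30N)$; just note that the Bernoulli step requires $M^2\geq(N-1)\nu$, which holds under your contradiction hypothesis $M^{2N}\geq 2C^2$ (and if it fails, $M^2<N\nu\leq C^{2/N}/(30N)$ makes the conclusion trivial).
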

\begin{proof}
For $1 \leq j \leq N$, let us write the singular value decomposition of $W_j$ as $W_j = U_j \Sigma_j V_j^\top$, where the singular values of $W_j$ are decreasing along the main diagonal of $\Sigma_j$. By Lemma \ref{lem:rearrange}, we have that for $1 \leq j \leq N-1$, $\| \Sigma_{j+1}^\top \Sigma_{j+1} - \Sigma_j \Sigma_j^\top \|_F \leq \nu$, which implies 
that $\left|\| \Sigma_{j+1}^\top \Sigma_{j+1}\|_\sigma  -\| \Sigma_j \Sigma_j^\top \|_\sigma \right| \leq \nu$.

Write $M = \max_{1 \leq j \leq N} \| W_j\|_\sigma = \max_{1 \leq j \leq N} \| \Sigma_j \|_\sigma$. By the above we have that $\| \Sigma_j \Sigma_j^\top\|_\sigma \geq M^2 - N\nu$ for $1 \leq j \leq N$.

Let the singular value decomposition of $W_{1:N}$ be denoted by $W_{1:N} = U\Sigma V^\top$, so that $\| \Sigma \|_\sigma \leq C$. Then by (\ref{eq:wjnw}) of Lemma \ref{lem:boundcommute} and Lemma \ref{lem:rearrange} (see also (\ref{eq:ntildediff}), where the same argument was used), we have that
$$
\| \Sigma\Sigma^\top - (\Sigma_N \Sigma_N^\top)^N \|_F\leq \frac 32 \nu M^{2(N-1)}N^2.
$$
Then
\begin{equation}
  \label{eq:sigman1}
\| (\Sigma_N \Sigma_N^\top)^N\|_\sigma \leq \|\Sigma\Sigma^\top\|_\sigma + \frac 32 \nu M^{(2(N-1))}N^2 \leq \| \Sigma \Sigma^\top \|_\sigma + \frac 32 \nu \left( \| \Sigma_N \Sigma_N^\top\|_\sigma + \nu N\right)^{N-1} N^2.
\end{equation}

Now recall that $\nu$ is chosen so that $\nu \leq \frac{ C^{2/N}}{30 \cdot N^2}.$ Suppose for the purpose of contradiction that there is some $j$ such that $\| W_jW_j^\top \|_\sigma > 2^{1/N} C^{2/N}$. Then it must be the case that
\begin{equation}
  \label{eq:2nabove}
  \| \Sigma_N \Sigma_N^\top\|_\sigma > 2^{1/N} C^{2/N} - \nu \cdot N \geq (5/4)^{1/N} C^{2/N} > \nu \cdot 30N^2,
\end{equation}
where we have used that
$$
2^{1/N} - (5/4)^{1/N} \geq \frac{1}{30N}
$$
for all $N \geq 2$, which follows by considering the Laurent series $\exp(1/z) = \sum_{i=1}^\infty \frac{1}{i! z^i}$, which converges in $|z| > 0$ for $z \in \BC$. 

We now rewrite inequality (\ref{eq:2nabove}) as
\begin{equation}
  \label{eq:boundepcontr}
\nu \leq \frac{\| \Sigma_N \Sigma_N^\top\|_\sigma}{30N^2}.
\end{equation}
Next, using (\ref{eq:boundepcontr}) and $(1+1/x)^x \leq e$ for all $x > 0$,
\begin{equation}
  \label{eq:sigman2}
 \frac 32 \nu \left( \| \Sigma_N \Sigma_N^\top\|_\sigma + \nu N\right)^{N-1} N^2 \leq \frac{e^{1/30}}{20} \cdot \| \Sigma_N \Sigma_N^\top \|_\sigma^{N} < \frac{e}{20} \cdot \| \Sigma_N \Sigma_N^\top \|_\sigma^{N}.
\end{equation}
Since $\| (\Sigma_N \Sigma_N^\top)^N \|_\sigma = \| \Sigma_N \Sigma_N^\top\|_\sigma^N$, we get by combining (\ref{eq:sigman1}) and (\ref{eq:sigman2}) that
$$
\| \Sigma_N \Sigma_N^\top \|_\sigma < (1-e/20)^{-1/N} \cdot \| \Sigma \Sigma^\top\|_\sigma^{1/N} \leq (1-e/20)^{-1/N} \cdot C^{2/N},
$$
and since $1-e/20 > 1/(5/4)$, it follows that $\| \Sigma_N \Sigma_N^\top\|_\sigma < (5/4)^{1/N} C^{2/N}$, which contradicts (\ref{eq:2nabove}). It follows that for all $1 \leq j \leq N$, $\| W_j W_j^\top \|_\sigma \leq 2^{1/N} C^{2/N}$. The conclusion of the lemma then follows from the fact that $\| W_j W_j^\top\|_\sigma = \| W_j\|_\sigma^2$.
\end{proof}

\subsubsection{Single-Step Descent}

Lemma~\ref{lem:boundgdupdate} below states that if certain conditions on $W_1(t),\ldots,W_N(t)$ are met, the sought-after descent~---~Equation~\eqref{eq:descent}~---~will take place at iteration~$t$.
We will later show (by induction) that the required conditions indeed hold for every~$t$, thus the descent persists throughout optimization.
The proof of Lemma~\ref{lem:boundgdupdate} is essentially a discrete, single-step analogue of the continuous proof for Lemma~\ref{lemma:descent} (covering the case of gradient flow) given in Section~\ref{sec:converge}.
\begin{lemma}
  \label{lem:boundgdupdate}
  Assume the conditions of Theorem~\ref{theorem:converge}. 
  Moreover, suppose that for some $t$, the matrices $W_1(t), \ldots, W_N(t)$ and the end-to-end matrix $W(t) := W_{1:N}(t)$ satisfy the following properties:
  \begin{enumerate}
  \item $\| W_j(t)\|_\sigma \leq (4 \| \Phi\|_F)^{1/N}$ for $1 \leq j \leq N$.
  \item $\| W(t) - \Phi\|_\sigma \leq \| \Phi\|_F$.
  \item $\| W_{j+1}^\top(t) W_{j+1}(t) - W_j(t)W_j^\top(t)\|_F \leq 2\delta$ for $1 \leq j \leq N-1$.
  \item ${\sigma}_{min}:={\sigma}_{min}(W(t)) \geq c$.
  \end{enumerate}
  Then, after applying a gradient descent update (\ref{eq:gd}) we have that
  $$
  L(W(t+1)) - L(W(t)) \leq - \frac \eta 2 \sigma_{min}^{2(N-1)/N} \left\| \frac{dL}{dW}(W(t)) \right\|_F^2.
  $$
\end{lemma}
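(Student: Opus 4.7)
The plan is to carry out a single-step discrete analogue of the gradient-flow argument presented for Lemma~\ref{lemma:descent}. The core identity to derive first is an expansion of the end-to-end matrix after one update. Substituting the per-layer update~\eqref{eq:wjupdate} into the product $W(t+1) = W_N(t+1) \cdots W_1(t+1)$ and collecting by powers of~$\eta$, I would write
\[
W(t+1) \;=\; W(t) \;-\; \eta \sum_{j=1}^{N} W_{j+1:N}(t)\,W_{j+1:N}^{\top}(t)\,G\,W_{1:j-1}^{\top}(t)\,W_{1:j-1}(t) \;+\; E(t),
\]
where $G := \tfrac{dL}{dW}(W(t)) = W(t) - \Phi$ and $E(t)$ collects all terms of order $\eta^2$ or higher arising from the product expansion. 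Vectorizing the linear-in-$\eta$ part turns it into $\eta\,P_{W(t)} \mathrm{vec}(G)$ with the PSD operator
\[
P_{W(t)} \;=\; \sum_{j=1}^{N} \bigl(W_{1:j-1}^{\top}(t) W_{1:j-1}(t)\bigr) \otimes \bigl(W_{j+1:N}(t) W_{j+1:N}^{\top}(t)\bigr),
\]
exactly matching the operator from the gradient-flow analysis in Section~\ref{sec:converge:theorem}.

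Next I would lower-bound the minimum eigenvalue of $P_{W(t)}$. Applying Lemma~\ref{lem:boundcommute} with $\nu = 2\delta$ and $M = (4\|\Phi\|_F)^{1/N}$ (justified by assumptions~3 and~1 of the lemma) yields $W_{1:j-1}^{\top}W_{1:j-1} \approx (W_1^{\top}W_1)^{j-1}$ and $W_{j+1:N}W_{j+1:N}^{\top} \approx (W_N W_N^{\top})^{N-j}$ up to a Frobenius error controlled by $\delta$. Combined with the singular-value bound in~\eqref{eq:singvallb} of that lemma, which gives $\sigma_{\min}(W_1)^{2N}$ or $\sigma_{\min}(W_N)^{2N}$ at least $\sigma_{\min}^{2} - \tfrac{3}{2}(2\delta) M^{2(N-1)} N^2$, the minimum eigenvalue of each summand of $P_{W(t)}$ is at least $\sigma_{\min}(W(t))^{2(j-1)/N}\cdot\sigma_{\min}(W(t))^{2(N-j)/N}$ minus a small error, so summing gives at least $N \sigma_{\min}^{2(N-1)/N}$ minus an error term that is controlled by the choice $\delta = c^2/(256 N^3 \|\Phi\|_F^{2(N-1)/N})$ together with $\sigma_{\min}(W(t)) \geq c$.

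Now I would plug everything into the exact quadratic expansion of $L$:
\[
L(W(t+1)) - L(W(t)) \;=\; \langle G,\, W(t+1) - W(t)\rangle \;+\; \tfrac{1}{2}\,\|W(t+1) - W(t)\|_F^{2}.
\]
The linear term contributes $-\eta\,\langle \mathrm{vec}(G),\,P_{W(t)} \mathrm{vec}(G)\rangle + \langle G,\,E(t)\rangle$, where the first piece is bounded above by $-\eta\,N\,\sigma_{\min}^{2(N-1)/N}\|G\|_F^{2}(1 - o(1))$ via the previous step. The remaining contribution, $\langle G, E(t)\rangle + \tfrac{1}{2}\|W(t+1) - W(t)\|_F^{2}$, is an $O(\eta^2)$ quantity whose size I would bound by expanding $W(t+1) - W(t)$ as a sum of at most $2^N$ products, each of which contains at least one $\Delta W_j = -\eta W_{j+1:N}^{\top}(t) G W_{1:j-1}^{\top}(t)$ factor; using assumption~1 to bound each $\|W_j(t)\|_\sigma$ by $(4\|\Phi\|_F)^{1/N}$ yields spectral norms of $W_{1:j-1}(t), W_{j+1:N}(t)$ of order $\|\Phi\|_F^{(N-1)/N}$, and hence a bound of the form $C \eta^2 N^3 \|\Phi\|_F^{(4N-2)/N} \|G\|_F^{2}$ for an explicit constant $C$.

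Finally I would combine the two bounds and impose the learning-rate hypothesis~\eqref{eq:descent_eta} to absorb the error term into half of the linear decrease, leaving
\[
L(W(t+1)) - L(W(t)) \;\leq\; -\tfrac{\eta}{2}\,\sigma_{\min}^{2(N-1)/N}\|G\|_F^{2},
\]
as claimed. The main obstacle I anticipate is the fourth step: carefully accounting for all the higher-order terms in the multilinear expansion of $\prod_j (W_j(t) + \Delta W_j)$, while simultaneously keeping track of both the $2\delta$ slack from approximate balancedness in the eigenvalue lower bound for $P_{W(t)}$ and the $\sigma_{\min}(W(t)) \geq c$ lower bound, so that the explicit numerical constants in the hypotheses on $\delta$ and $\eta$ are exactly what is needed to close the argument with the factor of $1/2$ in place.
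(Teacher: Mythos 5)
Your overall route is the same as the paper's: expand $W(t+1)=\prod_j(W_j(t)+\Delta W_j)$, isolate the $O(\eta)$ term as a PSD operator $P_{W(t)}=\sum_j (W_{1:j-1}^{\top}W_{1:j-1})\otimes(W_{j+1:N}W_{j+1:N}^{\top})$ acting on $\mathrm{vec}(G)$, control the higher-order remainder via the spectral bounds of assumption~1, and close with the $1$-smoothness expansion of $L$ and the hypotheses on $\eta$ and $\delta$. However, your step for lower-bounding $\lambda_{\min}(P_{W(t)})$ contains a genuine gap: you assert that \emph{each} summand has minimum eigenvalue at least $\sigma_{\min}^{2(j-1)/N}\cdot\sigma_{\min}^{2(N-j)/N}$ (up to a small error), and that summing yields $N\sigma_{\min}^{2(N-1)/N}$. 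This is false whenever $d_0\neq d_N$. For example, with $d_N=1<d_0$ and hidden widths equal to $1$ (the scalar-regression setting that Theorem~\ref{theorem:converge_balance_init} relies on), $W_1^{\top}W_1\in\R^{d_0\times d_0}$ has rank at most $d_1<d_0$, so $(W_1^{\top}W_1)^{j-1}$ has zero eigenvalues for every $j\geq 2$ and all intermediate summands of $P_{W(t)}$ are singular. Only one "end" of the network is guaranteed to be well-conditioned: Lemma~\ref{lem:boundcommute}, Equation~\eqref{eq:singvallb}, controls $\sigma_{\min}(W_1)$ \emph{or} $\sigma_{\min}(W_N)$ depending on whether $d_0\leq d_N$ or $d_0\geq d_N$, not both.

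The correct argument (the one in the paper) discards all but the two extreme summands: writing $W_1^{\top}W_1=UDU^{\top}$ and $W_NW_N^{\top}=VEV^{\top}$, one has $\Lambda=\sum_j D^{j-1}\otimes E^{N-j}\succeq I_{d_0}\otimes E^{N-1}+D^{N-1}\otimes I_{d_N}$, whose minimum diagonal entry is at least $\lambda_D^{N-1}+\lambda_E^{N-1}\geq\max\{\lambda_D^{N-1},\lambda_E^{N-1}\}$; Equation~\eqref{eq:singvallb} then gives $\max\{\lambda_D^{N},\lambda_E^{N}\}\geq\sigma_{\min}^2-\tfrac32(2\delta)M^{2(N-1)}N^2\geq\tfrac34\sigma_{\min}^2$, so $\lambda_{\min}(\Lambda)\geq\tfrac34\sigma_{\min}^{2(N-1)/N}$ --- a constant-factor bound with no factor of $N$, which is what the constants in \eqref{eq:descent_eta} and in the definition of $\delta$ are calibrated against. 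You should replace your per-summand bound with this argument; the rest of your plan (bounding the $O(\eta^2)$ multilinear remainder, the $O(\eta\delta)$ commutator error from Lemma~\ref{lem:boundcommute}, and the quadratic term $\tfrac12\|W(t+1)-W(t)\|_F^2$, then absorbing everything into half the linear decrease) matches the paper's proof and goes through.
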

\begin{proof}

  For simplicity write $M = (4 \| \Phi\|_F)^{1/N}$ and $B = \| \Phi\|_F$. We first claim that
{\small\begin{equation}
\label{eq:assumeeta}
\eta \leq \min \left\{ \frac{1}{2M^{N-2}BN}, \frac{\sigma_{min}^{2(N-1)/N}}{24 \cdot 2 M^{3N-4} N^2B}, \frac{\sigma_{min}^{2(N-1)/N}}{24N^2 M^{4(N-1)}}, \frac{\sigma_{min}^{2(N-1)/(3N)}}{ \left(24 \cdot 4M^{6N-8}N^4B^2 \right)^{1/3}}\right\}.
\end{equation}}
Since $c \leq \sigma_{min}$, for (\ref{eq:assumeeta}) to hold it suffices to have
\begin{eqnarray}
\eta &\leq& \min \left\{ \frac{1}{8 \| \Phi\|_F^{(2N-2)/N}  N}, \frac{c^{2(N-1)/N}}{3 \cdot 2^{11}  \| \Phi\|_F^{4(N-1)/N} N^2},\frac{c^{2(N-1)/(3N)}}{3 \cdot 2^6  \left(\| \Phi\|_F^{(8N-8)/N}\right)^{1/3}  N^{4/3}}\right\}\nonumber.
\end{eqnarray}
As the minimum singular value of $\Phi$ must be at least $c$, we must have $c \leq \| \Phi\|_\sigma$. Since then $\frac{c}{\| \Phi\|_F} \leq \frac{c}{\| \Phi\|_\sigma} \leq 1$, it holds that
$$
\frac{c^{2(N-1)/N}}{\| \Phi\|_F^{4(N-1)/N}} \leq \min \left\{ \frac{1}{\| \Phi\|_F^{2(N-1)/N}}, \frac{c^{2(N-1)/(3N)}}{\| \Phi\|_F^{(8N-8)/(3N)}} \right\},
$$
meaning that it suffices to have
$$
\eta \leq \frac{ c^{2(N-1)/N}}{3 \cdot 2^{11} N^2 \| \Phi\|_F^{4(N-1)/N}},
$$
which is guaranteed by (\ref{eq:descent_eta}).

Next, we claim that
\begin{eqnarray}
\label{eq:assumeep}
  2\delta &\leq &\min \left\{ \frac{ c^{2(N-1)/N}}{8 \cdot 2^4N^3 \| \Phi\|_F^{2(N-2)/N}}, \frac{c^{2}}{6 \cdot 2^4  N^2 \| \Phi\|_F^{2(N-1)/N}}\right\}\\
  & \leq &      \min \left\{ \frac{ \sigma_{min}^{2(N-1)/N}}{8N^3M^{2(N-2)}}, \frac{\sigma_{min}^{2}}{6N^2 M^{2(N-1)}}\right\} .\nonumber
\end{eqnarray}
The second inequality above is trivial, and for the first to hold, since $c \le \| \Phi\|_F$, it suffices to take
$$
2\delta \leq \frac{c^2}{128 \cdot N^3 \cdot \| \Phi\|_F^{2(N-1)/N}},
$$
which is guaranteed by the definition of $\delta$ in Theorem \ref{theorem:converge}. 

Next we continue with the rest of the proof. 
It follows from (\ref{eq:wjupdate}) that\footnote{Here, for matrices $A_1, \ldots, A_K$ such that $A_K A_{K-1} \cdots A_1$ is defined, we write $\prod_{1}^{j=K} A_j := A_K A_{K-1} \cdots A_1$.
}
\begin{eqnarray}
  && W(t+1) - W(t) \nonumber\\
  &=& \prod_{1}^{j=N} \left( W_j(t) - \eta W_{j+1:N}^\top(t) \frac{dL}{dW}(W(t)) W_{1:j-1}^\top(t) \right) - W_{1:N}(t)\nonumber\\
\label{eq:wt1wt}
&=& -\eta \left(\sum_{j=1}^N W_{j+1:N}W_{j+1:N}^\top(t) \frac{dL}{dW}(W(t)) W_{1:j-1}^\top(t) W_{1:j-1}(t)\right) + (\star),
\end{eqnarray}
where $(\star)$ denotes higher order terms in $\eta$. We now bound the Frobenius norm of $(\star)$. To do this, note that since $L(W) = \frac 12 \| W - \Phi\|_F^2$, $\frac{dL}{dW}(W(t)) = W(t) - \Phi$. Then
\begin{eqnarray}
\| (\star) \|_F & \leq & \sum_{k=2}^N \eta^k \cdot M^{k(N-1) + N-k} \cdot \left\| \frac{dL}{dW}(W(t)) \right\|_F \cdot \left\| \frac{dL}{dW}(W(t)) \right\|_\sigma^{k-1} \cdot {N \choose k}\nonumber\\
& \leq & \eta M^{2N-2}N \left\| \frac{dL}{dW}(W(t)) \right\|_F \sum_{k=2}^N \left(\eta M^{N-2} BN\right)^{k-1}\nonumber\\
\label{eq:starnorm}
& \leq & \eta \cdot (2\eta M^{3N-4} N^2B) \cdot \left\| \frac{dL}{dW}(W(t)) \right\|_F,
\end{eqnarray}
where the last inequality uses $\eta M^{N-2} BN \leq 1/2$, which is a consequence of (\ref{eq:assumeeta}). Next, by Lemma \ref{lem:boundcommute} with $\nu = 2\delta$,
{\small\begin{eqnarray}
         && \left\|\sum_{j=1}^N W_{j+1:N}W_{j+1:N}^\top(t) \frac{dL}{dW}(W(t)) W_{1:j-1}^\top(t) W_{1:j-1}(t) \right.\nonumber\\
         &&- \left. \sum_{j=1}^N (W_NW_N^\top)^{N-j} \frac{dL}{dW}(W(t)) (W_1^\top W_1)^{j-1} \right\|_F\nonumber\\
& \leq & \left\|\sum_{j=1}^N (W_{j+1:N}W_{j+1:N}^\top(t) - (W_NW_N^\top)^{N-j}) \frac{dL}{dW}(W(t)) W_{1:j-1}^\top(t) W_{1:j-1}(t) \right\|_F \nonumber\\
& + & \left\| \sum_{j=1}^N (W_NW_N^\top)^{N-j} \frac{dL}{dW}(W(t)) (W_{1:j-1}^\top W_{1:j-1} - (W_1^\top W_1)^{j-1})\right\|_F\nonumber\\
& \leq & \left\| \frac{dL}{dW}(W(t)) \right\|_F \cdot \left(\sum_{j=1}^{N-1} \frac 32 2\delta \cdot M^{2(N-j)}(N-j)^2 M^{2(j-1)} + \sum_{j=2}^N \frac 32 2\delta \cdot M^{2(j-2)}(j-1)^2 M^{2(N-j)}\right)\nonumber\\
& \leq &  \left\| \frac{dL}{dW}(W(t)) \right\|_F \cdot 2\delta N^3 M^{2(N-2)}\nonumber.
\end{eqnarray}}
Next, by standard properties of tensor product, we have that
\begin{eqnarray}
&& vec\left(\sum_{j=1}^N (W_NW_N^\top)^{N-j} \frac{dL}{dW}(W(t)) (W_1^\top W_1)^{j-1} \right)\nonumber\\
&=& \sum_{j=1}^N \left( (W_1^\top W_1)^{j-1} \otimes (W_NW_N^\top)^{N-j} \right)vec \left( \frac{dL}{dW}(W(t))\right)\nonumber.
\end{eqnarray}
Let us write eigenvalue decompositions $W_1^\top W_1 = UDU^\top, W_NW_N^\top = VEV^\top$. Then
\begin{eqnarray}
&& \sum_{j=1}^N \left( (W_1^\top W_1)^{j-1} \otimes (W_NW_N^\top)^{N-j} \right)\nonumber\\
&=& \sum_{j=1}^N \left( UD^{j-1}U^\top \otimes VE^{N-j} V^\top \right)\nonumber\\
&=& (U \otimes V) \left(\sum_{j=1}^N D^{j-1} \otimes E^{N-j}\right) (U \otimes V)^\top\nonumber\\
&=& O\Lambda O^\top\nonumber,
\end{eqnarray}
with $O = U \otimes V$, and $\Lambda = \sum_{j=1}^N D^{j-1} \otimes E^{N-j}$. As $W_1 \in \BR^{d_1 \times d_0}$, and $W_N \in \BR^{d_N \times d_{N-1}}$, then $D \in \BR^{d_0 \times d_0}, E \in \BR^{d_N \times d_N}$, so $\Lambda \in \BR^{d_0d_N \times d_0d_N}$. Moreover note that $\Lambda \succeq D^0 \otimes E^{N-1} + D^{N-1} \otimes E^0 = I_{d_0} \otimes E^{N-1} + D^{N-1} \otimes I_{d_N}$. If $\lambda_D$ denotes the minimum diagonal element of $D$ and $\lambda_E$ denotes the minimum diagonal element of $E$, then the minimum diagonal element of $\Lambda$ is therefore at least $\lambda_D^{N-1} + \lambda_E^{N-1}$. But, it follows from Lemma \ref{lem:boundcommute} (with $\nu = 2\delta$) that
$$
\max\{ \lambda_D^{N}, \lambda_E^{N} \} \geq  \sigma_{min}^2 - \frac 32 2\delta M^{2(N-1)} N^2 \geq  3\sigma_{min}^2/4,
$$
where the second inequality follows from (\ref{eq:assumeep}). Hence the minimum diagonal element of $\Lambda$ is at least $( \sigma_{min}^2/(4/3))^{(N-1)/N} \geq  \sigma_{min}^{2(N-1)/N}/(4/3)$.

It follows as a result of the above inequalities that if we write $E(t) = vec(W(t+1)) - vec(W(t)) + \eta (O\Lambda O^\top) vec\left( \frac{dL}{dW}(W(t)) \right)$, then
\begin{eqnarray}
\| E(t) \|_2 & = &\left\| vec(W(t+1)) - vec(W(t)) + \eta (O\Lambda O^\top) vec\left( \frac{dL}{dW}(W(t)) \right) \right\|_2\nonumber\\
&\leq & \eta  \left\| \frac{dL}{dW}(W(t)) \right\|_F \cdot (2\eta M^{3N-4} N^2B + 2\delta N^3 M^{2(N-2)}) \nonumber.
\end{eqnarray} 
Then we have
{\small\begin{eqnarray}
&& L(W(t+1)) - L(W(t)) \nonumber\\
& \leq & vec\left( \frac{d}{dW} L(W(t)) \right)^\top vec \left( W(t+1) - W(t) \right) + \frac 12 \| W(t+1) - W(t) \|_F^2\nonumber\\
         & = & \eta \left( - vec\left(\frac{d}{dW} L(W(t))\right)^\top (O\Lambda O^\top) vec \left(\frac{d}{dW} L(W(t)) \right) + \frac{1}{\eta}vec\left(\frac{d}{dW} L(W(t))\right)^\top E(t) \right) \nonumber\\
         && +  \frac 12 \| W(t+1) - W(t) \|_F^2\nonumber\\
         & \leq & \eta \left(-\left\| \frac{d}{dW} L(W(t)) \right\|_F^2 \cdot \frac{ \sigma_{min}^{2(N-1)/N}}{4/3} + \left\| \frac{d}{dW} L(W(t)) \right\|_F^2 \cdot \left(2\eta M^{3N-4} N^2B + 2\delta N^3 M^{2(N-2)}\right) \right) \nonumber\\
         && + \frac 12 \| W(t+1) - W(t) \|_F^2\nonumber,
\end{eqnarray}}
where the first inequality follows since $L(W) = \frac 12 \| W - \Phi\|_F^2$ is $1$-smooth as a function of $W$. Next, by (\ref{eq:wt1wt}) and (\ref{eq:starnorm}),
\begin{eqnarray}
&& \| W(t+1) - W(t) \|_F^2 \nonumber\\
& \leq & 2\eta^2 \cdot \left(N M^{2(N-1)} \cdot  \left\| \frac{dL}{dW}(W(t)) \right\|_F\right)^2 + 2\eta^2 \cdot (2\eta M^{3N-4} N^2B)^2 \cdot  \left\| \frac{dL}{dW}(W(t)) \right\|_F^2\nonumber\\
\label{eq:boundwtwt1}
& = &2 \eta^2  \left\| \frac{dL}{dW}(W(t)) \right\|_F^2 \cdot \left( N^2M^{4(N-1)} + (4\eta^2 M^{6N-8} N^4B^2)\right).
\end{eqnarray}
Thus
\begin{eqnarray}
&& L(W(t+1)) - L(W(t))\nonumber\\
  & \leq & \eta \cdot  \left\| \frac{dL}{dW}(W(t)) \right\|_F^2 \cdot \left( -\frac{\sigma_{min}^{2(N-1)/N}}{4/3} + 2\eta M^{3N-4} N^2B + 2\delta N^3 M^{2(N-2)} \right.\nonumber\\
  && \left.  + \eta \cdot (N^2M^{4(N-1)} + 4\eta^2 M^{6N-8}N^4B^2) \right)\nonumber.
\end{eqnarray}
By (\ref{eq:assumeeta}, \ref{eq:assumeep}), which bound $\eta, 2\delta$, respectively, we have that 
{\small\begin{eqnarray}
\label{eq:descentl}
  &&L(W(t+1)) - L(W(t))\nonumber\\
  & \leq& \eta \cdot \left\| \frac{dL}{dW}(W(t)) \right\|_F^2 \cdot \left( -\frac{\sigma_{min}^{2(N-1)/N}}{4/3} + \frac{\sigma_{min}^{2(N-1)/N}}{24} + \frac{\sigma_{min}^{2(N-1)/N}}{8} +  \frac{\sigma_{min}^{2(N-1)/N}}{24} + \frac{\sigma_{min}^{2(N-1)/N}}{24} \right)\nonumber\\
  &=& - \frac 12 \sigma_{min}^{2(N-1)/N} \eta \left\| \frac{dL}{dW}(W(t)) \right\|_F^2.
\end{eqnarray}}
\end{proof}

\subsubsection{Proof of Lemma \ref{lem:remainsmooth}}


\begin{proof}[Proof of Lemma \ref{lem:remainsmooth}]

  We use induction on $t$, beginning with the base case $t=0$. Since the weights $W_1(0), \ldots, W_N(0)$ are $\delta$-balanced, we get that $\MA(0)$ holds automatically. To establish $\MB(0)$, note that since $W_{1:N}(0)$ has deficiency margin $c > 0$ with respect to $\Phi$, we must have $\| W_{1:N}(0) - \Phi \|_F \leq \sigma_{min}(\Phi) \leq \|\Phi\|_F$, meaning that $L^1(W_{1:N}(0)) \leq \frac 12 \| \Phi\|_F^2$.
  
Finally, by $\MB(0)$, which gives $\| W(0) - \Phi\|_F \leq \| \Phi\|_F$, we have that
\begin{equation}
  \label{eq:boundprod0}
\| W(0) \|_\sigma \leq \| W(0) \|_F \leq \| W(0) - \Phi\|_F + \| \Phi\|_F \leq 2 \| \Phi\|_F.
\end{equation}
To show that the above implies $\MC(0)$, we use condition $\MA(0)$ and Lemma \ref{lem:boundindiv} with $C = 2 \| \Phi\|_F$ and $\nu = 2\delta$. By the definition of $\delta$ in Theorem \ref{theorem:converge} and since $c \leq \| \Phi\|_F$, we have that
\begin{equation}
  \label{eq:epltc}
  2\delta \leq \frac{c^2}{128 \cdot N^3 \cdot \| \Phi\|_F^{2(N-1)/N}} = \frac{\| \Phi\|_F^{2/N}}{128N^3} \cdot \frac{c^2}{\| \Phi\|_F^2}< \frac{\| \Phi\|_F^{2/N}}{30 N^2},
\end{equation}
as required by Lemma \ref{lem:boundindiv}. 
As $\MA(0)$ and (\ref{eq:boundprod0}) verify the preconditions 1.~and 2., respectively, of Lemma \ref{lem:boundindiv}, it follows that for $1 \leq j \leq N$, $\| W_j(t) \|_\sigma \leq (2 \| \Phi\|_F)^{1/N} \cdot 2^{1/(2N)} < (4 \| \Phi\|_F)^{1/N}$, verifying $\MC(0)$ and completing the proof of the base case.

The proof of Lemma \ref{lem:remainsmooth} follows directly from the following inductive claims.
\begin{enumerate}
\item $\MA(t), \MB(t), \MC(t) \Rightarrow \MB(t+1)$. To prove this, we use Lemma \ref{lem:boundgdupdate}. We verify first that the preconditions hold.
  First, $\MC(t)$ immediately gives condition 1.~of Lemma \ref{lem:boundgdupdate}. By $\MB(t)$, we have that $\| W(t) - \Phi\|_\sigma \leq \| W(t) - \Phi\|_F \leq \| \Phi\|_F$, giving condition 2.~of Lemma \ref{lem:boundgdupdate}. $\MA(t)$ immediately gives condition 3.~of Lemma \ref{lem:boundgdupdate}. Finally, by $\MB(t)$, we have that $L^N(W_1(t), \ldots, W_N(t)) \leq L^N(W_1(0), \ldots, W_N(0))$, so $\sigma_{min}(W_{1:N}(t)) \geq c$ by Claim \ref{claim:margin_interp}. This verifies condition 4.~of Lemma \ref{lem:boundgdupdate}. Then Lemma \ref{lem:boundgdupdate} gives that $L^N(W_{1}(t+1), \ldots, W_N(t+1)) \leq L^N(W_1(t), \ldots, W_N(t)) - \frac 12 \sigma_{min}(W(t))^{2(N-1)/N} \eta \left\| \frac{dL}{dW}(W(t)) \right\|_F^2$, establishing $\MB(t+1)$.


\item $\MA(0), \MA'(1), \ldots, \MA'(t), \MA(t), \MB(0), \ldots, \MB(t), \MC(t) \Rightarrow \MA(t+1), \MA'(t+1)$. To prove this, note that for $1 \leq j \leq N-1$,
\begin{eqnarray}
&& W_{j+1}^\top(t+1) W_{j+1}(t+1) - W_j(t+1) W_j^\top(t+1)\nonumber\\
  &=& \left( W_{j+1}^\top(t) - \eta W_{1:j}(t) \frac{dL}{dW}(W(t))^\top W_{j+2:N}(t) \right) \nonumber\\
  && \cdot \left( W_{j+1}(t) - \eta W_{j+2:N}^\top(t) \frac{dL}{dW}(W(t)) W_{1:j}^\top(t) \right)\nonumber\\
  &&- \left( W_j(t) - \eta W_{j+1:N}^\top(t) \frac{dL}{dW}(W(t)) W_{1:j-1}^\top(t) \right)\nonumber\\
  && \cdot \left( W_j^\top(t) - \eta W_{1:j-1}(t) \frac{dL}{dW}(W(t))^\top W_{j+1:N}(t) \right)\nonumber.
\end{eqnarray}
By $\MB(0), \ldots, \MB(t)$, $\| W_{1:N}(t) - \Phi \|_F \leq \| \Phi\|_F$. By the triangle inequality it then follows that $\| W_{1:N}(t)\|_\sigma \leq 2 \| \Phi\|_F$. Also $\MA(t)$ gives that for $1 \leq j \leq N-1$, $\| W_{j}(t) W_j^\top(t) - W_{j+1}^\top(t) W_{j+1}(t) \|_F \leq 2\delta$. By Lemma \ref{lem:boundindiv} with $C = 2 \| \Phi\|_F, \nu = 2\delta$ (so that (\ref{eq:epltc}) is satisfied), 
  \begin{eqnarray}
    && \left\| W_{j+1}^\top(t+1) W_{j+1}(t+1) - W_j(t+1) W_j^\top(t+1) \right\|_F\nonumber\\
    & \leq & \| W_{j+1}^\top(t) W_{j+1}(t) - W_j(t) W_j^\top(t) \|_F + \eta^2 \left\| \frac{dL}{dW}(W(t)) \right\|_F \cdot  \left\| \frac{dL}{dW}(W(t)) \right\|_\sigma \nonumber\\
    && \cdot \left( \| W_{j+2:N}(t)\|_\sigma^2 \| W_{1:j}(t) \|_\sigma^2  + \| W_{1:j-1}\|_\sigma^2 \|W_{j+1:N} \|_\sigma^2\right)\nonumber\\
    \label{eq:wtwt1diverge}
    & \leq &  \| W_{j+1}^\top(t) W_{j+1}(t) - W_j(t) W_j^\top(t) \|_F \nonumber\\
    && + 4\eta^2 \left\| \frac{dL}{dW}(W(t)) \right\|_F  \left\| \frac{dL}{dW}(W(t)) \right\|_\sigma (2 \| \Phi\|_F)^{2(N-1)/N}.
  \end{eqnarray}
In the first inequality above, we have also used the fact that for matrices $A, B$ such that $AB$ is defined, $\| AB\|_F \leq \| A \|_\sigma \| B \|_F$. (\ref{eq:wtwt1diverge}) gives us $\MA'(t+1)$.
          
We next establish $\MA(t+1)$. By $\MB(i)$ for $0 \leq i \leq t$, we have that $\left\| \frac{dL}{dW}(W(i)) \right\|_F = \left\| W - \Phi \right\|_F \leq \| \Phi\|_F$. Using $\MA'(i)$ for $0 \leq i \leq t$ and summing over $i$ gives
\begin{eqnarray}
&& \| W_{j+1}^\top(t+1) W_{j+1}(t+1) - W_j(t+1) W_j^\top(t+1) \|_F\nonumber\\
\label{eq:hybridep}
  & \leq & \| W_{j+1}^\top(0) W_{j+1}(0) - W_j(0) W_j^\top(0) \|_F \nonumber\\
  && + 4  (2 \| \Phi\|_F)^{2(N-1)/N} \cdot \eta^2 \sum_{i=0}^{t}  \left\| \frac{dL}{dW}(W(i)) \right\|_F^2.
\end{eqnarray}

Next, by $\MB(0), \ldots, \MB(t)$, we have that $L(W(i)) \leq L(W(0))$ for $i \leq t$. Since $W(0)$ has deficiency margin of $c$ and by Claim \ref{claim:margin_interp}, it then follows that $\sigma_{min}(W(i)) \geq c$ for all $i \leq t$. Therefore, by summing $\MB(0), \ldots, \MB(t)$,
\begin{eqnarray}
  && \frac{1}{2} c^{2(N-1)/N} \eta\sum_{i=0}^t  \left\| \frac{dL}{dW} W(i) \right\|_F^2\nonumber\\
  & \leq & \frac 12 \eta \sum_{i=0}^t \sigma_{min}(W(i))^{2(N-1)/N} \left\| \frac{dL}{dW} (W(i)) \right\|_F^2\nonumber\\
  & \leq & L(W(0)) - L(W(t))\nonumber\\
  & \leq & L(W(0)) \leq \frac 12 \| \Phi\|_F^2\nonumber.
\end{eqnarray}
Therefore,
\begin{eqnarray}
&&4 \left( 2 \| \Phi\|_F\right)^{2(N-1)/N} \eta^2 \sum_{i=0}^t \left\| \frac{dL}{dW} W(i)\right\|_F^2 \nonumber\\
  & \leq  & 16 \| \Phi\|_F^{2(N-1)/N} \eta  \frac{\| \Phi\|_F^2}{c^{2(N-1)/N}}\nonumber\\
  \label{eq:sumetagrad}
  & \leq & 16 \| \Phi\|_F^{2(N-1)/N} \cdot \frac{1}{3 \cdot 2^{11} \cdot N^3} \cdot \frac{c^{(4N-2)/N}}{\| \Phi\|_F^{(6N-4)/N}} \cdot \frac{\| \Phi\|_F^2}{c^{2(N-1)/N}}\\
  & \leq & \frac{c^2}{256 N^3 \| \Phi\|_F^{2(N-1)/N}}\nonumber\\
  & = & \delta\nonumber,
\end{eqnarray}
where \eqref{eq:sumetagrad} follows from the definition of $\eta$ in (\ref{eq:descent_eta}), and the last equality follows from definition of $\delta$ in Theorem \ref{theorem:converge}.
By (\ref{eq:hybridep}), it follows that
$$
\| W_{j+1}^\top(t+1) W_{j+1}(t+1) - W_j(t+1)  W_j^\top(t+1) \|_F \leq 2\delta,
$$
verifying $\MA(t+1)$.

\item $ \MA(t), \MB(t) \Rightarrow \MC(t)$. We apply Lemma \ref{lem:boundindiv} with $\nu = 2\delta$ and $C = 2 \| \Phi\|_F$. First, the triangle inequality and $\MB(t)$ give
$$
\| W_{1:N}(t) \|_\sigma \leq \| \Phi\|_\sigma + \| \Phi - W_{1:N}(t) \|_\sigma \leq \| \Phi\|_F + \sqrt{2 \cdot L(W_{1:N}(t))} \leq 2 \| \Phi\|_F,
$$
verifying precondition 2.~of Lemma \ref{lem:boundindiv}. $\MA(t)$ verifies condition 1.~of Lemma \ref{lem:boundindiv}, so for $1 \leq j \leq N$, $\| W_j(t) \|_\sigma \leq (4 \| \Phi\|_F)^{1/N}$, giving $\MC(t)$.

\end{enumerate}
The proof of Lemma \ref{lem:remainsmooth} then follows by induction on $t$.
\end{proof}

\subsection{Proof of Theorem~\ref{theorem:converge_balance_init}} \label{app:proofs:converge_balance_init}

  Theorem \ref{theorem:converge_balance_init} is proven by combining Lemma \ref{lem:integrate_rad} below, which implies that the balanced initialization is likely to lead to an end-to-end matrix $W_{1:N}(0)$ with sufficiently large deficiency margin, with Theorem~\ref{theorem:converge}, which establishes convergence.
  
  \begin{lemma}
  \label{lem:integrate_rad}
  Let $d \in \BN, d \geq 20$; $b_2 > b_1 \geq 1$ be real numbers (possibly depending on $d$); and $\Phi \in \BR^d$ be a vector. Suppose that $\mu$ is a rotation-invariant distribution\footnote{Recall that a distribution on vectors $V \in \BR^d$ is {\it rotation-invariant} if the distribution of $V$ is the same as the distribution of $OV$, for any orthogonal $d \times d$ matrix $O$. If $V$ has a well-defined density, this is equivalent to the statement that for any $r > 0$, the distribution of $V$ conditioned on $\| V\|_2 = r$ is uniform over the sphere centered at the origin with radius $r$.} over $\BR^d$ with a well-defined density, such that, for some $0 < \ep < 1$,
  $$
\pr_{V \sim \mu} \left[ \frac{\| \Phi\|_2}{\sqrt{b_2d}} \leq \| V \|_2 \leq \frac{\| \Phi\|_2}{\sqrt{b_1d}}\right] \geq 1-\ep.
$$
Then, with probability at least $(1-\ep) \cdot \frac{3 - 4F(2/\sqrt{b_1})}{2}$, $V$ will have deficiency margin $\| \Phi\|_2 / (b_2d)$ with respect to $\Phi$.
\end{lemma}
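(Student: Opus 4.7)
The plan is to exploit the rotation invariance of $\mu$ to reduce the event to a spherical-cap tail, then bound that tail via a chi-square representation together with Jensen's inequality. Since $\mu$ has a density and is rotation-invariant, the polar decomposition $V = rU$ makes $r := \|V\|_2$ and $U := V/\|V\|_2$ independent, with $U$ uniformly distributed on $S^{d-1}$; rotating coordinates, we may further assume $\Phi = \|\Phi\|_2 e_1$. Writing $X := \langle U, e_1\rangle$ and squaring $\|V - \Phi\|_2 \leq \|\Phi\|_2 - c$ with $c := \|\Phi\|_2/(b_2 d)$ yields the equivalent condition
\[
X \;\geq\; \frac{r}{2\|\Phi\|_2} \;+\; \frac{c}{r}\!\left(1 - \frac{c}{2\|\Phi\|_2}\right).
\]
On the event that $r \in [\|\Phi\|_2/\sqrt{b_2 d},\, \|\Phi\|_2/\sqrt{b_1 d}]$ (probability $\geq 1-\epsilon$), the first summand is at most $1/(2\sqrt{b_1 d})$ and the second is at most $1/\sqrt{b_2 d} \leq 1/\sqrt{b_1 d}$ (using $b_2 \geq b_1$), so after dropping the negative correction we obtain the uniform sufficient condition $X \geq 3/(2\sqrt{b_1 d})$. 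Because $r$ and $U$ are independent, the proof reduces to showing
\[
\pr\!\left[X \geq \tfrac{3}{2\sqrt{b_1 d}}\right] \;\geq\; \frac{3 - 4F(2/\sqrt{b_1})}{2}.
\]

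For this angular bound, I use the standard Gaussian representation $U = Z/\|Z\|_2$ with $Z \sim N(0,I_d)$ and let $W := \sum_{i=2}^{d} Z_i^2 \sim \chi^2_{d-1}$, independent of $Z_1$. Setting $\tau := 3/(2\sqrt{b_1})$, elementary algebra shows $\{X \geq \tau/\sqrt{d}\}$ equals $\{Z_1 \geq \tau\sqrt{W/(d-\tau^2)}\}$ (since the requirement $X \geq 0$ is forced). Conditioning on $W$ and integrating out the independent $Z_1 \sim N(0,1)$ gives
\[
\pr[X \geq \tau/\sqrt{d}] \;=\; 1 - \EE_W\!\left[F\!\left(\tau\sqrt{W/(d-\tau^2)}\right)\right].
\]
The map $w \mapsto F(\tau\sqrt{w/(d-\tau^2)})$ is concave on $[0,\infty)$ (direct computation using $F''(x) = -x f(x) \leq 0$ for $x \geq 0$), so Jensen's inequality together with $\EE[W] = d-1$ yields
\[
\pr[X \geq \tau/\sqrt{d}] \;\geq\; 1 - F\!\left(\tau\sqrt{(d-1)/(d-\tau^2)}\right).
\]

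It remains to verify that $\tau\sqrt{(d-1)/(d-\tau^2)} \leq 2/\sqrt{b_1}$, which rearranges to the linear inequality $16\tau^2 \leq 7d + 9$; this holds because $16\tau^2 = 36/b_1 \leq 36$ while $7d + 9 \geq 149$ for $d \geq 20$. Monotonicity of $F$ then gives $\pr[X \geq \tau/\sqrt{d}] \geq 1 - F(2/\sqrt{b_1})$, and since $F(2/\sqrt{b_1}) \geq 1/2$ we have $1 - F(2/\sqrt{b_1}) \geq (3 - 4F(2/\sqrt{b_1}))/2$, completing the bound. The substantive step is the reformulation of the spherical-cap event in terms of the chi-square variable $W$ combined with the observation that the resulting integrand is concave; I expect this concavity-plus-arithmetic bookkeeping to be where most of the care is needed, while the rotation invariance reduction and the algebraic deficiency-margin manipulation are essentially mechanical.
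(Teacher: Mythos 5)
Your proof is correct, and for the key quantitative step it takes a genuinely different route from the paper's. Both arguments begin the same way: rotation invariance reduces everything to the angular component $X=\langle V/\norm{V}_2,\Phi/\norm{\Phi}_2\rangle$, and the deficiency-margin event restricted to radii in the given range is implied by a spherical-cap event. The paper conditions on each radius $r$ separately, identifies the relevant cap geometrically (Lemma~\ref{lemma:haar_def}, via the law of cosines), and lower-bounds its area using the exact cap-area formula $\frac{1-C_{d-2}(h)/C_{d-2}(0)}{2}$ with the integral estimates $1-t^2\geq e^{-2t^2}$ and $1-t^2\leq e^{-t^2}$ (Lemma~\ref{lem:volcap}); it then integrates the resulting radius-dependent bound against the radial density. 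You instead extract a single threshold $X\geq 3/(2\sqrt{b_1 d})$ that is sufficient uniformly over the radius range, factor the probability using the independence of $\norm{V}_2$ and $V/\norm{V}_2$, and bound the cap probability via the Gaussian representation $U=Z/\norm{Z}_2$, rewriting the event as $Z_1\geq\tau\sqrt{W/(d-\tau^2)}$ with $W\sim\chi^2_{d-1}$ and applying Jensen's inequality to the concave map $w\mapsto F(\tau\sqrt{w/(d-\tau^2)})$. Your route avoids the explicit cap-area formula entirely and yields the slightly stronger intermediate bound $1-F(2/\sqrt{b_1})\geq\frac{3-4F(2/\sqrt{b_1})}{2}$, which you then relax to match the statement; the paper's route is more geometric and produces the $\frac{3-4F(\cdot)}{2}$ form directly. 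All the details I checked (the algebraic reformulation of the margin condition, the concavity computation, and the arithmetic verification $16\tau^2\leq 7d+9$ for $d\geq 20$, $b_1\geq 1$) are sound.
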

The proof of Lemma~\ref{lem:integrate_rad} is postponed to Appendix~\ref{app:proofs:margin_init}, where Lemma~\ref{lem:integrate_rad} will be restated as Lemma~\ref{lem:integrate_rad_restate}.

One additional technique is used in the proof of Theorem \ref{theorem:converge_balance_init}, which leads to an improvement in the guaranteed convergence rate. Because the deficiency margin of $W_{1:N}(0)$ is very small, namely $\OO(\| \Phi\|_2/d_0)$ (which is necessary for the theorem to maintain constant probability), at the beginning of optimization, $\ell(t)$ will decrease very slowly. However, after a certain amount of time, the deficiency margin of $W_{1:N}(t)$ will increase to a constant, at which point the decrease of $\ell(t)$ will be much faster. To capture this acceleration, we apply Theorem~\ref{theorem:converge} a second time, using the larger deficiency margin at the new ``initialization.'' From a geometric perspective, we note that the matrices $W_1(0), \ldots, W_N(0)$ are very close to 0, and the point at which $W_j(0) = 0$ for all $j$ is a saddle. Thus, the increase in $\ell(t) - \ell(t+1)$ over time captures the fact that the iterates $(W_1(t), \ldots, W_N(t))$ escape a saddle point.
  
  \begin{proof}[Proof of Theorem \ref{theorem:converge_balance_init}]
    Choose some $a \geq 2$, to be specified later. 
    By assumption, all entries of the end-to-end matrix at time 0, $W_{1:N}(0)$, are distributed as independent Gaussians of mean 0 and standard deviation $s \leq \|\Phi\|_2/\sqrt{ad_0^2}$. We will apply Lemma~\ref{lem:integrate_rad} to the vector $W_{1:N}(0) \in \BR^{d_0}$.
Since its distribution is obviously rotation-invariant, in remains to show that the distribution of the norm $\| W_{1:N}(0) \|_2$ is not too spread out. The following lemma~---~a direct consequence of the Chernoff bound applied to the $\chi^2$ distribution with $d_0$ degrees of freedom~---~will give us the desired result:
    \begin{lemma}[\cite{laurent_adaptive_2000}, Lemma 1]
      \label{lem:laurent}
      Suppose that $d \in \BN$ and $V \in \BR^d$ is a vector whose entries are i.i.d.~Gaussians with mean 0 and standard deviation $s$. Then, for any $k > 0$,
      \begin{eqnarray}
        \pr\left[\| V \|_2^2 \geq s^2 \left(d + 2k + 2\sqrt{kd} \right) \right] &\leq& \exp(-k)\nonumber\\
        \pr\left[\| V \|_2^2 \leq s^2 \left( d - 2\sqrt{kd} \right) \right] & \leq & \exp(-k)\nonumber.
      \end{eqnarray}
    \end{lemma}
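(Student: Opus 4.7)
The plan is to reduce to the standard chi-squared distribution and apply a Chernoff/Cram\'er bound in each tail direction. First I would rescale, setting $X := \|V\|_2^2 / s^2$, so that $X = \sum_{i=1}^d Z_i^2$ with the $Z_i$ i.i.d.~standard Gaussian, i.e.~$X$ is a $\chi^2_d$ random variable. The two claimed inequalities then become exactly the classical Laurent--Massart bounds on the upper and lower tails of $X$, with thresholds $d + 2k + 2\sqrt{kd}$ and $d - 2\sqrt{kd}$ respectively.

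Next I would use the closed-form moment generating function $\EE[e^{\lambda X}] = (1-2\lambda)^{-d/2}$, valid for $\lambda < 1/2$. For the upper tail, Markov's inequality together with optimization over $\lambda \in (0, 1/2)$ (with optimum $\lambda^{*} = (u-d)/(2u)$ when $u > d$) yields the rate-function bound
\[
\Pr[X \geq u] \leq \exp\left( -\frac{u-d}{2} + \frac{d}{2}\log\frac{u}{d}\right).
\]
The analogous argument with $\lambda < 0$ produces the identical right-hand side as an upper bound for $\Pr[X \leq u]$ in the range $0 < u < d$; the lower-tail inequality is trivially satisfied when $u \leq 0$ since $X \geq 0$ almost surely.

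It remains to verify that the chosen thresholds make the above exponent at most $-k$. Substituting $u = d + 2k + 2\sqrt{kd}$ and writing $y := \sqrt{k/d}$, the upper-tail claim reduces to the elementary inequality $1 + 2y + 2y^2 \leq e^{2y}$, which is immediate from the Taylor series of $e^{2y}$ (its first three terms already equal $1 + 2y + 2y^2$, and the remaining terms are non-negative). Substituting $u = d - 2\sqrt{kd}$ in the lower tail (and assuming $4k < d$, else the bound is vacuous), the claim reduces to verifying $(1-2y)\, e^{2y+2y^2} \leq 1$ for $y \in [0, 1/2)$. The cleanest route is to observe that $h(y) := (1-2y)\, e^{2y+2y^2}$ satisfies $h(0) = 1$ and $h'(y) = -8y^2\, e^{2y+2y^2} \leq 0$, so $h$ is non-increasing on $[0, 1/2)$ and the bound follows. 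The main---and essentially only---non-routine step is these two elementary exponential inequalities; everything else is standard Chernoff bookkeeping, and the inequalities are tight enough precisely because the thresholds $d \pm 2\sqrt{kd}$ (with the additional $+2k$ on the upper side) are calibrated so that the Cram\'er rate function equals $k$ up to the slack they absorb.
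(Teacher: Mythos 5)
Your proof is correct: the Chernoff/rate-function computation for $\chi^2_d$ gives exactly $\Pr[X\geq u]\leq\exp\bigl(-\tfrac{u-d}{2}+\tfrac{d}{2}\log\tfrac{u}{d}\bigr)$ (and the same expression for the lower tail with $u<d$), and your two elementary inequalities $1+2y+2y^2\leq e^{2y}$ and $(1-2y)e^{2y+2y^2}\leq 1$ do reduce the thresholds $d+2k+2\sqrt{kd}$ and $d-2\sqrt{kd}$ to exponent $-k$ as claimed. The paper itself gives no proof---it cites Laurent and Massart (2000, Lemma~1) and describes the lemma as a direct consequence of the Chernoff bound applied to the $\chi^2$ distribution with $d$ degrees of freedom---which is precisely the argument you have carried out.
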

    By Lemma \ref{lem:laurent} with $k = d_0/16$, we have that
    $$
    \pr\left[ \frac{s^2 d_0}{2} \leq \| V \|_2^2 \leq 2s^2 d_0\right] \geq 1 - 2 \exp(-d_0/16).
    $$

    We next use Lemma \ref{lem:integrate_rad}, with $b_1 = \| \Phi\|_2^2/(2s^2d_0^2), b_2 = 2\| \Phi\|_2^2/(s^2d_0^2)$; note that since $a \geq 2$, $b_1 \geq 1$, as required by the lemma. Lemma \ref{lem:integrate_rad} then implies that with probability at least
    \be
    \label{eq:expF}
    \left(1 - 2\exp(-d_0/16) \right) \frac{3-4F\left(2/\sqrt{a/2}\right)}{2},
    \ee
    $W_{1:N}(0)$ will have deficiency margin ${s^2d_0}/{2\| \Phi\|_2}$ 
    with respect to $\Phi$. By the definition of balanced initialization (Procedure~\ref{proc:balance_init}) $W_1(0), \ldots, W_N(0)$ are $0$-balanced. Since $2^4 \cdot 6144 < 10^5$, our assumption on $\eta$ gives 
    \begin{equation}
      \label{eq:eta_single_output}
      \eta \leq \frac{(s^2d_0)^{4-2/N}}{2^4 \cdot 6144N^3 \| \Phi\|_2^{10-6/N}},
    \end{equation}
    so that Equation~(\ref{eq:descent_eta}) holds with $c = \frac{s^2d_0}{2\|\Phi\|_2}$. 
    The conditions of Theorem \ref{theorem:converge} thus hold with probability at least that given in Equation~(\ref{eq:expF}). In such a constant probability event, by Theorem \ref{theorem:converge} (and the fact that a positive deficiency margin implies $L^1(W_{1:N}(0))\leq\frac{1}{2}\| \Phi\|_2^2$), if we choose
  \begin{equation}
    \label{eq:t0lb}
     t_0 \geq \eta^{-1} \left( \frac{ 2\| \Phi\|_2}{s^2d_0} \right)^{2-2/N}  \ln(4),
\end{equation}
then $L^1(W_{1:N}(t_0)) \leq \frac 18 \| \Phi\|_2^2 $, meaning that $\| W_{1:N}(t_0) - \Phi\|_2 \leq \frac 12 \| \Phi\|_2 = \| \Phi\|_2 - \frac 12 \sigma_{min}(\Phi)$. Moreover, by condition $\MA(t_0)$ of Lemma \ref{lem:remainsmooth} and the definition of $\delta$ in Theorem \ref{theorem:converge}, we have, for $1 \leq j \leq N-1$,
\begin{equation}
  \label{eq:balanced_single_output}
  \| W_{j+1}^T(t_0) W_{j+1}(t_0) - W_j(t_0) W_j^T(t_0) \|_F \leq 
  \frac{2s^4d_0^2}{(2\| \Phi\|_2)^2 \cdot 256 N^3 \| \Phi\|_2^{2-2/N}} = \frac{s^4 d_0^2}{512 N^3 \| \Phi\|_2^{4-2/N}}.
\end{equation}

We now apply Theorem \ref{theorem:converge} again, verifying its conditions again, this time with the initialization $(W_1(t_0), \ldots, W_N(t_0))$. First note that the end-to-end matrix $W_{1:N}(t_0)$ has deficiency margin $c = \| \Phi\|_2/2$ as shown above. The learning rate $\eta$, by Equation~(\ref{eq:eta_single_output}), satisfies Equation~(\ref{eq:descent_eta}) with $c = \| \Phi\|_2/2$. Finally, since
$$
\frac{s^4 d_0^2}{512 N^3 \| \Phi\|_2^{4-2/N}} \leq \frac{\|\Phi\|^{2/N}}{(a^2d_0^2) \cdot 512 N^3} \leq \frac{\| \Phi\|^{2/N}(1/2)^2}{256 N^3}
$$
for $d_0 \geq 2$, by Equation~(\ref{eq:balanced_single_output}), the matrices $W_1(t_0), \ldots, W_N(t_0)$ are $\delta$-balanced with $\delta = \frac{\| \Phi\|^{2/N}(1/2)^{2}}{256 N^3}$.
Iteration~$t_0$ thus satisfies the conditions of Theorem~\ref{theorem:converge} with deficiency margin $\| \Phi\|_2/2$, meaning that for
\be
\label{eq:tmt0lb}
T - t_0 \geq \eta^{-1} \cdot 2^{2-2/N} \cdot \| \Phi\|^{2/N-2}  \ln\left( \frac{\| \Phi\|_2^2}{8\ep} \right),
\ee
we will have $\ell(T) \leq \ep$. Therefore, by Equations~(\ref{eq:t0lb}) and~(\ref{eq:tmt0lb}), to ensure that $\ell(T) \leq \ep$, we may take
$$
T \geq 4\eta^{-1} \left( \ln(4) \left(\frac{\|\Phi\|_2}{s^2d_0}\right)^{2-2/N}   + \| \Phi\|_2^{2/N-2}\ln(\| \Phi\|_2^2/(8\ep)) \right).
$$
Recall that this entire analysis holds only with the probability given in Equation~(\ref{eq:expF}). As $\lim_{d \ra \infty} (1-2\exp(-d/16)) = 1$ and $\lim_{a \ra \infty} (3 - 4F(2\sqrt{2/a}))/2 = 1/2$, for any $0 < p < 1/2$, there exist $a, d_0' > 0$ such that for $d_0 \geq d_0'$, the probability given in Equation~(\ref{eq:expF}) is at least $p$. This completes the proof.

\end{proof}

In the context of the above proof, we remark that the expressions $1- 2\exp(-d_0/16)$ and $(3 - 4F(2\sqrt{2/a}))/2$ converge to their limits of $1$ and $1/2$, respectively, as $d_0,a \ra \infty$ quite quickly. For instance, to obtain a probability of greater than $0.25$ of the initialization conditions being met, we may take $d_0 \geq 100, a \geq 100$.

\subsection{Proof of Claim~\ref{claim:balance_init}} \label{app:proofs:balance_init}

We first consider the probability of $\delta$-balancedness holding between any two layers:
\begin{lemma}
  \label{lem:abvarbound}
  Suppose $a,b,d \in \BN$ and $A \in \BR^{a \times d}, B \in \BR^{d \times b}$ are matrices whose entries are distributed as i.i.d.~Gaussians with mean 0 and standard deviation $s$. Then for $k \geq 1$,
  \begin{equation}
    \label{eq:abd}
\pr\left[ \left\|A^TA - BB^T\right\|_F \geq ks^2 \sqrt{2d(a+b)^2 + d^2(a+b)} \right] \leq 1/k^2.
\end{equation}
\end{lemma}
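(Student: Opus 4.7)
The plan is to apply Markov's inequality to the non-negative random variable $\|A^\top A - BB^\top\|_F^2$, so the main task is to upper bound its expectation by $s^4\big(2d(a+b)^2 + d^2(a+b)\big)$. Writing $C := A^\top A - BB^\top \in \R^{d\times d}$, so that $C_{ij} = \sum_k A_{ki}A_{kj} - \sum_k B_{ik}B_{jk}$, the key observation is that the $A$-part and $B$-part are independent, which lets us handle their contributions to $\E[C_{ij}^2]$ term by term.

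I would split the computation into the off-diagonal and diagonal cases. For $i\neq j$, every factor in a monomial of $\E\big[(\sum_k A_{ki}A_{kj})^2\big] = \sum_{k,\ell}\E[A_{ki}A_{kj}A_{\ell i}A_{\ell j}]$ is an independent zero-mean Gaussian unless $k=\ell$, giving $as^4$; the analogous $B$-term contributes $bs^4$; and the cross term vanishes because $\E[(A^\top A)_{ij}]=\E[(BB^\top)_{ij}]=0$. So $\E[C_{ij}^2]=(a+b)s^4$. For $i=j$, using $\E[A_{ki}^4]=3s^4$ gives $\mathrm{Var}[\sum_k A_{ki}^2]=2as^4$, and analogously $\mathrm{Var}[\sum_k B_{ik}^2]=2bs^4$; since $A$ and $B$ are independent, $\mathrm{Var}[C_{ii}]=2(a+b)s^4$, while $\E[C_{ii}]=(a-b)s^2$. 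Hence $\E[C_{ii}^2]=\big[(a-b)^2+2(a+b)\big]s^4$.

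Summing over $i,j$ then yields
\[
\E[\|C\|_F^2]=d\big[(a-b)^2+2(a+b)\big]s^4+d(d-1)(a+b)s^4=s^4\big[d(a-b)^2+d(a+b)(d+1)\big].
\]
Using $(a-b)^2\leq(a+b)^2$ and $a+b\leq(a+b)^2$ (since $a,b\geq 1$), this is at most $s^4\big[2d(a+b)^2+d^2(a+b)\big]$. Applying Markov's inequality to $\|C\|_F^2$ with threshold $k^2 s^4\big[2d(a+b)^2+d^2(a+b)\big]$ gives the desired $1/k^2$ bound after taking square roots.

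There is no real obstacle here; the only care required is correctly using the Gaussian fourth moment $\E[X^4]=3\sigma^4$ when bounding the diagonal variance, and remembering that the diagonal entries have nonzero mean $(a-b)s^2$, which is what produces the $(a-b)^2$ term that has to be absorbed into the $2d(a+b)^2$ part of the bound.
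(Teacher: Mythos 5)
Your proof is correct and follows essentially the same route as the paper's: compute $\E\big[\|A^\top A - BB^\top\|_F^2\big]$ entrywise by splitting into off-diagonal entries (variance $(a+b)s^4$) and diagonal entries (where the fourth moment $3s^4$ and the nonzero mean enter), bound the sum by $s^4\big(2d(a+b)^2+d^2(a+b)\big)$, and apply Markov's inequality to the squared Frobenius norm. Your diagonal computation is in fact slightly cleaner (you get the exact value $[(a-b)^2+2(a+b)]s^4$, whereas the paper's displayed intermediate expression has a minor slip in the cross term), but both arrive at the same final bound.
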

\begin{proof}
  Note that for $1 \leq i,j \leq d$, let $X_{ij}$ be the random variable $(A^TA - BB^T)_{ij}$, so that $$X_{ij} = (A^TA - BB^T)_{ij} = \sum_{1 \leq \ell \leq a} A_{\ell i} A_{\ell j} - \sum_{1 \leq r \leq b} B_{ir} B_{jr}.$$
  If $i \neq j$, then $$\ex[X^2] = \sum_{1 \leq \ell \leq a} \ex[A_{\ell i}^2 A_{\ell j}^2] + \sum_{1 \leq r \leq b} \ex[B_{ir}^2 B_{jr}^2]=(a+b)s^4.$$ We next note that for a normal random variable $Y$ of variance $s^2$ and mean 0, $\ex[Y^4] = 3s^4$. Then if $i = j$, 
  $$
  \ex[X^2] = s^4 \cdot (3(a+b) + a(a-1) + b(b-1) {-} ab) \leq s^4((a+b)^2 + 2(a+b)).
  $$
  Thus
  \begin{eqnarray}
    \ex[\| A^TA - BB^T\|_F^2] &{\leq}& s^4(d((a+b)^2 + 2(a+b)) + d(d-1)(a+b)) \nonumber\\
    &\leq & s^4(2d(a+b)^2 + d^2(a+b))\nonumber.
  \end{eqnarray}
  Then (\ref{eq:abd}) follows from Markov's inequality.
%
\end{proof}


Now the proof of Claim \ref{claim:balance_init} follows from a simple union bound:
\begin{proof}[Proof of Claim \ref{claim:balance_init}]
  By (\ref{eq:abd}) of Lemma \ref{lem:abvarbound}, for each $1 \leq j \leq N-1$, $k \geq 1$,
  $$
\pr\left[ \| W_{j+1}^T W_{j+1} - W_jW_j^T \|_F {\geq} ks^2 \sqrt{10d_{max}^3} \right] \leq 1/k^2.
$$
By the union bound,
$$
\pr \left[ \forall 1 \leq j \leq N-1, \ \ \| W_{j+1}^T W_{j+1} - W_jW_j^T \|_F \leq ks^2 \sqrt{10d_{max}^3} \right] \geq 1-N/k^2,
$$
and the claim follows with $\delta = ks^2 \sqrt{10d_{max}^3}$.
%
  \end{proof}
  
\subsection{Proof of Claim~\ref{claim:margin_init}} \label{app:proofs:margin_init}

We begin by introducing some notation. Given $d \in \BN$ and $r > 0$, we let $B^d(r)$ denote the open ball of radius $r$ centered at the origin in $\BR^d$. For an open subset $U \subset \BR^d$, let $\partial U := \bar U \backslash U$ be its boundary, where $\bar U$ denotes the closure of $U$. For the special case of $U = B^d(r)$, we will denote by $S^d(r)$ the boundary of such a ball, i.e.~the sphere of radius $r$ centered at the origin in $\BR^d$. Let $S^d := S^d(1)$ and $B^d := B^d(1)$. There is a well-defined uniform (Haar) measure on $S^d(r)$ for all $d, r$, which we denote by $\sigma^{d,r}$; we assume $\sigma^{d,r}$ is normalized so that $\sigma^{d,r}(S^d(r)) = 1$. 
Finally, since in the context of this claim we have~$d_N=1$, we allow ourselves to regard the end-to-end matrix $W_{1:N}\in\R^{1\times{d}_0}$ as both a matrix and a vector.

\medskip

To establish Claim \ref{claim:margin_init}, we will use the following low-degree anti-concentration result of \cite{carbery_distributional_2001} (see also \cite{lovett_elementary_2010,meka_anti-concentration_2016}):
\begin{lemma}[\cite{carbery_distributional_2001}]
  \label{lem:carbery}
  There is an absolute constant $C_{{0}}$ such that the following holds. Suppose that ${h}$ is a multilinear polynomial of $K$ variables $X_1, \ldots, X_K$ and of degree $N$. Suppose that $X_1, \ldots, X_K$ are i.i.d.~Gaussian. Then, for any $\epsilon>0$:
  $$
  \mathbb{P} \left[|{h}(X_1, \ldots, X_K)| \leq \ep \cdot \sqrt{\Var[{h}(X_1, \ldots, X_K)]}\right] \leq C_{{0}} N\ep^{1/N}.
  $$
\end{lemma}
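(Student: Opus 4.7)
The approach I would take follows the original strategy of Carbery and Wright. Without loss of generality I normalize so that $\Var[h] = 1$, which reduces the claim to proving $\pr[|h(X_1,\ldots,X_K)| \leq \epsilon] \leq C_0 N \epsilon^{1/N}$. The central analytic tool is Nelson's hypercontractive inequality for Gaussian measure: for any polynomial $p$ of degree at most $N$ in i.i.d.\ standard Gaussian variables and any $q \geq 2$,
$$
\|p\|_{L^q} \;\leq\; (q-1)^{N/2}\,\|p\|_{L^2}.
$$
This moment comparison (provable via Ornstein--Uhlenbeck semigroup contractivity, or equivalently Gross's log-Sobolev inequality) bounds every higher $L^q$ norm of $p$ by its $L^2$ norm with an explicit degree dependence, and this dependence is precisely what dictates the $\epsilon^{1/N}$ exponent in the final anti-concentration bound.

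The heart of the argument is the univariate case: for a polynomial $p$ on $\R$ of degree $\leq N$ with $\|p\|_{L^2(\gamma_1)} = 1$ (where $\gamma_1$ is the standard Gaussian on $\R$),
$$
\gamma_1\bigl(\{t \in \R : |p(t)| \leq \epsilon\}\bigr) \;\leq\; C_0\,N\,\epsilon^{1/N}.
$$
I would prove this by factoring $p(t) = a_{N'} \prod_{i=1}^{N'}(t - r_i)$ with $N' \leq N$ and roots $r_i \in \C$. The sublevel set $\{|p| \leq \epsilon\}$ is controlled root-by-root: near each real root of multiplicity $m$, one has $|p(t)| \asymp c\,|t-r_i|^m$ locally, so the set contributes an interval of Lebesgue length $\OO\bigl((\epsilon/|a_{N'}|)^{1/m}\bigr)$. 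Combining $\epsilon \leq 1$ with $m \leq N$ and summing over the $\leq N$ real roots gives total Lebesgue measure $\OO\bigl(N (\epsilon/|a_{N'}|)^{1/N}\bigr)$, which upgrades to a Gaussian measure bound since the Gaussian density is bounded. To close this in terms of $\|p\|_{L^2(\gamma_1)}$ I would expand $p = \sum_{k \leq N} c_k H_k$ in Hermite polynomials, so that $\|p\|_{L^2}^2 = \sum_k c_k^2\, k!$ and the leading coefficient $a_{N'}$ is controlled by $|c_N|$; together with hypercontractivity this yields the univariate bound with an absolute constant times $N$.

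To lift to $K$ variables I would induct on $K$ by slicing. Conditioning on $(X_2,\ldots,X_K)$, the function $h$ becomes a univariate polynomial $h_*(X_1)$ of degree $\leq N$, and the univariate bound yields
$$
\pr\bigl[|h(X)| \leq \epsilon \,\big|\, X_2,\ldots,X_K\bigr] \;\leq\; C_0\,N\,\bigl(\epsilon\,/\,\|h_*\|_{L^2(\gamma_1)}\bigr)^{1/N}.
$$
Taking expectation over $(X_2,\ldots,X_K)$ and applying Jensen's inequality to the concave function $x \mapsto x^{1/N}$, it remains to upper bound $\ex\bigl[\|h_*\|_{L^2(\gamma_1)}^{-2/N}\bigr]$, which is an anti-concentration question for the nonnegative random variable $\|h_*\|_{L^2(\gamma_1)}^2$. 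The latter is itself a polynomial of degree $\leq 2N$ in $(X_2,\ldots,X_K)$ with expectation equal to $\|h\|_{L^2}^2 = 1$, so the inductive hypothesis (at the smaller dimension $K-1$, with effective degree $2N$) closes the recursion, and hypercontractivity is invoked a second time to keep the resulting estimate dimension-free. The main obstacle is precisely this double use of anti-concentration: the constant $C_0$ must remain absolute throughout the induction, and any naive union bound over roots, coordinates, or moments would produce a factor depending on $K$ or a worse power of $N$. What makes the argument work is that the hypercontractive moment comparison is dimension-free, so the inductive step inflates the constant only by a factor that telescopes into the final absolute constant $C_0$.
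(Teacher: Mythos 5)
First, note that the paper does not prove this lemma at all: it is imported verbatim from \cite{carbery_distributional_2001} and used as a black box in the proof of Lemma~\ref{lem:norm_concentrate}. So there is no in-paper argument to match against; what matters is whether your sketch would actually establish the result, and as written it has two genuine gaps.

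The first gap is in the univariate case. You bound the Lebesgue measure of $\{|p|\leq\ep\}$ by a root-by-root estimate governed by the leading coefficient $a_{N'}$, pass to Gaussian measure ``since the density is bounded,'' and then claim $a_{N'}$ is controlled by the top Hermite coefficient, hence by $\norm{p}_{L^2(\gamma_1)}$. This chain fails: the top-degree coefficient can be arbitrarily small relative to the $L^2(\gamma_1)$ norm (take $p(t)=1+\delta t^N$ with $\delta\to0$), in which case the Lebesgue sublevel bound $(\ep/|a_{N'}|)^{1/N'}$ is vacuous even though the Gaussian measure of $\{|p|\leq\ep\}$ is tiny --- precisely because the sublevel set sits far from the origin. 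Any correct proof must use the localization of the Gaussian measure itself (e.g.\ a Remez-type sublevel inequality on a bounded interval, combined with a Paley--Zygmund/hypercontractivity argument showing a constant fraction of $\norm{p}_{L^2(\gamma_1)}^2$ is carried by that interval); you invoke hypercontractivity but not at the point where it is actually needed. (A smaller issue of the same kind: you normalize $\Var[h]=1$ but later use $\norm{h}_{L^2}=1$; these differ when the constant term is nonzero, though the direction of the inequality saves you here.)

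The second gap is the induction on $K$ by conditioning. After slicing you must control $\EE\big[\norm{h_*}_{L^2(\gamma_1)}^{-1/N}\big]$, a negative moment of the degree-$2N$ polynomial $\norm{h_*}_{L^2(\gamma_1)}^{2}$. Applying the ``inductive hypothesis with effective degree $2N$'' gives the tail bound $\prob[\norm{h_*}^2\leq s]\lesssim N s^{1/(2N)}$, and the negative moment of order $1/(2N)$ against exactly this tail exponent diverges logarithmically; worse, the hypothesis at degree $2N$ in $K-1$ variables would itself require degree $4N$ in $K-2$ variables, so the degree parameter compounds along the induction and the constant cannot remain absolute or dimension-free, contrary to your closing assertion that it ``telescopes.'' This is why Carbery--Wright (and the standard Gaussian-case arguments) do not induct on dimension: they reduce to a one-dimensional sublevel estimate in a dimension-free way, e.g.\ via the log-concavity machinery of the original paper or via a Gaussian rotation/random-line trick in which $\theta\mapsto h(X\sin\theta+Y\cos\theta)$ is a degree-$N$ trigonometric polynomial and each $X\sin\theta+Y\cos\theta$ is exactly standard Gaussian. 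As it stands, your sketch would at best yield a bound with a worse $\ep$-exponent and a dimension-dependent constant, which is not the statement being claimed.
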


The below lemma characterizes the norm of the end-to-end matrix~$W_{1:N}$ following zero-centered Gaussian initialization:
\begin{lemma}
  \label{lem:norm_concentrate}
  For any constant $0 < C_2 < 1$, there is an absolute constant $C_1 > 0$ such that the following holds. Let $N, d_0, \ldots, d_{N-1} \in \BN$. Set $d_N = 1$. Suppose that for $1 \leq j \leq N$, $W_j \in \BR^{d_j \times d_{j-1}}$ are matrices whose entries are i.i.d.~Gaussians of standard deviation $s$ and mean 0. Then
  $$
\pr\left[ s^{2N} d_1 \cdots d_{N-1} \left( \frac{1}{C_1N} \right)^{2N} \leq \| W_{1:N}\|_2^2 \leq C_1d_0^2 d_1 \cdots d_{N-1} s^{2N} \right] \geq C_2.
  $$
\end{lemma}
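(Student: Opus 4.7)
The plan is to establish the upper and lower bounds separately, then combine them via a union bound. The first step is to compute $\ex[\|W_{1:N}\|_2^2]$. Since $d_N=1$, this equals $\ex[\|W_{1:N}\|_F^2]$, which I would evaluate iteratively using the fact that for any fixed matrix $M$ and an independent Gaussian matrix $A \in \BR^{m\times n}$ with i.i.d.~$N(0,s^2)$ entries, $\ex[\|AM\|_F^2] = m s^2 \|M\|_F^2$. Conditioning successively on $W_1, \ldots, W_{N-1}$ (and using independence of $W_N$), then on $W_1, \ldots, W_{N-2}$, and so on, yields $\ex[\|W_{1:N}\|_2^2] = s^{2N} d_0 d_1 \cdots d_{N-1}$.

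The upper bound then follows immediately from Markov's inequality: the target threshold $C_1 d_0^2 d_1 \cdots d_{N-1} s^{2N}$ is exactly $C_1 d_0$ times the expectation, so
\[
\pr\!\left[\|W_{1:N}\|_2^2 > C_1 d_0^2 d_1 \cdots d_{N-1} s^{2N}\right] \leq \frac{1}{C_1 d_0} \leq \frac{1}{C_1}.
\]

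For the lower bound I would use anti-concentration. Fix any coordinate $k \in \{1,\ldots,d_0\}$ and view $h := (W_{1:N})_k = \sum_{i_1,\ldots,i_{N-1}} (W_N)_{1,i_{N-1}} (W_{N-1})_{i_{N-1},i_{N-2}} \cdots (W_1)_{i_1,k}$ as a polynomial in the joint vector of i.i.d.~Gaussian entries of $W_1,\ldots,W_N$. Because the matrices have disjoint sets of entries and each monomial uses exactly one entry from each $W_j$, the polynomial $h$ is multilinear of degree $N$, so Lemma \ref{lem:carbery} applies (the result is scale-invariant so we need not assume unit variance). By the same symmetry used above, or by a direct second-moment computation using independence, $\Var(h) = \ex[\|W_{1:N}\|_2^2]/d_0 = s^{2N} d_1\cdots d_{N-1}$. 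Applying Lemma \ref{lem:carbery} with $\ep = 1/(C_1 N)^N$ gives
\[
\pr\!\left[h^2 < \frac{s^{2N} d_1\cdots d_{N-1}}{(C_1 N)^{2N}}\right] \leq C_0 N \cdot \frac{1}{C_1 N} = \frac{C_0}{C_1},
\]
and the trivial inequality $\|W_{1:N}\|_2^2 \geq h^2$ transfers the bound to the quantity of interest.

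A union bound shows both inequalities hold simultaneously with probability at least $1 - (1+C_0)/C_1$. Choosing $C_1 > (1+C_0)/(1-C_2)$ makes this at least $C_2$, completing the proof. I do not expect a serious obstacle here: the computation of the expectation and variance is routine by iterated conditioning, and the main point requiring care is verifying that $h$ is genuinely multilinear in i.i.d.~Gaussian inputs so that Lemma \ref{lem:carbery} is applicable, which we have just argued.
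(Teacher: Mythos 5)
Your proposal is correct and follows essentially the same route as the paper: both compute the per-coordinate variance $s^{2N}d_1\cdots d_{N-1}$, get the upper bound from Markov's inequality (you apply it to the full sum via $\ex[\|W_{1:N}\|_2^2]=s^{2N}d_0d_1\cdots d_{N-1}$, the paper applies it per coordinate and union-bounds over the $d_0$ coordinates — an immaterial difference), and get the lower bound by applying the Carbery--Wright anti-concentration bound (Lemma~\ref{lem:carbery}) to a single multilinear coordinate polynomial and using $\|W_{1:N}\|_2^2\geq h^2$. The final union bound and choice of $C_1$ depending on $C_2$ and the absolute constant $C_0$ also match the paper's argument.
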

\begin{proof}
 Let $f(W_1, \ldots, W_N) = \| W_{1:N}\|_2^2$, so that $f$ is a polynomial of degree $2N$ in the entries of $W_1, \ldots, W_N$. Notice that
  $$
f(W_1, \ldots, W_N) = \sum_{i_0 = 1}^{d_0} \left( \sum_{i_1=1}^{d_1} \cdots \sum_{i_{N-1} = 1}^{d_{N-1}} (W_N)_{1,i_{N-1}} (W_{N-1})_{i_{N-1}, i_{N-2}} \cdots (W_1)_{i_1, i_0} \right)^2.
$$
For $1 \leq i_0 \leq d_0$, set
$$
g_{i_0}(W_1, \ldots, W_N) =  \sum_{i_1=1}^{d_1} \cdots \sum_{i_{N-1} = 1}^{d_{N-1}} (W_N)_{1,i_{N-1}} (W_{N-1})_{i_{N-1}, i_{N-2}} \cdots (W_1)_{i_1, i_0},
$$
so that $f = \sum_{i_0 = 1}^{d_0} g_{i_0}^2$. Since each $g_{i_0}$ is a multilinear polynomial in $W_1,\ldots,W_N$, we have that $\ex[g_{i_0}(W_1, \ldots, W_N)] = 0$ for all $1 \leq i_0 \leq d_0$. Also
\begin{eqnarray}
  \Var[g_{i_0}(W_1, \ldots, W_N)] &=& \ex[g_{i_0}(W_1, \ldots, W_N)^2]\nonumber\\
                              &=& \sum_{i_1=1}^{d_1} \cdots \sum_{i_{N-1} = 1}^{d_{N-1}} \ex\left[ (W_N)_{1,i_{N-1}}^2 (W_{N-1})_{i_{N-1}, i_{N-2}}^2 \cdots (W_1)_{i_1, i_0}^2 \right]\nonumber\\
                              &=& d_1 d_2 \cdots d_{N-1} s^{2N} \nonumber.                               
\end{eqnarray}
It then follows by Markov's inequality that for any $k \geq 1$, $\pr[g_{i_0}^2 \geq k s^{2N} d_1 \cdots d_{N-1}] \leq 1/k$. For any constant $B_1$ (whose exact value will be specified below), it follows that
\begin{eqnarray}
  &&\pr[f(W_1, \ldots, W_N) {\geq} B_1 d_0^2 d_1 d_2 \cdots d_{N-1} s^{2N}]\nonumber\\
  &=& \pr\left[ \sum_{i_0=1}^{d_0} g_{i_0}(W_1, \ldots, W_N)^2 {\geq} B_1 d_0^2 d_1 d_2 \cdots d_{N-2} s^{2N} \right]\nonumber\\
  & \leq & d_0 \cdot \pr[g_1(W_1, \ldots, W_N)^2 {\geq} B_1 d_0d_1 \cdots d_{N-1} s^{2N}]\nonumber\\
  \label{eq:poly_ub}
  & \leq & 1/B_1.
\end{eqnarray}
Next, by Lemma~\ref{lem:carbery}, there is an absolute constant $C_0 > 0$ such that for any $\ep > 0$, and any $1 \leq i_0 \leq d_0$,
$$
\pr\left[ |g_{i_0}(W_1, \ldots, W_N)| \leq \ep^N \sqrt{s^{2N} d_1 \cdots d_{N-1}} \right] \leq C_0 N \ep.
$$
Since $f^2 \geq g_{i_0}^2$ for each $i_0$, it follows that
\begin{equation}
  \label{eq:poly_lb}
\pr[ f(W_1, \ldots, W_N) \geq \ep^{2N} s^{2N} d_1 \cdots d_{N-1} ] \geq 1 - C_0 N \ep.
\end{equation}
Next, given $0 < C_2 < 1$, choose $\ep = (1-C_2)/(2C_0N)$, and $B_1 = 2/(1-C_2)$. Then by (\ref{eq:poly_ub}) and (\ref{eq:poly_lb}) and a union bound, we have that
$$
\pr\left[\left( \frac{1-C_2}{2C_0N}\right)^{2N} s^{2N} d_1 \cdots d_{N-1} \leq f(W_1, \ldots, W_N) \leq \frac{2}{1-C_2} {s^{2N}}d_0^2 d_1 \cdots d_{N-1} \right] \geq C_2.
$$
The result of the lemma then follows by taking $C_1 = \max\left\{ \frac{2}{1-C_2}, \frac{2C_0}{1-C_2} \right\}$.
\end{proof}

\begin{lemma}
  \label{lem:rotation_inv}
  Let $N, d_0, \ldots, d_{N-1} \in \BN$, and set $d_N = 1$. Suppose $W_j \in \BR^{d_j \times d_{j-1}}$ for $1 \leq j \leq N$, are matrices whose entries are i.i.d.~Gaussians with mean~$0$ and standard deviation~$s$. Then, the distribution of $W_{1:N}$ is rotation-invariant.
\end{lemma}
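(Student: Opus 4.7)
The plan is to prove rotation-invariance by reducing it to the well-known rotation-invariance of the i.i.d.~Gaussian distribution on $\mathbb{R}^{d_0}$, exploiting the structure of matrix multiplication and the independence of the layers. Since $d_N = 1$, the end-to-end matrix $W_{1:N} = W_N W_{N-1} \cdots W_1$ is a $1 \times d_0$ row vector, so ``rotation-invariance'' means $W_{1:N} O \stackrel{d}{=} W_{1:N}$ for every orthogonal $O \in \mathbb{R}^{d_0 \times d_0}$.

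First, I would isolate the key input: the first layer $W_1 \in \mathbb{R}^{d_1 \times d_0}$ has i.i.d.~mean-zero Gaussian entries, so each row of $W_1$ is an independent isotropic Gaussian vector in $\mathbb{R}^{d_0}$, and the distribution of such a row is invariant under right-multiplication by any orthogonal $O$. Since this holds independently for each row, the joint distribution of $W_1$ on $\mathbb{R}^{d_1 \times d_0}$ satisfies $W_1 O \stackrel{d}{=} W_1$.

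Next, I would use independence: $W_1$ is independent of $W_2, \ldots, W_N$, so for any orthogonal $O$, the tuples $(W_1 O, W_2, \ldots, W_N)$ and $(W_1, W_2, \ldots, W_N)$ have the same joint distribution. Applying the (deterministic) map $(A_1, \ldots, A_N) \mapsto A_N A_{N-1} \cdots A_1$ to both sides preserves equality in distribution, yielding
\[
W_{1:N} O \;=\; W_N W_{N-1} \cdots W_2 (W_1 O) \;\stackrel{d}{=}\; W_N W_{N-1} \cdots W_2 W_1 \;=\; W_{1:N}.
\]
Since $O$ was arbitrary, this is exactly the claim of rotation-invariance.

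There is no real obstacle here — the argument is essentially a one-line observation plus an independence step. The only subtle point worth stating carefully is that one applies the rotation only to $W_1$ (the rightmost factor) rather than trying to absorb it into later layers, so that one exploits rotation-invariance on the $\mathbb{R}^{d_0}$-side where it is trivially available from the i.i.d.~Gaussian assumption.
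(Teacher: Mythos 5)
Your proof is correct and follows essentially the same route as the paper: establish $W_1 O \stackrel{d}{=} W_1$ for orthogonal $O$ (the paper does this by an explicit entry-wise mean/covariance computation, you by citing rotation-invariance of the isotropic Gaussian rows, which is the same fact), then use independence of the layers to push the rotation through the product. No gaps.
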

\begin{proof}
  First we remark that for any orthogonal matrix $O \in \BR^{d_0 \times d_0}$, the distribution of $W_1$ is the same as that of $W_1O$. To see this, let us denote the rows of $W_1$ by $(W_1)_1, \ldots, (W_1)_{d_1}$, and the columns of $O$ by $O^1, \ldots, O^{d_0}$. Then the $(i_1, i_0)$ entry of $W_1O$, for $1 \leq i_1 \leq d_1, 1 \leq i_0 \leq d_0$ is $\langle (W_1)_{i_1}, O^{i_0} \rangle$, which is a Gaussian with mean~$0$ and standard deviation~$s$, since $\| O^{i_0}\|_2 = 1$. Since $\langle O^{i_0}, O^{i_0'} \rangle = 0$ for $i_0 \neq i_0'$, the covariance between any two distinct entries of $W_1O$ is 0. Therefore, the entries of $W_1O$ are independent Gaussians with mean~$0$ and standard deviation~$s$, just as are the entries of $W_1$.

  But now for any matrix $O \in \BR^{d_0 \times d_0}$, the distribution of $W_{1:N}O$ is the distribution of $W_N W_{N-1} \cdots W_2 (W_1O)$, which is the same as the distribution of $W_N W_{N-1} \cdots W_2 W_1 = W_{1:N}$, since $W_1, W_2, \ldots, W_N$ are all independent.
\end{proof}

For a dimension~$d\in\N$, radius~$r>0$, and $0 < h < r$, a {\it $(d,r)$-hyperspherical cap of height~$h$} is a subset $\MC \subset B^d(r)$ of the form $\{ x \in B^d(r) : \langle x, u \rangle \geq r-h\}$, where $u$ is any $d$-dimensional unit vector. We define the {\it area of a $(d,r)$-hyperspherical cap of height $h$}~---~$\MC$~---~to be $\sigma^{d,r}(\partial \MC \cap S^d(r))$.
\begin{lemma}
  \label{lem:volcap}
  For $d \geq 20$, choose any $0 \leq h \leq 1$. Then, the area of a $(d,1)$-hyperspherical cap of height $h$ is at least 
  $$
  \frac{3 -  4 F((1-h) \sqrt{d-3})}{2}.
  $$
\end{lemma}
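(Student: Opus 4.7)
The plan is to reduce the cap's (normalized) area to a one-dimensional integral over the density of a single coordinate, and then bound that integral by a Gaussian tail via a clean pointwise density comparison. By rotation-invariance of $\sigma^{d,1}$ we may take $u = e_1$, so the quantity of interest is $A := \Pr[X_1 \geq 1-h]$ where $X$ is uniform on $S^d$. Slicing $S^d$ orthogonally to $e_1$ shows that $X_1$ admits the density $\phi_d(t) = c_d\,(1-t^2)^{(d-3)/2}$ on $[-1,1]$, where $c_d = \Gamma(d/2)/(\sqrt{\pi}\,\Gamma((d-1)/2))$. Since $\phi_d$ is symmetric, $\int_0^1 \phi_d = 1/2$, giving $A = \tfrac12 - \int_0^{1-h} \phi_d(t)\,dt$, so it suffices to upper bound this integral.

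The key step is to establish the pointwise comparison
$$
\phi_d(t) \;\leq\; 2\cdot\frac{\sqrt{d-3}}{\sqrt{2\pi}}\, e^{-(d-3)t^2/2} \quad \text{for all } t \in [0,1].
$$
This follows from combining two elementary ingredients. First, from the Taylor series $-\log(1-t^2) = t^2 + t^4/2 + \cdots \geq t^2$ on $[0,1)$, so $(1-t^2)^{(d-3)/2} \leq e^{-(d-3)t^2/2}$. Second, the classical Gamma-function bound $\Gamma(x+1/2)/\Gamma(x) \leq \sqrt{x}$ applied at $x = (d-1)/2$ gives $c_d \leq \sqrt{(d-1)/(2\pi)}$; and the elementary inequality $\sqrt{d-1} \leq 2\sqrt{d-3}$, which holds whenever $d \geq 4$ (hence certainly for $d \geq 20$), absorbs the remaining discrepancy with the factor of two to spare.

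Finally, integrating the pointwise bound over $[0,1-h]$ and substituting $u = t\sqrt{d-3}$ yields
$$
\int_0^{1-h} \phi_d(t)\,dt \;\leq\; 2\int_0^{(1-h)\sqrt{d-3}} \frac{1}{\sqrt{2\pi}} e^{-u^2/2}\,du \;=\; 2F\!\bigl((1-h)\sqrt{d-3}\bigr) - 1,
$$
so $A \geq \tfrac12 - \bigl(2F((1-h)\sqrt{d-3}) - 1\bigr) = \tfrac{3 - 4F((1-h)\sqrt{d-3})}{2}$, as claimed. The only subtle part is calibrating the factor of two in the density comparison: it is precisely the slack in the chain $c_d \leq \sqrt{(d-1)/(2\pi)} \leq 2\sqrt{(d-3)/(2\pi)}$ that produces the coefficient ``$4F$'' in the final expression rather than a sharper ``$2F$''; everything else is routine integration, and the hypothesis $d \geq 20$ is considerably stronger than what this particular step requires.
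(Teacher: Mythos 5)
Your proof is correct, and it takes a genuinely different route from the paper's at the key step. Both arguments ultimately reduce to comparing $\int_0^{1-h}(1-t^2)^{(d-3)/2}\,dt$ against a Gaussian integral via $1-t^2\leq e^{-t^2}$~---~that part is identical. The difference lies in how the normalization is controlled. The paper invokes the Chudnov formula for the cap area, $\tfrac{1}{2}\bigl(1-C_{d-2}(h)/C_{d-2}(0)\bigr)$ with $C_{d-2}(h)=\int_0^{1-h}(1-t^2)^{(d-3)/2}\,dt$, and then \emph{lower}-bounds the denominator $C_{d-2}(0)$ by a second Gaussian-type computation (using $1-t^2\geq e^{-2t^2}$ on $[0,1/2]$ together with a normal tail estimate); the resulting mismatch of Gaussian constants ($\sqrt{2}$) and the error term $2\exp(-(d-3)/4)$ are then absorbed using $1/(1-y)\leq 1+2y$ and $1+4\exp(-(d-3)/4)\leq\sqrt{2}$, which is exactly where $d\geq20$ enters. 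You instead identify the cap area as the tail probability of one coordinate of a uniform point on the sphere, write the normalizing constant exactly as $\Gamma(d/2)/(\sqrt{\pi}\,\Gamma((d-1)/2))$, and control it by Wendel's inequality $\Gamma(x+1/2)/\Gamma(x)\leq\sqrt{x}$; the factor-of-two slack in $\sqrt{d-1}\leq2\sqrt{d-3}$ then directly yields the coefficient $4$ in $4F(\cdot)$. Your version is cleaner, avoids the tail-estimate bookkeeping entirely, and~---~as you observe~---~needs only $d\geq4$ for this lemma (the hypothesis $d\geq20$ is required elsewhere in the paper). Both approaches are sound.
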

\begin{proof}
  In \cite{chudnov_minimax_1986}, it is shown that the area of a $(d,1)$-hyperspherical cap of height $h$ is given by $\frac{1 - C_{d-2}(h)/C_{d-2}(0)}{2}$, where
  $$
C_d(h) := \int_0^{1-h} (1-t^2)^{(d-1)/2} dt.
$$
Next, by the inequality $1-t^2 \geq \exp(-2t^2)$ for $0 \leq t \leq 1/2$, 
\begin{eqnarray}
  \int_0^1 (1-t^2)^{(d-3)/2} dt & \geq & \int_0^{1/2} \exp\left(2 \cdot  \frac{-t^2(d-3)}{2} \right) dt\nonumber\\
                                & = & \sqrt{\pi/(d-3)} \cdot \frac{2F(\sqrt{(d-3)/2}) - 1}{2}\nonumber\\
  \label{eq:cd0}
                                & \geq & \sqrt{\pi/(d-3)} \cdot \frac{1 - 2 \exp(-(d-3)/4)}{2},
\end{eqnarray}
where the last inequality follows from the standard estimate $F(x) \geq 1 - \exp(-x^2/2)$ for $x \geq 1$. 
Also, since $1-t^2 \leq \exp(-t^2)$ for all $t$,
\begin{eqnarray}
  \int_0^{1-h} (1-t^2)^{(d-3)/2} dt & \leq &  \int_0^{1-h} \exp\left( \frac{-t^2 (d-3)}{2} \right) dt \nonumber\\
  \label{eq:cdh}
  & = & \sqrt{2\pi/(d-3)} \cdot \frac{2F((1-h) \sqrt{d-3}) - 1}{2}.
\end{eqnarray}
Therefore, for $d \geq 20$, by (\ref{eq:cd0}) and (\ref{eq:cdh}),
\begin{eqnarray}
  \frac{1-C_{d-2}(h)/C_{d-2}(0)}{2} & \geq &\frac{1 - \frac{\sqrt{2} \cdot (2F((1-h)\sqrt{d-3}) - 1)}{1 - 2 \exp(-(d-3)/4)}}{2} \nonumber\\
  & \geq & \frac{1 - \sqrt{2} \cdot (2F((1-h)\sqrt{d-3}) - 1) \cdot (1 + 4 \exp(-(d-3)/4))}{2}\nonumber\\
                            & \geq & \frac{3 - 4 F((1-h)\sqrt{d-3})}{2},\nonumber
\end{eqnarray}
where the second inequality has used $1/(1-y) \leq 1 + 2y$ for all $0 < y < 1/2$ (and where $y = 2\exp((-(d-3)/4)) < 2 \exp(-17/4) < 1/2$), and the final inequality uses $1 + 4\exp(-(d-3)/4) \leq \sqrt{2}$ for $d \geq 20$. The above chain of inequalities gives us the desired result.
\end{proof}

\begin{lemma}
  \label{lemma:haar_def}
Let $d \in \BN, d \geq 20$; $a \geq 1$ be a real number (possibly depending on $d$); and $\Phi \in \BR^d$ be some vector.  
Set $r = \| \Phi\|_2 / \sqrt{ad}$, and suppose that $V \in S^d(r)$ is drawn according to the uniform measure.
Then, with probability at least $\frac{3 - 4F(2/\sqrt{a})}{2}$, $V$ will have deficiency margin $\| \Phi\|_2 / (ad)$ with respect to $\Phi$.
\end{lemma}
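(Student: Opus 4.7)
The plan is to reduce the problem to a hyperspherical cap area calculation and then invoke Lemma~\ref{lem:volcap}. Since $d_N = 1$, the matrix $\Phi$ is a vector and $\sigma_{min}(\Phi) = \|\Phi\|_2$, so the deficiency margin condition $\|V - \Phi\|_2 \leq \|\Phi\|_2 - \|\Phi\|_2/(ad)$ is equivalent, upon expanding $\|V - \Phi\|_2^2 = r^2 - 2\langle V,\Phi\rangle + \|\Phi\|_2^2$ and using $\|V\|_2 = r = \|\Phi\|_2/\sqrt{ad}$, to a lower bound on $\langle V, u\rangle$, where $u := \Phi/\|\Phi\|_2$. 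A direct computation gives that the deficiency margin condition holds if and only if
\[
\langle V, u\rangle \;\geq\; \frac{3\|\Phi\|_2}{2ad} - \frac{\|\Phi\|_2}{2(ad)^2}.
\]

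Next I would recognize the set of $V \in S^d(r)$ satisfying this inequality as a hyperspherical cap with apex along direction $u$. To normalize it to a $(d,1)$-cap, rescale by $r$: writing $V = r\tilde V$ with $\tilde V \in S^d(1)$ uniform, the condition becomes $\langle \tilde V, u\rangle \geq 1 - h$ with
\[
h \;=\; 1 - \frac{3}{2\sqrt{ad}} + \frac{1}{2(ad)^{3/2}}.
\]
I would then check that $0 \leq h \leq 1$ (both sides are easy since $ad \geq 20$), which lets us apply Lemma~\ref{lem:volcap}.

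By Lemma~\ref{lem:volcap}, the cap has area at least $\bigl(3 - 4F((1-h)\sqrt{d-3})\bigr)/2$. The only remaining task is to bound $(1-h)\sqrt{d-3}$ from above by $2/\sqrt{a}$, so that monotonicity of $F$ yields the stated bound. Since $1 - h = \frac{3}{2\sqrt{ad}} - \frac{1}{2(ad)^{3/2}} \leq \frac{3}{2\sqrt{ad}}$, it suffices to verify $\frac{3\sqrt{d-3}}{2\sqrt{ad}} \leq \frac{2}{\sqrt{a}}$, i.e., $16 d \geq 9(d-3)$, which holds trivially for all $d$. Combining these steps concludes the proof.

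I expect the calculation to be essentially mechanical; the only place where one must be a little careful is the bookkeeping to arrive at a clean bound of the form $2/\sqrt{a}$ on $(1-h)\sqrt{d-3}$, since the deficiency margin $\|\Phi\|_2/(ad)$ contributes lower-order correction terms (the $1/(2(ad)^{3/2})$ piece) that must be absorbed. The assumption $d \geq 20$ is what makes room for this absorption.
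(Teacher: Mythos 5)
Your proof is correct. It rests on the same key ingredient as the paper's proof (the cap-area bound of Lemma~\ref{lem:volcap}) and the same overall strategy of identifying the deficiency-margin event with a hyperspherical cap, but the execution differs in a worthwhile way. The paper normalizes $\|\Phi\|_2=1$, introduces the region $\MD$ (points of $B^d(r)$ within distance $1-1/(ad)$ of $\Phi$), constructs an auxiliary cap $\MC$ of height $r-2/(ad)$, and proves the containment $\MC\subseteq\MD$ by a law-of-cosines computation; the probability is then the area of $\partial\MC\cap S^d(r)$. You instead expand $\|V-\Phi\|_2^2$ directly and find that the deficiency-margin event is \emph{exactly} the cap $\langle \tilde V,u\rangle \geq \tfrac{3}{2\sqrt{ad}}-\tfrac{1}{2(ad)^{3/2}}$, which is a (weakly) larger cap than the paper's $\langle \tilde V,u\rangle\geq 2/\sqrt{ad}$; since $\tfrac{3}{2\sqrt{ad}}\leq\tfrac{2}{\sqrt{ad}}$, both routes land on the same final bound $(1-h)\sqrt{d-3}\leq 2/\sqrt a$ and hence the same constant $\tfrac{3-4F(2/\sqrt a)}{2}$. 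Your algebraic identification of the exact cap eliminates the containment argument and the geometric bookkeeping with $P$, $Q$, $\MH$, and $\MT$; what it gives up is only the geometric picture. Your verifications that $0\leq h\leq 1$ (using $ad\geq 20$) and that $9(d-3)\leq 16d$ are both correct, so the appeal to Lemma~\ref{lem:volcap} and the monotonicity of $F$ go through.
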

\begin{proof}
  By rescaling, we may assume without loss of generality that $\| \Phi\|_2 = 1$, so that $r = 1/\sqrt{ad}$. Let $\MD$ denote the intersection of $B^{d}(r)$ with the open $d$-ball of radius $1-1/(ad)$ centered at $\Phi$. Let $\MC \subset B^{d}(r)$ denote the $(d,r)$-hyperspherical cap of height $r \cdot \big( 1 - 2/(\sqrt{ad})\big) = r - 2/(ad)$ whose base is orthogonal to the line between $\mathbf{0}$ and $\Phi$ (see Figure \ref{fig:initfig}). Note that ${\sigma}^{d,r}(\partial\MD \cap S^d(r))$, the Haar measure of the portion of $\partial\MD$ intersecting $S^d(r)$, gives the probability that $V$ belongs to the boundary of $\MD$. By Lemma \ref{lem:volcap} above (along with rescaling arguments), since $d \geq 20$, $\sigma^{d,r}(\partial \MC \cap S^d(r)) \geq \frac 12 \cdot (3 - 4F(2/\sqrt{a}))$, and therefore $V \in {\partial}\MC$ with at least this probability. 
  

We next claim that $\MC \subseteq \MD$. To see this, first let $\MT \subset \BR^d$ denote the $(d-1)$-sphere of radius $1-1/(ad)$ centered at $\Phi$ (see Figure \ref{fig:initfig}). Let $P$ be the intersection of $\MT$ with the line from $\mathbf{0}$ to $\Phi$, and $Q$ denote the intersection of this line with the unique hyperplane of codimension 1 containing $\MT \cap \partial B^{d}(r)$~---~we denote this hyperplane by $\MH$. If we can show that $\norm{P-Q}_2 \leq 1/(ad)$, then it follows that $\MC$ lies entirely on the other side of $\MH$ as $\mathbf{0}$, which will complete the proof that $\MC \subseteq \MD$.

The calculation of $\norm{P-Q}_2$ is simply an application of the law of cosines: letting $\theta$ be the angle determining the intersection of $\partial B^{d}(r)$ and $\MT$ (see Figure~\ref{fig:initfig}), note that
$$
(1-1/(ad))^2 = r^2 + 1^2 - 2r\cos \theta = 1/(ad) + 1 - 2/\sqrt{ad} \cdot \cos(\theta),
$$
so
$$
d(P,Q) = r \cos \theta - 1/(ad) = \frac 12 (1/(ad) - 1/(a^2d^2)) < 1/(ad),
$$
as desired.

Using that $\MC \subseteq \MD$, we continue with the proof. Notice the fact that $\MC\subseteq \MD$ is equivalent to $\partial \MC \cap S^d(r) \subseteq \partial \MD \cap S^d(r)$, by the structure of $\MC$ and~$\MD$. {Since the probability that $V$ lands in $\partial\MC$ is at least $\frac{3-4F(2/\sqrt{a})}{2}$, this lower bound applies to $V$ landing in $\partial\MD$ as well}. Since all $V \in \partial\MD$ have distance at most $1-1/(ad)$ from $\Phi$, and since $\sigma_{min}(\Phi) = \| \Phi\|_2 = 1$, it follows that for any $V \in \partial\MD$, $\| V - \Phi\|_2 \leq \sigma_{min}(\Phi) - 1/(ad)$. Therefore, with probability of at least $\frac{3-4F(2/\sqrt a)}{2}$, $V$ has deficiency margin $\| \Phi\|_2/(ad)$ with respect to~$\Phi$.
\end{proof}

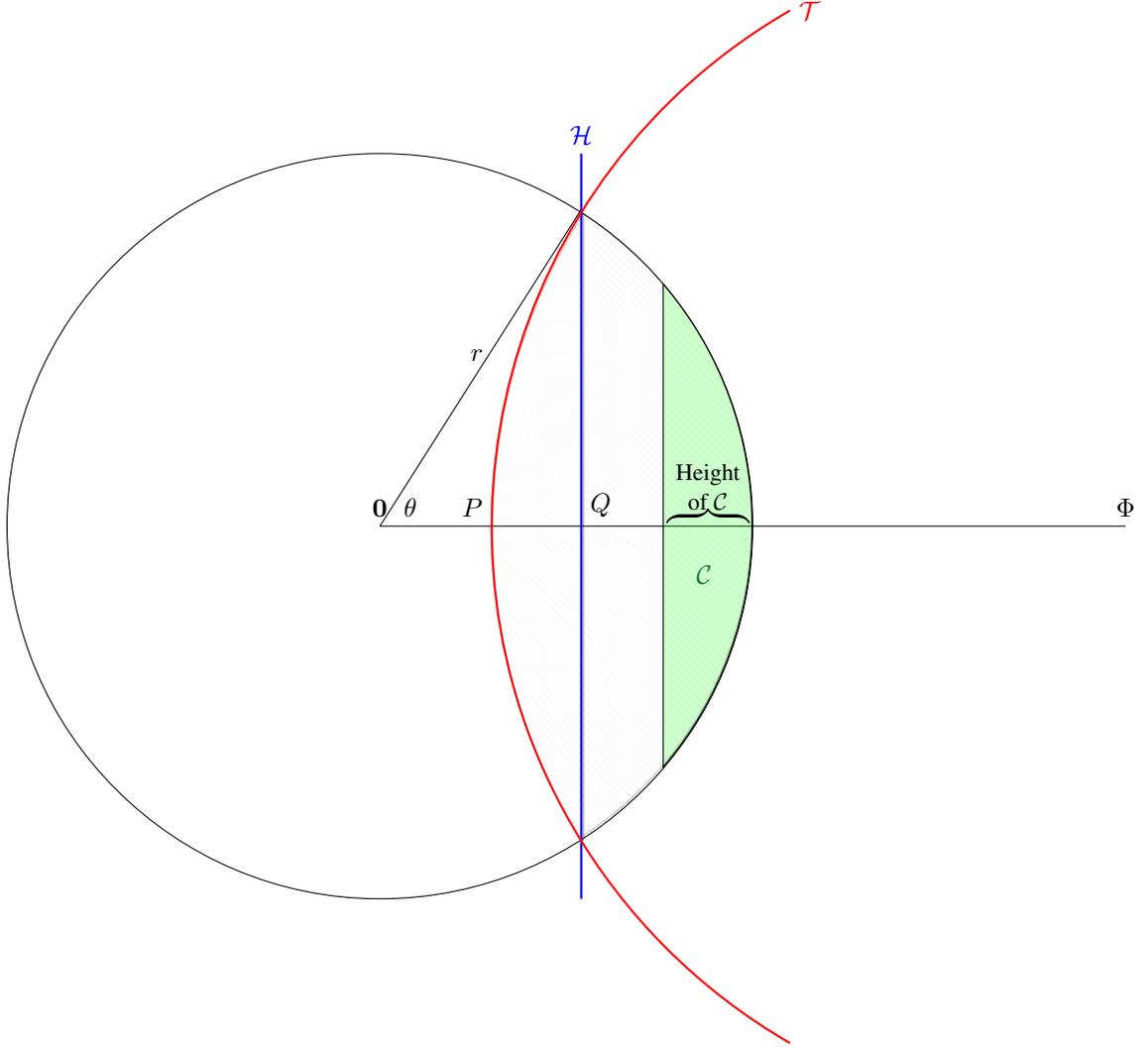
\begin{figure}
\begin{center}
\begin{tikzpicture}
  \draw[very thin, pattern = north west lines, pattern color = gray, opacity=0.25] (2.73,-4.2) -- (2.73, 4.2) arc (56.65:-56.65:5cm) -- cycle;
  \draw[very thin, pattern = north west lines, pattern color = gray, opacity=0.25] (2.73,-4.2) -- (2.73, 4.2) arc (148:212:8cm) -- cycle;
  \draw (0,0) circle (5cm);
  \draw (0,0) -- (10,0);
  \draw (0,0) -- (2.7, 4.25);
  \draw [thick, blue] (2.7,-5) -- (2.7,5);
  \node [above] at (2.7,5) {\color{blue} $\MH$};
  \node [above] at (10, 0) {$\Phi$};
  \node [above] at (0,0) {$\mathbf{0}$};
  \node [above left] at (1.5, 0) {$P$};
  \node [above right] at (2.7, 0) {$Q$};
  \node [above right] at (0.2, 0) {$\theta$};
  \node [above] at (1.3, 2.1) {$r$};
  \node [right] at (5.5, 6.92) {\color{red} $\MT$};
  \draw [thick,red] (5.5,6.92) arc (120:240:8cm);
  \filldraw[fill opacity=0.2,fill=green] (3.8,-3.25) -- (3.8,3.25) arc (40.39:-40.39:5cm) -- cycle;
  \node [above] at (4.35,-0.9) {\color{darkspringgreen} $\MC$};
  \draw[decoration={calligraphic brace,amplitude=5pt}, decorate, line width=1.25pt] (3.85,0.05) node {} -- (4.95,0.05);
  \node [above,align=center] at (4.4, 0.1) {\small Height \\ \small of $\MC$};
\end{tikzpicture}
\end{center}
\caption{Figure for proof of Lemma \ref{lemma:haar_def}. The dashed region denotes $\MD$. Not to scale.} 
\label{fig:initfig}
\end{figure}

\begin{lemma}[Lemma~\ref{lem:integrate_rad} restated]
  \label{lem:integrate_rad_restate}
  Let $d \in \BN, d \geq 20$; $b_2 > b_1 \geq 1$ be real numbers (possibly depending on $d$); and $\Phi \in \BR^d$ be a vector. Suppose that $\mu$ is a rotation-invariant distribution over $\BR^d$ with a well-defined density, such that, for some $0 < \ep < 1$,
  $$
\pr_{V \sim \mu} \left[ \frac{\| \Phi\|_2}{\sqrt{b_2d}} \leq \| V \|_2 \leq \frac{\| \Phi\|_2}{\sqrt{b_1d}}\right] \geq 1-\ep.
$$
Then, with probability at least $(1-\ep) \cdot \frac{3 - 4F(2/\sqrt{b_1})}{2}$, $V$ will have deficiency margin $\| \Phi\|_2 / (b_2d)$ with respect to $\Phi$.
\end{lemma}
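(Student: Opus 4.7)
The plan is to reduce the statement to Lemma~\ref{lemma:haar_def}, which already handles the case of $V$ uniformly distributed on a sphere of the specific radius $\|\Phi\|_2/\sqrt{ad}$. Since $\mu$ is rotation-invariant and has a density, conditioning on $\|V\|_2 = r$ yields the uniform (Haar) distribution $\sigma^{d,r}$ on the sphere $S^d(r)$. So the natural strategy is to disintegrate $\mu$ along the radius, apply Lemma~\ref{lemma:haar_def} on each sphere in the "good" radius range $[\|\Phi\|_2/\sqrt{b_2 d},\, \|\Phi\|_2/\sqrt{b_1 d}]$, and then integrate.

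Concretely, for a value $r$ in this range, set $a := \|\Phi\|_2^2/(r^2 d)$, so that $r = \|\Phi\|_2/\sqrt{ad}$ and $a \in [b_1, b_2]$ (here $a \geq b_1 \geq 1$, which is the hypothesis of Lemma~\ref{lemma:haar_def}; and $d \geq 20$ as needed). Applying the lemma, the conditional probability, given $\|V\|_2 = r$, that $V$ has deficiency margin $\|\Phi\|_2/(ad) = r^2/\|\Phi\|_2$ with respect to $\Phi$ is at least $(3 - 4F(2/\sqrt{a}))/2$. Two monotonicity observations then upgrade this into a uniform bound over all $r$ in the good range:
\begin{itemize}
\item Since $r \geq \|\Phi\|_2/\sqrt{b_2 d}$, we have $r^2/\|\Phi\|_2 \geq \|\Phi\|_2/(b_2 d)$, so any $V$ with deficiency margin $\|\Phi\|_2/(ad)$ automatically has deficiency margin $\|\Phi\|_2/(b_2 d)$.
\item Since $a \geq b_1$, we get $2/\sqrt{a} \leq 2/\sqrt{b_1}$, and monotonicity of $F$ gives $(3 - 4F(2/\sqrt{a}))/2 \geq (3 - 4F(2/\sqrt{b_1}))/2$.
\end{itemize}

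Therefore, conditional on $\|V\|_2 = r$ for any $r$ in the good range, $V$ has deficiency margin $\|\Phi\|_2/(b_2 d)$ with probability at least $(3 - 4F(2/\sqrt{b_1}))/2$. Integrating this conditional lower bound against the marginal law of $\|V\|_2$ under $\mu$, and using the hypothesis that $\|V\|_2$ lies in the good range with probability at least $1-\ep$, yields
$$
\pr_{V \sim \mu}\!\left[V \text{ has deficiency margin } \|\Phi\|_2/(b_2 d)\right] \;\geq\; (1-\ep)\cdot \frac{3 - 4F(2/\sqrt{b_1})}{2},
$$
which is the desired bound.

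There is no substantive obstacle here; the content has been packaged inside Lemma~\ref{lemma:haar_def} and Lemma~\ref{lem:rotation_inv}. The only mild care required is to verify both monotonicities simultaneously at the endpoints of the radius interval: the lower endpoint $r = \|\Phi\|_2/\sqrt{b_2 d}$ controls the deficiency margin (via $r^2/\|\Phi\|_2 \geq \|\Phi\|_2/(b_2 d)$), while the upper endpoint $r = \|\Phi\|_2/\sqrt{b_1 d}$ controls the probability bound (via $a \geq b_1$ in the $F$-term). This is precisely why the two bounds in the conclusion come from different endpoints of the radius interval.
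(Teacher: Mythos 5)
Your proposal is correct and follows essentially the same route as the paper's proof: both rescale/disintegrate $\mu$ over the radius, apply Lemma~\ref{lemma:haar_def} on each sphere $S^d(r)$ with $a = \|\Phi\|_2^2/(r^2 d) \in [b_1,b_2]$, and use exactly the two endpoint monotonicities you identify (lower endpoint controlling the margin, upper endpoint controlling the $F$-term) before integrating against the radial marginal. No gaps.
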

\begin{proof}
  By rescaling we may assume that $\| \Phi\|_2 = 1$ without loss of generality. Then the deficiency margin of $V$ is equal to $1 - \| V- \Phi\|_2$. $\mu$~has a well-defined density, so we can set $\hat{\mu}$ to be the probability density function of $\| V\|_2$. Since $\mu$ is rotation-invariant, we can integrate over spherical coordinates, giving
  \begin{eqnarray}
    && \pr[1 - \| V - \Phi\|_2 \geq {1} / (b_2d)] \nonumber\\
    &=& \int_0^\infty \pr\big[1 - \| V - \Phi\|_2 \geq {1} / (b_2d) ~\big|~ \| V \|_2 = r \big]\hat \mu(r) dr\nonumber\\
    & \geq & \int_{1/(\sqrt{b_2d})}^{1/(\sqrt{b_1d})} \frac{3 - 4F(2r\sqrt{d} )}{2} \hat \mu(r) dr \nonumber\\
    & \geq & \frac{3 - 4F(2/\sqrt{b_1})}{2} \cdot \int_{1/(\sqrt{b_2d})}^{1/(\sqrt{b_1d})} \hat\mu(r) dr \nonumber\\
    & \geq & \frac{3 - 4F(2/\sqrt{b_1})}{2} \cdot (1-\ep)\nonumber,
  \end{eqnarray}
  where the first inequlaity used Lemma \ref{lemma:haar_def} and the fact that the distribution of $V$ conditioned on $\| V\|_2 = r$ is uniform on $S^d(r)$.
\end{proof}

Now we are ready to prove Claim \ref{claim:margin_init}:
\begin{proof}[Proof of Claim \ref{claim:margin_init}]
  We let $W \in \BR^{1 \times d_0} \simeq \BR^{d_0}$ denote the random vector $W_{1:N}$; also let $\mu$ denote the distribution of $W$, so that by Lemma \ref{lem:rotation_inv}, $\mu$ is rotation-invariant.  Let $C_1$ be the constant from Lemma~\ref{lem:norm_concentrate} for $C_2 = 999/1000$. For some $a \geq 10^5$, the standard deviation of the entries of each $W_{j}$ is given by
  \begin{equation}
    \label{eq:define_s}
s = \left( \frac{\| \Phi\|_2^2}{ad_0^3d_1 \cdots d_{N-1} C_1} \right)^{1/(2N)}.
\end{equation}
Then by Lemma \ref{lem:norm_concentrate},
$$
\pr \left[ \frac{ \| \Phi\|_2^2}{ad_0^3 C_1} \cdot \left( \frac{1}{C_1N} \right)^{2N} \leq \| W\|_2^2 \leq \frac{\| \Phi\|_2^2}{ad_0} \right] \geq {\frac{999}{1000}}.
$$
Then Lemma~\ref{lem:integrate_rad_restate}, with $d = d_0$, $b_1 = a$ and $b_2 = ad_0^2 C_1 \cdot (C_1N)^{2N}$, implies that with probability at least $\frac{999}{1000} \cdot \frac{3 - 4F(2/\sqrt{a})}{2}$, $W$ has deficiency margin $\| \Phi\|_2 / (ad_0^3 C_1^{2N+1} N^{2N})$ with respect to $\Phi$. But $a \geq 10^5$ implies that this probability is at least $0.49$, and from (\ref{eq:define_s}),
\be
\label{eq:deficiency_margin_final}
\frac{\| \Phi\|_2}{ad_0^3 C_1^{2N+1} N^{2N}} = \frac{s^{2N} d_1 \cdots d_{N-1}}{\| \Phi\|_2 (C_1N)^{2N}}.
\ee
Next recall the assumption in the hypothesis that $s\geq{C}_{1}N(c\cdot\norm{\Phi}_2/(d_1\cdots{d}_{N-1}))^{1/2N}$. Then the deficiency margin in (\ref{eq:deficiency_margin_final}) is at least
$$
\frac{\left(C_1N(c \norm{\Phi}_2 / (d_1 \cdots d_{N-1}))^{1/(2N)}\right)^{2N} d_1 \cdots d_{N-1}}{\| \Phi\|_2 (C_1N)^{2N}} = c,
$$
completing the proof.

\end{proof}

\subsection{Proof of Claim~\ref{claim:diverge_balance}} \label{app:proofs:diverge_balance}

\begin{proof}
The target matrices~$\Phi$ that will be used to prove the claim satisfy $\sigma_{min}(\Phi)=1$.
We may assume without loss of generality that $c \geq 3/4$, the reason being that if a matrix has deficiency margin $c$ with respect to $\Phi$ and $c' < c$, it certainly has deficiency margin $c'$ with respect to $\Phi$. 

We first consider the case $d=1$, so that the target and all matrices are simply real numbers; we will make a slight abuse of notation in identifying $1\times1$ matrices with their unique entries. We set $\Phi = 1$. For all choices of $\eta$, we will set the initializations $W_1(0), \ldots, W_N(0)$ so that $W_{1:N}(0) = c$. Then
$$
\| W_{1:N}(0) - \Phi\|_F = |W_{1:N}(0) - \Phi| = 1-c = \sigma_{min}(\Phi) - c,
$$
so the initial end-to-end matrix $W_{1:N}(0) \in \BR^{1 \times 1}$ has deficiency margin $c$. Now fix $\eta$. Choose $A \in \BR$ with 
\begin{eqnarray}
  \label{eq:define_A}
  A &=& \max \left\{\sqrt{\eta N}, \frac{2}{\eta(1-c) c^{(N-1)/N}}, 2000,20/\eta,\left( \frac{20 \cdot 10^{2N-1}}{\eta^{2N}}\right)^{1/(2N-2)} \right\}.
\end{eqnarray}
We will set:
\begin{equation}
  \label{eq:define_wj0}
  W_j(0) = \begin{cases}
    Ac^{1/N} \quad : \quad 1 \leq j \leq N/2 \\
    c^{1/N}/A \quad : \quad N/2 < j \leq N,
  \end{cases}
\end{equation}
so that $W_{1:N}(0) = c$. Then since $L^N(W_1, \ldots, W_N) = \frac 12 (1 - W_N \cdots W_1)^2$, the gradient descent updates are given by
$$
W_{j}(t+1) = W_j(t) - \eta (W_{1:N}(t) - 1) \cdot W_{1:j-1}(t) W_{j+1:N}(t),
$$
where we view $W_1(t), \ldots, W_N(t)$ as real numbers. This gives
\begin{equation}
  W_j(1) = \begin{cases}
    c^{1/N}A - \eta (c-1)c^{(N-1)/N}/A \quad : \quad 1 \leq j \leq N/2 \\
    c^{1/N}/A - \eta (c-1) c^{(N-1)/N}A \quad : \quad N/2 < j \leq N.
  \end{cases}\nonumber
\end{equation}
Since $3/4 \leq c < 1$ and $-\eta(c-1)c^{(N-1)/N}A \geq 0$, we have that $A/2 \leq 3A/4 \leq W_j(1)$ for $1 \leq j \leq N/2$. Next, since $\frac{1-c}{1-c^{1/N}} \leq N$ for $0 \leq c < 1$, we have that $A^2 \geq \eta N \geq \frac{\eta (1-c)}{1-c^{1/N}}$, which implies that $A^2 \geq c^{1/N}A^2 + \eta (1-c)$, or $c^{1/N} A + \frac{\eta(1-c)}{A} \leq A$. Thus $W_j(1) \leq A$ for $N/2 < j \leq N$. 
Similarly, using the same bound $3/4 \leq c < 1$ and the fact that $\eta (1-c) c^{(N-1)/N} A \geq 2$ we get $\frac{3}{16} \eta A \leq W_j(1) \leq \eta A$ for $N/2 < j \leq N$. In particular, for all $1 \leq j \leq N$, we have that $\frac{\min\{\eta, 1\}}{10} A \leq W_j(1) \leq \max\{\eta, 1\} A$.

We prove the following lemma by induction:
\begin{lemma}
  \label{lemma:weight_explode}
  For each $t \geq 1$, the real numbers $W_1(t), \ldots, W_N(t)$ all have the same sign and this sign alternates for each integer $t$. Moreover, there are real numbers $2 \leq B(t) < C(t)$ for $t \geq 1$ such that for $1 \leq j \leq N$, $B(t) \leq |W_j(t)| \leq C(t)$ and $\eta B(t)^{2N-1} \geq 20 C(t)$.
\end{lemma}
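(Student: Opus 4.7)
My approach is induction on $t \geq 1$, centered on rewriting the scalar gradient descent recursion in the compact form
$$W_j(t+1) \;=\; \frac{W_j(t)^2 - K(t)}{W_j(t)}, \qquad K(t) := \eta\bigl(W_{1:N}(t) - 1\bigr)\, W_{1:N}(t).$$
For the base case $t = 1$, I would take $B(1) := (\min\{\eta, 1\}/10)\, A$ and $C(1) := \max\{\eta, 1\}\, A$, which bracket $|W_j(1)|$ by the estimates already derived; all $W_j(1)$ are positive by inspection. The bound $B(1) \geq 2$ is forced immediately by the clauses $A \geq 2000$ and $A \geq 20/\eta$ in the definition of $A$, and the inequality $\eta B(1)^{2N-1} \geq 20\, C(1)$ follows from the clauses $A \geq 2000$ and $A \geq \bigl(20 \cdot 10^{2N-1}/\eta^{2N}\bigr)^{1/(2N-2)}$ by a direct substitution split into the cases $\eta \leq 1$ and $\eta > 1$.

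For the inductive step, suppose the statement holds at $t$. Since $|W_{1:N}(t)| = \prod_j |W_j(t)| \geq B(t)^N \geq 2^N > 1$, the two factors $W_{1:N}(t) - 1$ and $W_{1:N}(t)$ share a sign, so $K(t) > 0$ regardless of the common sign of the weights. Using $K(t) \geq \tfrac{\eta}{2}\,|W_{1:N}(t)|^2 \geq \tfrac{\eta}{2}\, B(t)^{2N}$ together with the inductive inequality $\eta B(t)^{2N-1} \geq 20\, C(t)$ I obtain $K(t) \geq 10\, B(t)\, C(t)$; provided I carry a uniform ratio bound $C(t)/B(t) \leq R_0$ inside the inductive hypothesis, this forces $K(t) > W_j(t)^2$ for every $j$, so the recursion flips the sign of each $W_j(t)$ and delivers both the common-sign property and the sign alternation at $t + 1$. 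The magnitudes then obey $|W_j(t+1)| = K(t)/|W_j(t)| - |W_j(t)|$, a decreasing function of $|W_j(t)|$, so I plan to define $B(t+1) := K(t)/C(t) - C(t)$ and $C(t+1) := K(t)/B(t) - B(t)$; the bound $B(t+1) \geq 2$ then follows from $K(t)/C(t) \geq 10\, B(t)$ combined with the ratio bound.

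The main technical obstacle will be propagating the two quantitative invariants $\eta B(t+1)^{2N-1} \geq 20\, C(t+1)$ and $C(t+1)/B(t+1) \leq R_0$ across the induction. A direct computation yields
$$\frac{C(t+1)}{B(t+1)} \;=\; \frac{C(t)}{B(t)} \cdot \frac{K(t) - B(t)^2}{K(t) - C(t)^2},$$
so the ratio inflates only by a factor controlled by the size of $K(t)$ relative to $C(t)^2$; the estimate $K(t)/C(t)^2 \geq 10\, B(t)/C(t)$ keeps this factor near $1$ when the ratio is not already saturated. I expect to absorb the gradual inflation by choosing the constants comfortably at initialization (ensuring that $\eta B(1)^{2N-1}$ substantially exceeds $20\, C(1)$ and that the initial ratio $C(1)/B(1)$ sits well below $R_0$, by demanding $A$ beyond the listed thresholds by a suitable multiplicative slack), after which the super-exponential growth $B(t+1) \sim \eta B(t)^{2N}/C(t)$, raised to the $(2N-1)$-st power, overwhelms the comparatively modest growth of $C(t+1) \sim \eta C(t)^{2N}/B(t)$ and the slow inflation of the ratio, allowing both invariants to close simultaneously at step $t+1$.
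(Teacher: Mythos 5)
Your reformulation $W_j(t+1) = \bigl(W_j(t)^2 - K(t)\bigr)/W_j(t)$ is exact and the bracketing induction is the right scheme, but the step where you deduce the sign flip has a genuine quantitative hole. You derive $K(t) \geq \tfrac{\eta}{2}B(t)^{2N} \geq 10\,B(t)C(t)$ and then need $K(t) > W_j(t)^2$ for all $j$; taking $W_j(t)=C(t)$, this forces the ratio bound $C(t)/B(t) < 10$. But with your own base-case choices $B(1)=\min\{\eta,1\}A/10$ and $C(1)=\max\{\eta,1\}A$, the ratio is $C(1)/B(1)=10\max\{\eta,1/\eta\}\geq 10$, so the requirement fails already at $t=1$ whenever $\eta\neq 1$ (and is only borderline at $\eta=1$); adding ``multiplicative slack'' to $A$ cannot help, since the ratio does not depend on $A$. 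Worse, your own identity $C(t+1)/B(t+1) = \bigl(C(t)/B(t)\bigr)\cdot\bigl(K(t)-B(t)^2\bigr)/\bigl(K(t)-C(t)^2\bigr)$ shows the ratio is strictly increasing along the dynamics, so it can never be restored below $10$; the deferred claim that the inflation can be absorbed is therefore not merely unproved but unavailable along this route.

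The fix is to not decouple $W_j(t)$ from $K(t)$ when lower-bounding the subtracted term: since $K(t)/W_j(t) = \eta\bigl(W_{1:N}(t)-1\bigr)W_{1:j-1}(t)W_{j+1:N}(t)$ and the product of the other $N-1$ factors is at least $B(t)^{N-1}$, you get $K(t) \geq \tfrac{\eta}{2}W_j(t)^2 B(t)^{2N-2}$ with $\tfrac{\eta}{2}B(t)^{2N-2} \geq 10\,C(t)/B(t) > 1$, hence $K(t) > W_j(t)^2$ with no ratio hypothesis at all. This is effectively what the paper does: it bounds $W_j(t+1) \leq C(t) - \tfrac{\eta}{2}B(t)^{2N-1} \leq -9C(t)$ and $|W_j(t+1)| \leq \eta C(t)^{2N-1}$, then takes the loose but convenient choices $B(t+1)=9C(t)$ and $C(t+1)=\eta C(t)^{2N-1}$, for which the invariant propagates trivially via $\eta B(t+1)^{2N-1} \geq 9^{3}\,\eta C(t)^{2N-1} > 20\,C(t+1)$ (using $N\geq 2$), with no ratio control anywhere. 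Your tight choices $B(t+1)=K(t)/C(t)-C(t)$ and $C(t+1)=K(t)/B(t)-B(t)$ are correct as brackets but make the propagation of $\eta B(t+1)^{2N-1}\geq 20\,C(t+1)$ depend again on the unavailable ratio bound (even positivity of $B(t+1)$ hinges on $K(t)>C(t)^2$), so you should abandon them in favor of the looser pair.
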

\begin{proof}
  First we claim that we may take $B(1) = \frac{\min\{\eta, 1\}}{10} A$ and $C(1) = \max\{ \eta, 1\} A$. We have shown above that $B(1) \leq W_j(1) \leq C(1)$ for all $j$. Next we establish that $\eta B(1)^{2N-1} \geq 20 C(1)$. If $\eta \leq 1$, then
  $$
  \eta B(1)^{2N-1} =\eta^{2N} \cdot (A/10)^{2N-1} \geq 20 A = 20 C(1),
  $$
  where the inequality follows from $A \geq \left(\frac{20 \cdot 10^{2N-1}}{\eta^{2N}}\right)^{1/(2N-2)}$ by definition of $A$. If $\eta \geq 1$, then
  $$
  \eta B(1)^{2N-1} = \eta (A/10)^{2N-1} \geq 20 \eta A = 20 C(1),
  $$
  where the inequality follows from $A \geq 2000 \geq \left(20 \cdot 10^{2N-1}\right)^{1/(2N-2)}$ by definition of $A$.
  
  Now, suppose the statement of Lemma \ref{lemma:weight_explode} holds for some $t$. Suppose first that $W_j(t)$ are all positive for $1 \leq j \leq N$. Then for all $j$, as $B(t) \geq 2$, and $\eta B(t)^{2N-1} \geq 20 C(t)$, 
  \begin{eqnarray}
    W_j(t+1) & \leq & C(t) - \eta \cdot (B(t)^N - 1) \cdot B(t)^{N-1} \nonumber\\
             & \leq & C(t) - \frac{\eta}{2} B(t)^{2N-1}\nonumber\\
             & \leq & -9 C(t)\nonumber,
  \end{eqnarray}
  which establishes that $W_j(t+1)$ is negative for all $j$. 
  Moreover,
  \begin{eqnarray}
    W_j(t+1) & \geq & -\eta (C(t)^N - 1) \cdot C(t)^{N-1} \nonumber\\
             & \geq & -\eta C(t)^{2N-1}. \nonumber
  \end{eqnarray}
  Now set $B(t+1) = 9C(t)$ and $C(t+1) = \eta C(t)^{2N-1}$. Since $N \geq 2$, we have that
  $$
\eta B(t+1)^{2N-1} = \eta (9C(t))^{2N-1} \geq \eta 9^3 C(t)^{2N-1} > 20 \eta C(t)^{2N-1} = 20 C(t+1).
$$
The case that all $W_j(t)$ are negative for $1 \leq j \leq N$ is nearly identical, with the same values for $B(t+1), C(t+1)$ in terms of $B(t), C(t)$, except all $W_{j}(t+1)$ will be positive. 
This establishes the inductive step and completes the proof of Lemma \ref{lemma:weight_explode}.
\end{proof}
By Lemma \ref{lemma:weight_explode}, we have that for all $t \geq 1$, $L^N(W_1(t), \ldots, W_N(t)) = \frac 12 (W_{1:N}(t) - 1)^2 \geq \frac 12 (2^N - 1)^2 > 0$, thus completing the proof of Claim \ref{claim:diverge_balance} for the case where all dimensions are equal to 1.

For the general case where $d_0 = d_1 = \cdots = d_N = d$ for some $d \geq 1$, we set $\Phi = I_d$, and given $c, \eta$, we set $W_j(0)$ to be the $d \times d$ diagonal matrix where all diagonal entries except the first one are equal to 1, and where the first diagonal entry is given by Equation~\eqref{eq:define_wj0}, where $A$ is given by Equation~\eqref{eq:define_A}. It is easily verified that all entries of $W_j(t)$, $1 \leq j \leq N$, except for the first diagonal element of each matrix, will remain constant for all $t \geq 0$, and that the first diagonal elements evolve exactly as in the 1-dimensional case presented above. Therefore the loss in the $d$-dimensional case is equal to the loss in the 1-dimensional case, which is always greater than some positive constant.
\end{proof}
We remark that the proof of Claim \ref{claim:diverge_balance} establishes that the loss $\ell(t):=L^N(W_1(t), \ldots, W_N(t))$ grows at least exponentially in $t$ for the chosen initialization. Such behavior, in which gradients and weights explode, indeed takes place in deep learning practice if initialization is not chosen with care. 

\subsection{Proof of Claim~\ref{claim:diverge_margin}} \label{app:proofs:diverge_margin}

\begin{proof}
We will show that a target matrix $\Phi\in\R^{d\times{d}}$ which is symmetric with at least one negative eigenvalue, along with identity initialization ($W_j(0) = I_d$, $\forall{j}\in\{1,\ldots,N\}$), satisfy the conditions of the claim.
First, note that non-stationarity of initialization is met, as for any $1 \leq j \leq N$,
  $$
\frac{\partial L^N(W_1(0), \ldots, W_N(0))}{\partial W_j(0)} = W_{j+1:N}(0)^\top (W_{1:N}(0) - \Phi) W_{1:j-1}(0) = I_d - \Phi \neq \mathbf{0},
$$
where the last inequality follows since $\Phi$ has a negative eigenvalue. 
  To analyze gradient descent we use the following result, which was established in \cite{bartlett2018gradient}:
  \begin{lemma}[\cite{bartlett2018gradient}, Lemma 6]
    \label{lemma:failure_idinit}
    If $W_1(0), \ldots, W_N(0)$ are all initialized to identity, $\Phi$ is symmetric, $\Phi = UDU^\top$ is a diagonalization of $\Phi$, and gradient descent is performed with any learning rate, then for each $t \geq 0$ there is a diagonal matrix $\hat D(t)$ such that $W_j(t) = U\hat D(t) U^\top$ for each $1 \leq j \leq N$.
  \end{lemma}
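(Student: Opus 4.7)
The plan is to prove the structural invariant by induction on the iteration index $t$, showing not just that each $W_j(t)$ is diagonalized by $U$, but that all $N$ matrices are in fact equal to a common matrix $U\hat D(t)U^\top$.

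For the base case $t=0$, identity initialization gives $W_j(0)=I_d=U I_d U^\top$, so we may take $\hat D(0)=I_d$, which is diagonal.

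For the inductive step, assume there is a diagonal $\hat D(t)$ with $W_j(t)=U\hat D(t)U^\top$ for every $1\le j\le N$. Since $U$ is orthogonal, products telescope as
\[
W_{1:j-1}(t)=U\hat D(t)^{j-1}U^\top,\qquad W_{j+1:N}(t)=U\hat D(t)^{N-j}U^\top,\qquad W_{1:N}(t)=U\hat D(t)^N U^\top,
\]
and $\Phi-W_{1:N}(t)=U(D-\hat D(t)^N)U^\top$. Plugging these into the gradient descent update (equation (\ref{eq:wjupdate}) with $\frac{dL}{dW}(W(t))=W_{1:N}(t)-\Phi$), I would compute
\[
W_j(t+1)=U\hat D(t)U^\top-\eta\, U\bigl[\hat D(t)^{N-j}\bigl(\hat D(t)^N-D\bigr)\hat D(t)^{j-1}\bigr]U^\top.
\]
Here is the crux: all three bracketed factors are diagonal, hence commute, so the bracket equals $\hat D(t)^{N-1}\bigl(\hat D(t)^N-D\bigr)$, an expression that is independent of $j$ and itself diagonal. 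Setting $\hat D(t+1):=\hat D(t)-\eta\,\hat D(t)^{N-1}\bigl(\hat D(t)^N-D\bigr)$ then gives $W_j(t+1)=U\hat D(t+1)U^\top$ simultaneously for every~$j$, closing the induction.

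I do not expect any real obstacle: the proof is purely algebraic once one observes the two facts exploited above, namely that orthogonal conjugation commutes with matrix multiplication, and that diagonal matrices commute (so the ordering of the three factors $\hat D(t)^{N-j}$, $\hat D(t)^N-D$, $\hat D(t)^{j-1}$ is immaterial and their product does not depend on~$j$). The mild subtlety worth stating carefully is that the argument relies on $\Phi$ being diagonalized in an \emph{orthogonal} basis (which holds since $\Phi$ is symmetric) so that $U^\top U=I_d$, and on the initialization being a scalar multiple of identity (any $\alpha I_d$ would work verbatim), so that $W_j(0)$ shares the basis~$U$ trivially.
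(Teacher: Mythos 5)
Your induction argument is correct: with the common diagonalization hypothesis at step $t$, the update \eqref{eq:wjupdate} conjugates to $\hat D(t)-\eta\,\hat D(t)^{N-j}\bigl(\hat D(t)^N-D\bigr)\hat D(t)^{j-1}$, and commutativity of diagonal matrices makes this independent of $j$, so a single diagonal $\hat D(t+1)$ serves all layers. Note that the paper itself gives no proof of this statement—it is imported verbatim as Lemma~6 of \citet{bartlett2018gradient}—so there is no internal argument to compare against; your proof is the natural one and correctly supplies the cited result, including the observations that orthogonality of $U$ (from symmetry of $\Phi$) and the identity initialization are what make the conjugation and the base case work.
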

  By Lemma \ref{lemma:failure_idinit}, for any choice of learning rate $\eta$, the end-to-end matrix at time $t$ is given by $W_{1:N}(t) = U \hat D(t)^N U^\top$. As long as some diagonal element of $D$ is negative, say equal to $-\lambda < 0$, then
  $$
\ell(t)=L^N(W_1(t), \ldots, W_N(t)) = \frac 12 \| W_{1:N}(t) - \Phi\|_F^2 = \frac 12 \| \hat D(t)^L - D \|_F^2 \geq \frac 12 \lambda^2 > 0. 
  $$
\end{proof}

\section{Implementation Details} \label{app:impl}

Below we provide implementation details omitted from our experimental report (Section~\ref{sec:exper}).

The platform used for running the experiments is PyTorch \citep{paszke2017automatic}.
For compliance with our analysis, we applied PCA whitening to the numeric regression dataset from UCI Machine Learning Repository.
That is, all instances in the dataset were preprocessed by an affine operator that ensured zero mean and identity covariance matrix.
Subsequently, we rescaled labels such that the uncentered cross-covariance matrix~$\Lambda_{yx}$ (see Section~\ref{sec:prelim}) has unit Frobenius norm (this has no effect on optimization other than calibrating learning rate and standard deviation of initialization to their conventional ranges).
With the training objective taking the form of Equation~\eqref{eq:loss_whitened}, we then computed~$c$~---~the global optimum~---~in accordance with the formula derived in Appendix~\ref{app:whitened}.
In our experiments with linear neural networks, balanced initialization was implemented with the assignment written in step~\emph{(iii)} of Procedure~\ref{proc:balance_init}.
In the non-linear network experiment, we added, for each $j\in\{1,\ldots,N-1\}$, a random orthogonal matrix to the right of~$W_j$, and its transpose to the left of~$W_{j+1}$~---~this assignment maintains the properties required from balanced initialization (see Footnote~\ref{note:balance_init_props}).
During all experiments, whenever we applied grid search over learning rate, values between $10^{-4}$ and~$1$ (in regular logarithmic intervals) were tried.

\end{document}